\documentclass[12pt]{article}
\usepackage{graphicx,graphics,psfrag,epsf}
\usepackage{enumerate}
\usepackage{natbib}
\usepackage{url} % not crucial - just used below for the URL

% \documentclass[12pt]{article}
% \pdfoutput=1
% \usepackage[authoryear]{natbib}
\usepackage{amsmath,amssymb,amsfonts,amsthm,bbm}
% \usepackage{epic,eepic,epsfig,longtable}
% \usepackage{multirow,verbatim}
% \usepackage{array}
% \usepackage{verbatim}
% \usepackage{epsfig}
% % \usepackage{euler}
% \usepackage{setspace}
% \usepackage{CJK}
% 
% \usepackage[boxed,linesnumberedhidden]{algorithm2e} % pour l'algo

\usepackage{color}

\usepackage{titlesec}
\usepackage[titletoc,toc,title]{appendix}

%\pdfminorversion=4
% NOTE: To produce blinded version, replace "0" with "1" below.

% DON'T change margins - should be 1 inch all around.
\addtolength{\oddsidemargin}{-.5in}%
\addtolength{\evensidemargin}{-.5in}%
\addtolength{\textwidth}{1in}%
\addtolength{\textheight}{-.3in}%
\addtolength{\topmargin}{-.8in}%

\usepackage{titling}
\pretitle{\begin{center}\huge}
\posttitle{\par\end{center}\vspace{\baselineskip}}
\preauthor{\normalfont\normalsize\begin{center}\begin{tabular}[t]{c}}
\postauthor{\end{tabular}\end{center}\vspace{\baselineskip}}

% 
% \documentclass[letter,12pt]{article}
% \usepackage[affil-it]{authblk}
% \usepackage[latin1]{inputenc}
% \usepackage[T1, OT1]{fontenc}
% \usepackage{amsmath}
% \usepackage{amssymb}
% \usepackage{apacite}
% \usepackage{natbib}
% \usepackage{float,array}
% \usepackage[small,bf]{caption}
% \usepackage{graphicx} % necessary for mac
% 

%----- blackboard bold fonts-----%

\newcommand{\EE}{\mathbb{E}}

\newcommand{\PP}{\mathbb{P}}

\newcommand{\RR}{\mathbb{R}}

%%%%%%%%%% define math symbols %%%%%%%%%%
%% bold and cal letters/LETTERS

         \def\cF{{\cal  F}}

          \def\cX{{\cal  X}}

% \boldsymbol{greek letters}

% \boldsymbol{GREEK LETTERS}

% \hat{greek letters}
\renewcommand{\hat}{\widehat}

\def \heps     {\hat{\heps}}

% \hat{\boldsymbol{greek letters}}

% \hat{\boldsymbol{GREEK LETTERS}}

%%%%%%%%%%%%%%%% Regular font in math equation  %%%%%%%
\DeclareMathOperator*{\argmin}{argmin}

\def\IF{\mbox{IF}}
\def\CVF{\mbox{CVF}}

\def\SS{\mbox{SS}}
\def\LS{\mbox{LS}}
\def\GS{\mbox{GS}}

\newtheoremstyle{mytheoremstyle} % name
    {\topsep}                    % Space above
    {\topsep}                    % Space below
    {\normalfont}                   % Body font
    {}                           % Indent amount
    {\bfseries}                   % Theorem head font
    {.}                          % Punctuation after theorem head
    {.5em}                       % Space after theorem head
    {}  % Theorem head spec (can be left empty, meaning ‘normal’)

\theoremstyle{mytheoremstyle}
\newtheorem{theorem}{Theorem}
\newtheorem{Definition}{Definition}

\newtheorem{lemma}{Lemma}
\newtheorem{proposition}{Proposition}
\newtheorem{remark}{Remark}
\newtheorem{corollary}{Corollary}

\newtheorem{condition}{Condition}

\textheight 8.72 in
\textwidth 6.5 in
\topmargin -0.5 in
\oddsidemargin -0.1 in

\makeatletter
\def\singlespace{\def\baselinestretch{2}\@normalsize}

\makeatletter
\def\singlespace{\def\baselinestretch{1}\@normalsize}

\renewcommand{\baselinestretch}{1.2}

\title{\textsc{Privacy-preserving parametric inference: a case for robust statistics} }
\vskip 1.5cm
\author{ Marco Avella-Medina \thanks{Columbia University, Department of Statistics, New York, NY, USA, email: \texttt{marco.avella@columbia.edu}.
The author is grateful for the financial support of the Swiss National Science Foundation and would like to thank Roy Welsch for many helpful discussions.}}
\date{November 20, 2019 \small{(First version: May 15,  2018)}}
\pdfminorversion=4

\begin{document}
\def\spacingset#1{\renewcommand{\baselinestretch}%
{#1}\small\normalsize} \spacingset{1}

\maketitle

\begin{abstract}
Differential privacy is a cryptographically-motivated approach to privacy that  has become a very active field of research over the last decade in theoretical computer science and machine learning. In this paradigm one assumes there is a trusted curator who holds the data of individuals in a database and the goal of privacy is to simultaneously protect individual data while allowing the release of global characteristics of the database. In this setting we introduce a general framework for parametric inference with differential privacy guarantees. We first obtain differentially private estimators based on bounded influence M-estimators by leveraging their gross-error sensitivity in the calibration of a noise term added to them in order to ensure privacy.  We then  show how a similar construction can also be applied to construct differentially private test statistics analogous to the Wald, score and likelihood ratio tests. We provide statistical guarantees for all our proposals via an asymptotic analysis. An interesting consequence of our results is to further clarify the connection between differential privacy and robust statistics. In particular, we demonstrate that differential privacy is a weaker stability requirement than infinitesimal robustness, and show that robust M-estimators can be easily randomized in order to guarantee both differential privacy and robustness towards the presence of contaminated data. We illustrate our results both on simulated and real data.
\end{abstract}

\newpage
 \spacingset{1.2} % DON'T change the spacing!
\section{Introduction}

Differential privacy is a cryptographically-motivated approach to  privacy which has become a very active field of research over the last decade in theoretical computer science and machine learning \citep{dworkandroth2014}. In this paradigm one assumes there is a trusted curator who holds the data of individuals in a database that might for instance be constituted by $n$ individual rows. The goal of privacy is to simultaneously protect every individual row while releasing global characteristics of the database.  Differential privacy provides such guarantees in the context of remote access query systems where the data analysts do not get to see the actual data, but can ask a server for the output of some statistical model.  Here the trusted curator processes the queries of the user and releases noisy versions of the desired output in order to protect individual level data. 

The interest in remote access systems was prompted by the recognition of fundamental failures of anonymization approaches. Indeed, it is now well acknowledged that releasing data sets without obvious individual identifiers such as names and home addresses are not sufficient to preserve privacy. The problem with such approaches is that an ill-intentioned user might be able to link the anonymized data with external non anonymous data. Hence auxiliary information could help intruders break anonymization and learn sensitive information.  One prominent example of privacy breach is the  de-anonymization of a Massachusetts hospital discharge database by joining it with with a public voter database in \cite{sweeney1997}.  In fact combining anonymization with sanitization techniques such as adding noise to the dataset directly or removing certain entries of the data matrix are also fundamentally flawed \citep{narayananandshmatikov2008}. On the other hand,  differential privacy  provides a rigorous mathematical framework to the notion of privacy by guaranteeing protection against identity attacks regardless of the auxiliary information that may be available to the attackers. 
This is achieved by requiring that the output of a query does not change too much if we add or remove any individual from the data set. Therefore the user cannot learn much about any individual data record from the output requested.

There is now a large body of literature in this topic and recent work has sought to link differential privacy to statistical problems by developing privacy-preserving algorithms for empirical risk minimization, point estimation and density estimation \citep{dworkandlei2009,wassermanandzhou2010,smith2011,chaudhurietal2011, bassilyetal2014}. 
Despite the numerous developments made in the area of differential privacy since the seminal work of \cite{dworketal2006}, one can argue that their practical utility in applied scientific work  is very limited by the lack of broad guidelines for statistical inference. In particular, there are  no generic procedures for performing statistical hypothesis testing for general parametric models which arguably constitutes one of the cornerstones of a statisticians data analysis toolbox.

\subsection{Our contribution}

The basic idea of our  work is to introduce differentially private algorithms leveraging tools from robust statistics. In particular, we use the Gaussian mechanism studied in the differential privacy literature in combination with robust statistics sensitivity measures. At a high level, this mechanism provides a generic way to release a noisy version of a statistical query, where the noise level is carefully calibrated to ensure privacy. For this purpose, appropriate notions of sensitivity have been studied in the computer science literature. By focusing on the class of parametric M-estimators, we show that the well studied statistics notion of sensitivity  given by the influence function can also be used to calibrate the Gaussian mechanism. This logic extends to tests derived from M-estimators since their sensitivity can also be understood via the influence function. 

To the best of our knowledge, our work is the first one to provide a systematic treatment of  estimation and hypothesis testing with differential privacy guarantees in the context of  general parametric models. The main contributions of this paper are the following:
\begin{enumerate}
\item[(a)] We introduce a general class of differentially private parametric estimators under mild conditions. Our estimators are computationally efficient and can be tuned to trade-off  statistical efficiency and robustness.  
\item[(b)]  We propose differentially private counterparts of the Wald, score and likelihood ratio tests for parametric models. Our proposals are by construction robust in a contamination neighborhood of the assumed generative model and  are easily constructed from readily available statistics. 
\item[(c)]  We further clarify the connections between differential privacy and robust statistics by showing that the influence function can be used to bound the smooth sensitivity of \cite{nissimetal2007}. It follows that bounded-influence estimators can naturally be used to construct differentially private estimators. The converse is not true as our analysis shows that one can construct differentially private estimators that asymptotically do not have a bounded influence function.
\end{enumerate}

\subsection{Related work}

The notion of  differential privacy is very similar to the intuitive one of robustness in statistics. The latter requires that  no small portion of the data should influence too much a statistical analysis \citep{huberandronchetti2009, hampeletal1986,belsleyetal2005,maronnaetal2006}. This  connection has been noticed in previous works that have shown how to construct differentially private robust estimators. In particular, the estimators of \citep{dworkandlei2009, smith2011, lei2011, chaudhuriandhsu2012} are the most closely related to ours since they all provide differentially private parametric estimators building on M-estimators and establish statistical convergence rates.  However, our construction compares favorably to previous proposals in many regards. Our estimators preserve the optimal parametric $\sqrt{n}$-consistency, and hence our privacy guarantees do not come at the expense of slower statistical rates of convergence as in \citep{dworkandlei2009,lei2011}. Furthermore we do not assume a known diameter of the parameter space as in \cite{smith2011}. Our construction is inspired by the univariate estimator of \cite{chaudhuriandhsu2012} which is in general computationally inefficient as it requires the computation of the smooth sensitivity defined in Section 2.2. We broaden the scope of their technique to general multivariate M-estimators and more importantly, we overcome the computational barrier intrinsic to their method by showing that the empirical influence function can be used in the noise calibration of the Gaussian mechanism. %
 There are however other possible approaches to construct differentially private estimators. Here we discuss three popular alternatives that have been explored in the literature. 
 
The first approach seeks to design a mechanism to release differentially private data instead of constructing new estimators. This can be achieved by constructing a differentially private density estimator such as a perturbed histogram of the data. Once such a density estimator is available it can  be used to either sample private data \citep{wassermanandzhou2010} or to construct a weighted differentially private objective function  for empirical risk minimization \citep{lei2011}. Although the latter approach leads to better rates of convergence for parametric estimation, they remain slow and have a bad dimension dependence $\max\{1/\sqrt{n},(\sqrt{\log n}/n)^{2/(2+p)}\}$, where $n$ is the sample size and $p$ is the dimension of the estimated parameter. Indeed,  this approach suffers from the curse of dimensionality since it relies on the computation of multivariate density estimators. Interestingly, a somehow related approach for releasing synthetic data existed  in the statistics literature  prior to the advent of differential privacy \citep{rubin1993,reiter2002,reiter2005} and consequently also  lacks formal theoretical privacy guarantees. 

A second approach consist of releasing estimators that are defined as the minimizers of a perturbed objective function. Representative work in this direction includes  \cite{chaudhuriandmonteleoni2008} in the context of penalized logistic regression,  \cite{chaudhurietal2011} in the general learning problem of empirical risk minimization and \cite{kieferetal2012} in a high dimensional regression setting. A related idea to perturbing the objective function is to  is to run a stochastic gradient descent algorithm where at each iteration update step an appropriately scaled noise term is added to the gradient in order to ensure privacy. This idea was used for example by   \cite{rajkumarandargawal2012} in the context of multiparty classification, \cite{bassilyetal2014} in the general learning setting of empirical risk minimization and  \cite{wangetal2015sgd} for Bayesian learning. Although the potential applicability of these two perturbation approaches to a wide variety of models makes them appealing, it remains unclear how to construct test statistics in these settings.

%  \cite{leietal2016} do variable selection via a constrained likelihood.

A third alternative approach  is to draw samples from a well suited probability distribution. The exponential mechanism of \cite{mcsherryandtalwar2007} is a main example of a general method for achieving $(\varepsilon,0)$-differential privacy via random sampling. This idea leads naturally to  connections with posterior sampling in Bayesian statistics. Some papers exploring these ideas include \cite{chaudhuriandhsu2012}  and \cite{dimitrakakisetal2014,dimitrakakisetal2017}. See also \cite{fouldsetal2016} for a broader discussion of different mechanism for constructing privacy preserving Bayesian methods. Bayesian approaches that provide differentially private posterior distributions seem to be naturally amenable for the construction of confidence intervals and test statistics, as explored in \cite{liu2016}.  However it does not seem obvious to us how to use Bayesian privacy preserving results such \cite{dimitrakakisetal2014,dimitrakakisetal2017, fouldsetal2016} in order to provide analogue constructions to ours for estimation and testing. Interestingly, in this line of work  the typical regularity conditions required on the likelihood and  prior distribution are reminiscent of the regularity conditions required in frequentists setups as discussed below in Section 3.1.

The literature on   hypothesis testing with differential privacy guarantees is much more recent and limited than the one focusing on estimation.  A few  papers tackling this problem are the work of  \citep{uhleretal2013, wangetal2015, gaboardietal2016} who consider differentially private chi-squared tests and \citep{sheffet2017, barrientosetal2019} who provide differentially private t-tests for the regression coefficients of a linear regression model. 
Our approach is more broadly applicable since it extends to general parametric models and also weakens the distributional assumptions required by existing differentially private estimation and testing techniques. Roughly speaking, this is due to the fact that our M-estimators are robust by construction and will therefore have an associated bounded influence function.  It is worth noting that the latter property automatically guarantees gradient Lipschitz conditions that have previously been assumed for differentially private empirical risk minimizers \citep{chaudhurietal2011,bassilyetal2014}. 
After submitting the first version of this paper,  we have noticed some interesting new developments on differentially private inference in the work of  \citep{awanandslavkovic2018, awanandslavkovic2019, canonneetal2019a,  canonneetal2019b}.

  One interesting new development in the literature that we do not cover in this work is  local differential privacy. This new paradigm accounts for settings in which even the statistician collecting the data is not trusted \citep{duchietal2018}. This scenario leads to slower minimax optimal convergence rates of estimation for many important problems including mean estimation and logistic regression. \cite{sheffet2018} seems to be the first work exploring the problem of hypothesis testing under local differential privacy.

\subsection{Organization of the paper}

In Section 2 we overview some key background notions   from differential privacy and robust statistics that we use throughout the paper. In Section 3 we introduce our technique for constructing differentially private estimators and study their theoretical properties. In Section 4 we show how to further extend our construction to test functionals in order to perform differentially private hypothesis testing using M-estimators. In Section 5 we illustrate the numerical performance of our methods in both synthetic and real data. We conclude our paper in Section 6 with a discussion of our results and future research directions. We relegated to the Appendix all the proofs and some  auxiliary results and discussions.

\textit{Notation:}  $\| V \|$ denotes either euclidean norm if $V\in\mathbb{R}^N$ or its induced operator norm if $V\in\mathbb{R}^{N\times N}$. The smallest and largest eigenvalues of a matrix $A$ are denoted by $\lambda_{\min}(A)$ and $\lambda_{\max}(A)$. For two probability measures $P$ and $Q$,  the notation $d_{\infty}(P,Q)$ and $d_{TV}(P,Q)$  stand for sup-norm (Kolmogorov-Smirnov) and total variation distance. We reserve calligraphic letters such as $\mathcal{S}$ for sets and denote their cardinality by $|\mathcal{S}|$.  For two sets of  $\mathcal{S}$ and $\mathcal{S}'$ of the same size, we denote their Hamming distance by $d_H(\mathcal{S},\mathcal{S}'):=|\mathcal{S}\setminus \mathcal{S}'|=|\mathcal{S}'\setminus \mathcal{S}|$.

\section{Preliminaries}

Let us first review some important background concepts from differential privacy,  robust statistics and the M-estimation framework for parametric models.

\subsection{Differential privacy}

 Consider a database consisting of a  set of data points $\mathcal D=\{x_1,\dots,x_n\}\in\mathfrak{X}^n$, where $\mathfrak{X}\subset \mathbb R^m$ is some data space. We also use the notation $\mathcal{D}(F_n)$ to emphasize that $\mathcal{D}$ can be viewed as a data set associated with an empirical distribution $F_n$ induced by $\{x_1,\dots,x_n\}$. Differential privacy seeks to release useful information from the data set while protecting information about any individual data entry.

\begin{Definition}
A randomized function $A(\mathcal{D})$ is  $(\varepsilon,\delta)$--\emph{differentially private} if for all pairs of databases $(\mathcal{D},\mathcal{D}')$ with $d_H(\mathcal{D},\mathcal{D}')=1$ and all measurable subsets of outputs $\mathcal O$:
$$\mathbb{P}(A(\mathcal{D})\in\mathcal O)\leq e^{\varepsilon}\mathbb{P}(A(\mathcal{D}')\in\mathcal O)+\delta. $$
\end{Definition}

Intuitively, $(\varepsilon,0)$-differential privacy ensures that for every run of  algorithm $A$ the output is almost equally likely to be observed on every neighboring database.  This condition is relaxed by $(\varepsilon,\delta)$-differential privacy since it allows that given a random output $O$ drawn from $ A(\mathcal D)$, it may be possible to find a database $\mathcal D'$ such that $O$ is more likely to be produced on $\mathcal D'$ that it is when the database is $\mathcal D$. However such an event will be extremely unlikely. In both cases the similarity is defined by the factor $e^\varepsilon$ while the probability of deviating from this similarity is $\delta$.

 The magnitude of the  privacy parameters $(\varepsilon,\delta)$ are typically considered to be quite different. We are particularly interested in negligible values of $\delta$ that are smaller than the inverse of any polynomial in the size $n$ of the database. The rational behind this requirement  is that values of $\delta$ of the order of $\|x\|_1$, for some vector values database $x$, are problematic since they ``preserve privacy" while allowing to publish the complete records of a small number of individuals in the database. On the other hand, the privacy parameter $\varepsilon$ is typically thought of as a moderately small constant and in fact  ``the nature of privacy guarantees with differing but small epsilons are quite similar'' \cite[p.25]{dworkandroth2014}. Indeed,  failing to be $(\varepsilon,0)$-differentially private for some large $\varepsilon$ (i.e. $\varepsilon=10$) is just saying that there is a least a pair of neighboring datasets and an output $\mathcal O$ for which the ratio of probabilities of observing $\mathcal O$ conditioned on the database being $\mathcal D$ or $\mathcal D'$ is large.

One can  naturally wonder how to compare two differentially private algorithms $A_1$ and $A_2$ with different associated  privacy parameters $(\epsilon_1,\delta_2)$ and $(\epsilon_2,\delta_2)$. It seems natural to prefer the algorithm that ensures the smallest privacy loss incurred by observing some output i.e. $\log\big(\mathbb{P}(A(\mathcal{D})\in\mathcal O)/\mathbb{P}(A(\mathcal{D}')\in\mathcal O)\big)$. Since we only consider negligible $\delta_1$ and $\delta_2$, the privacy loss will be approximately proportional to the privacy parameter $\varepsilon$. One  could consequently prefer the algorithm with the smallest parameter $\varepsilon$ even though we say that roughly speaking ``all small epsilons are alike'' \citep[p.24]{dworkandroth2014}.

Differential privacy enjoys certain appealing properties that facilitates the design and analysis of complicated algorithms with privacy guarantees. Perhaps the two most important ones are that  $(\varepsilon,\delta)$-differential privacy is immune to post-processing and that combining two differentially private algorithms preserves differential privacy. More precisely, if $A$ is $(\varepsilon,\delta)$-differentially private, then the composition of any data independent mapping $f$ with $A$ is also $(\varepsilon,\delta)$-differentially private. In other words, releasing $f(A(\mathcal D))$ for any $\mathcal D$ still  guarantees  $(\varepsilon,\delta)$-differential privacy.
Furthermore, if we have two algorithms $A_1$ and $A_2$ with different associated  privacy parameters $(\epsilon_1,\delta_2)$- and $(\epsilon_2,\delta_2)$, then releasing the outputs of $A_1(\mathcal D)$ and $A_2(\mathcal D)$ guarantees $(\varepsilon_1+\varepsilon_2,\delta_1+\delta_2)$-differential privacy.  We refer interested readers to \citep[Chapters 2--3]{dworkandroth2014} for a more extensive discussion of the concepts presented in this subsection.

\subsection{Constructing differentially private algorithms}

A general and very popular technique for constructing differentially private algorithms is the Laplace mechanism, which consists of adding some well calibrated noise to the output of a standard query \citep{dworketal2006}. This procedure relies on suitable  notions of sensitivity of the function that is queried. All the following definitions of sensitivity are standard in the differential privacy literature and are typically defined with respect to the $L_1$ norm. We will instead use the Euclidean norm for the construction of our estimators as explained below.

\begin{Definition}
The \emph{global sensitivity}  of a function $\varphi:\mathfrak{X}^n\to\mathbb{R}^p$ is
$$\GS(\varphi):= \sup_{\mathcal{D},\mathcal{D}'} \Big\{\|\varphi(\mathcal{D})-\varphi(\mathcal{D}')\|: d_H(\mathcal{D},\mathcal{D}')=1 \Big\}.$$
The \emph{local sensitivity}  of a function $\varphi:\mathfrak{X}^n\to\mathbb{R}^p$ at a data set $\mathcal{D}\in\mathfrak{X}^n$ is
$$\LS(\varphi,\mathcal{D}):= \sup_{\mathcal{D}'} \Big\{\|\varphi(\mathcal{D})-\varphi(\mathcal{D}')\|: d_H(\mathcal{D},\mathcal{D}')=1 \Big\}.$$
For $\xi>0$, the $\xi$--\emph{smooth sensitivity} of $\varphi$ at $\mathcal{D}$ is
$$\SS_\xi(\varphi,\mathcal{D}):=\sup_{\mathcal{D}'}\Big\{e^{-\xi d_H(\mathcal{D},\mathcal{D}')}\LS(\varphi,\mathcal{D}'): \mathcal{D}'\in\mathfrak{X}^n \Big\}. $$
\end{Definition}

We are now ready to describe two versions of the Laplace mechanism using the above sensitivity notions defined with respect to the $L_1$ norm.  Denote by Lap$(b)$ a scaled symmetric Laplace distribution with density function $h_b(x)=\frac{1}{2b}\exp(-\frac{|x|}{b})$ and let Lap$_p(b)$ be the multivariate distribution obtained from $p$ independent and identically distributed $X_j\sim \mbox{Lap}(b)$ for $j=1,\dots,p.$  A key idea introduced in the seminal paper \cite{dworketal2006} is that for a function $f:\mathfrak{X}^n\to\mathbb{R}^p$ and an input database $\mathcal{D}$, one can simply compute $f(\mathcal{D})$ and then generate an independent noise term $U\sim \mbox{Lap}_p(\GS(f)/\varepsilon)$ in order to construct a $(\varepsilon,0)$-differentially private output $f(\mathcal{D})+U$. A related idea introduced by \cite{nissimetal2007} is to calibrate the noise using the smooth sensitivity instead of the local sensitivity. These authors showed that provided $\xi= \frac{\varepsilon}{4(p+2\log(2/\delta))}$ and $\tilde{U}\sim \mbox{Lap}_p(\SS_\xi(f)/\varepsilon)$, then the output $f(\mathcal{D})+\tilde{U}$ is $(\varepsilon,\delta)$-differentially private. Our proposals will build on the latter idea for the construction of private estimation and inferential procedures for parametric models.

 We would like to point out that the different notions of sensitivity introduced in  Definition 2 are usually defined with respect to the $L_1$ norm. We chose to  instead present these definitions in terms of the Euclidean distance as they are more naturally connected to well studied concepts in robust statistics. In particular, it leads to connections with the standard way of presenting the notion of gross-error sensitivity in robust statistics and the related problem of optimal B-robust estimation \cite[Chapter 4]{hampeletal1986}.  Because we focus on sensitivities with respective to the Euclidean metric, our construction follows the same logic of the Laplace mechanism, but naturally replaces the noise distribution with an appropriately scaled normal random variable as proposed in \cite{nissimetal2007}. In this case the output $f(\mathcal{D})+\tilde{U}$ is $(\varepsilon,\delta)$-differentially private if $\tilde{U}\sim N_p(0,\sigma^2 I)$ where $\sigma=5\sqrt{2\log(2/\delta)}\SS_\xi(f)/\varepsilon$ and  $\xi= \frac{\varepsilon}{4(p+2\log(2/\delta))}$. For obvious reasons the resulting procedure has been called the Gaussian mechanism in \cite{dworkandroth2014}.  As we were completing the revision of the current manuscript we noticed that \cite{caietal2019} have also worked with this mechanism for the derivation of the optimal statistical minimax rates of convergence for parametric estimation under $(\varepsilon,\delta)$-differential privacy.

\subsection{Robust statistics}

Robust statistics provides a theoretical framework that allows to take into account that models are only idealized approximations of reality and develops methods that give results that are stable  when slight deviations from the stochastic assumptions of the model occur. Book-length expositions on the topic include \citep{huber1981, huberandronchetti2009, hampeletal1986, maronnaetal2006}.  We will focus on the infinitesimal robustness approach that considers the impact of moderate distributional deviations from ideal models on a statistical procedure \citep{hampeletal1986}. In this setting the statistics of interest are viewed as functionals of the underlying distribution and the influence function is the key tool used to assess the robustness of a statistical functional. 
\begin{Definition}
Given  a measurable space $\mathfrak{Z}$, a  distribution space $\mathfrak{F}$, a parameter space $\Theta\subset\mathbb R^p$ and a functional $T:\mathfrak{F}\mapsto\Theta$, the influence function of $T$ at a point $z\in\mathfrak{Z}$ for a distribution $F$ is defined as 
$$\mathrm{IF}(z;T,F):=\lim_{t\to 0+}\frac{T(F_t)-T(F)}{t}, $$
where $F_t=(1-t)F+t\Delta_z$ and $\Delta_z$ is a mass point at $z$. 
\end{Definition}
The influence function has the heuristic interpretation of describing the effect of an infinitesimal contamination at the point $z$ on the estimate, standardized by the mass of contamination. Furthermore, if a statistical functional $T(F)$ is sufficiently regular, a von Mises expansion \citep{vonmises1947,hampel1974, hampeletal1986} yields
\begin{equation}
\label{vonMises}
T(G) = T(F)+\int \IF(z; T,F)\mathrm{d}(G-F)(z) +o\big(d_\infty(G,F)\big).
\end{equation}
  Considering the  approximation \eqref{vonMises} over a neighborhood of the form $\mathfrak{F}_t=\{F^{(t)}| F^{(t)}=(1-t)F+ t G, ~$G$ \mbox{ an arbitrary distribution}\}$,  we see that the influence function can be used to linearize the asymptotic bias in a neighborhood of the idealized model $F$. Therefore, a statistical functional with bounded influence function is robust in the sense that it will have a bounded approximate bias in a neighborhood of $F$. 
 A related notion of robustness is the gross-error sensitivity which measures the worst case value of the influence function.
\begin{Definition}
The \emph{gross-error sensitivity} of a functional $T:\mathfrak{F}\to \Theta$ at the distribution $F$ is
$$\gamma(T,F):=\sup_{x\in\mathfrak{X}}\|\mathrm{IF}(x;T,F)\|. $$
\end{Definition}
Clearly if the space $\mathfrak{X}$ is unbounded, the gross-error sensitivity of $T$ will be infinite unless its influence function is uniformly bounded. In Sections 3 and 4 we will show how to use the robust statistics tools described here in the construction of differentially private estimators and tests.
 
\subsection{M-estimators for parametric models} 
M-estimators are a  simple class of estimators that is appealing from a robust statistics perspective and  constitute a very general approach to parametric inference \citep{huber1964,huberandronchetti2009}. They will be the focus of the rest  of  this paper. An M-estimator  $\hat\theta=T(F_n)$ of $\theta_0\in\Theta\subset\mathbb{R}^p$ is defined as a solution to 
$$\sum_{i=1}^n\Psi(x_i,T(F_n))=0,$$
where $\Psi:\mathbb{R}^m\times\Theta\to\mathbb{R}^p$,  $x_1,\dots,x_n\in\mathbb{R}^m$ are independent identically distributed according to $F$ and $F_n$ denotes the empirical distribution function. This class of estimators is a strict generalization of the class of regular maximum likelihood estimators. Assuming that $T(F)=\theta_0$ and some mild conditions \citep[Ch. 6]{huberandronchetti2009}, as $n\to\infty$ they are asymptotically normally distributed as
\begin{equation*}
\sqrt{n}(T(F_n)-\theta_0)\to_{d} N(0,V(T,F)),
\end{equation*}
where $V(T,F)=\mathbb{E}_F[\IF(X;T,F)\IF(X;T,F)^T]$ and $\mathbb{E}_F[\IF(X;T,F)]=0$. Furthermore, their influence function is 
 \begin{equation}
 \label{IF}
 \IF(x;T,F)= \Big(M(T,F)\Big)^{-1}\Psi(x,T(F)),
 \end{equation}
 where $M(T,F)=-\mathbb{E}_F[\dot{\Psi}(X,T(F))]=-\frac{\partial}{\partial \theta}\mathbb{E}_F[\Psi(X,\theta)]\big|_{\theta=\theta_0}$. Therefore M-estimators defined by bounded functions  $\Psi$ are said to be infinitesimally robust  since their influence function is bounded and by \eqref{vonMises} their asymptotic bias will also be bounded for small amounts of contamination.

\section{Differentially private estimation}

\subsection{Assumptions}

In the following we allow $\Psi$ to depend on $n$, but we do not stress it in the notation to make it less cumbersome. Here  are the main conditions required in our analysis:

\begin{condition}
\label{cond:boundedpsi}
The function $\Psi(x,\theta)$ is differentiable with respect to $\theta$ almost everywhere for all $x\in\mathfrak X$, and we denote this derivative by $\dot{\Psi}(x,\theta)$. Furthermore, for all $\theta\in \Theta$ there exists constants $K_n,L_n>0$ such that 
$$\sup_{x\in\mathfrak{X}} \|\Psi(x,\theta)\|\leq K_n ~~~\mbox{ and }~~~ \sup_{x\in\mathfrak{X}} \|\dot{\Psi}(x,\theta)\|\leq L_n.$$
\end{condition}
\begin{condition}
\label{cond:hessian}
The matrix $M_F=M(T,F)=-\mathbb{E}_F[\dot{\Psi}(X,T(F))]$ is positive definite at the generative distribution $F=F_{\theta_0}$. Furthermore the space of data sets $\mathfrak X^{n}$ is such that for all empirical distributions $G_n\in\{G| \mathcal{D}(G)\in \mathfrak X^n\}$ with $n\geq N_0$ we have that $0< b\leq\lambda_{\min}(M_{G_n})\leq\lambda_{\max}(M_{G_n})\leq B <\infty$.
\end{condition}
\begin{condition}
\label{cond:smoothness}
There exist $r_1>0$, $r_2>0$, $r_3>0$, $C_1$ and $C_2>0$ such that 
$$\|\mathbb{E}_{F_n}[\dot{\Psi}(X,\theta)]-\mathbb{E}_{G_n}[\dot{\Psi}(X,\theta)]\|\leq C_1 d_{\infty}(F_n,G_n) \mbox{ and } $$
$$\|\mathbb{E}_{F_n}[\dot{\Psi}(X,\theta)]-\mathbb{E}_{F_n}[\dot{\Psi}(X,T(F_n))]\|\leq C_2\|T(F_n)-\theta\| $$
whenever $d_{\infty}(F_n,G_n)\leq r_1$, $\|\theta-T(F_n)\|\leq r_2$ and $\|\theta-\theta_0\|\leq r_3$.
\end{condition}

Condition \ref{cond:boundedpsi} requires  $\Psi$ and $\dot{\Psi}$ to be uniformly bounded in $\mathfrak{X}$ by some potentially diverging constants $K_n$ and $L_n$. The case $K_n=K<\infty$ is particularly appealing from a robust statistics perspective as it guarantees that the resulting M-estimators has a bounded influence function. If additionally $L_n=L<\infty$, then the resulting M-estimator will also be second order infinitesimally robust as defined in \cite{lavecchiaetal2012} and will have a bounded change of variance function; see \cite{hampeletal1981} and  our Appendix \ref{AppendixCVF} for more details.
Condition \ref{cond:hessian} restricts the space of data sets to one where some minimal regularity conditions on the Jacobian of $\Psi$ hold.  Similar assumptions are usually required to guarantee the asymptotic normality and Fr{\'e}chet differentiability of M-estimators, see for example  \cite{huber1967}, \citep[Corollary 6.7]{huberandronchetti2009} and \cite{clarke1986}. Our assumptions are stronger in order to guarantee that $M_{G_n}$ is invertible and hence that the empirical influence function is computable. Even though such requirements are not always explicitly stated, common statistical practice implicitly assumes them when computing  estimated asymptotic variances with plug-in formulas. In a standard linear regression setting these conditions boil down to assuming that the design matrix is full rank.  Even such a  seemingly harmless condition seems stronger in the differential privacy context. Indeed, it might not be checkable by the users and one would like to have such a guarantee to hold over all possible configurations of the data. One possible way of tackling this problem is to  let the algorithm  halt with an output ``No Reply'' when this assumption fails  \citep{dworkandlei2009, avellamedinaandbrunel2019}. 
 Condition \ref{cond:smoothness} is a smoothness condition on $\dot\Psi$ at $F_n$, similar to Condition 4 in \cite{chaudhuriandhsu2012}. It is a technical assumption used when upper bounding the smooth sensitivity by the gross-error sensitivity. The constants $C_1$ and $C_2$ are effectively Lipschitz constants.

 We would like to highlight that since the differential privacy paradigm assumes a remote access query framework where the user does not get to see the data, in principle it is not immediate that the user will be able to check basic features of the data e.g. whether the design matrix is full rank before performing an analysis. This is a serious limitation of this paradigm as it more generally prevents users from performing exploratory data analysis before fitting a model and it is also unclear how to do model checking and run diagnostics on fitted models. One would have to develop differentially private analogues of the whole data analysis pipeline in order to allow a data analyst to perform rigorous statistical analysis. An interesting recent development in this direction in a regression setting is the work of \cite{chenetal2018}.

\subsection{A general construction}

Let us now introduce our mechanism for constructing differentially private M-estimators. Given a statistical M-functional $T$, we propose the randomized estimator
\begin{equation}
\label{Gaussianmechanism1}
A_T(F_n):=T(F_n)+\gamma(T,F_n)\frac{5\sqrt{2\log(n)\log(2/\delta)}}{\varepsilon n}Z,
\end{equation}
where $Z$ is a $p$ dimensional  standard normal  random variable. The intuition behind our proposal is simple: the gross-error sensitivity $\gamma(T,F_n)$  should  be roughly of the same order as the smooth sensitivity.  Therefore multiplying it by $\sqrt{\log(n)}$ will guarantee that it upper bounds the smooth sensitivity. This in turn suffices to guarantee $(\varepsilon,\delta)$-differential privacy.    From a computational perspective, using the empirical gross-error-sensitivity is much more appealing than computing the exact smooth sensitivity. Indeed, the former can be  further upper bounded in practice using the empirical influence function  whereas the latter can be very difficult to compute in general as discussed in \cite{nissimetal2007}.

\begin{theorem}
\label{thm1} 
Let $n\geq \max[N_0,\frac{1}{C^2m\log(2/\delta)}[1+\frac{4}{\varepsilon}\{p+2\log(2/\delta)\}\log(\frac{\lambda_{\max}(M_{F_n})}{b})]^2, (C')^2m\log(2/\delta)\{\frac{2L_n}{b}+\frac{1}{\lambda_{\min}(M_{F_n})}(C_1+C_2\frac{K_n}{b})\}^2]$ and assume that Conditions \ref{cond:boundedpsi}--\ref{cond:smoothness} hold. Then hen $A_T$ is $(\varepsilon,\delta)$--differentially private.
\end{theorem}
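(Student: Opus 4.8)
The plan is to show that the estimator $A_T(F_n)$ in \eqref{Gaussianmechanism1} is an instance of the Gaussian mechanism of \cite{nissimetal2007} with a valid noise calibration, so that $(\varepsilon,\delta)$-differential privacy follows from their result. According to the discussion following Definition 2, it suffices to verify that the standard deviation of the added noise is at least $5\sqrt{2\log(2/\delta)}\,\SS_\xi(T,F_n)/\varepsilon$ with $\xi = \varepsilon/(4(p+2\log(2/\delta)))$. Since the noise in \eqref{Gaussianmechanism1} has standard deviation $\gamma(T,F_n)\cdot 5\sqrt{2\log(n)\log(2/\delta)}/(\varepsilon n)$, the whole proof reduces to establishing the sensitivity bound
\[
\SS_\xi(T,F_n) \;\le\; \frac{\sqrt{\log n}}{n}\,\gamma(T,F_n).
\]

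First I would bound the local sensitivity $\LS(T,G_n)$ at an arbitrary empirical distribution $G_n$ on $\mathfrak X^n$. Changing one data point moves $G_n$ to $G_n'$ with $d_\infty(G_n,G_n') \le 1/n$ and $G_n' = (1-1/n)G_n + (1/n)\Delta_z$ for some $z$; by the von Mises expansion \eqref{vonMises} together with the influence-function formula \eqref{IF} for M-estimators, $T(G_n') - T(G_n) \approx \frac{1}{n}\IF(z;T,G_n) = \frac{1}{n} M_{G_n}^{-1}\Psi(z,T(G_n))$, so $\LS(T,G_n) \lesssim \frac{1}{n}\gamma(T,G_n) \le \frac{K_n}{bn}$ using Conditions \ref{cond:boundedpsi}--\ref{cond:hessian}. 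The remainder term in the expansion must be controlled uniformly: this is exactly where Conditions \ref{cond:boundedpsi}--\ref{cond:smoothness} are used, to get a nonasymptotic bound on $\|T(G_n') - T(G_n) - \tfrac1n \IF(z;T,G_n)\|$ of order $1/n^2$ (roughly $\frac{1}{n^2}\big(\frac{2L_n}{b} + \frac{1}{\lambda_{\min}}(C_1 + C_2 K_n/b)\big)$), which is absorbed once $n$ exceeds the third term in the stated lower bound on $n$. Next, to pass from local to smooth sensitivity, I would plug this bound into the definition $\SS_\xi(T,F_n) = \sup_{G_n} e^{-\xi d_H(F_n,G_n)}\LS(T,G_n)$. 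The subtlety is that $\gamma(T,G_n) = \|M_{G_n}^{-1}\| \sup_x \|\Psi(x,T(G_n))\| \le K_n/b$ holds uniformly (Condition \ref{cond:hessian}), but to beat the $\sqrt{\log n}/n$ target we need the more refined statement that $\gamma(T,G_n)$ cannot be much larger than $\gamma(T,F_n)$ unless $d_H(F_n,G_n)$ is large, so that the factor $e^{-\xi d_H}$ kills the growth. Concretely, $\lambda_{\max}(M_{G_n})/\lambda_{\min}(M_{G_n})$ — hence the ratio of gross-error sensitivities — grows at most geometrically in $d_H$ by the Lipschitz Condition \ref{cond:smoothness}, and $\xi \log n$ dominates this growth once $n$ exceeds the second term in the lower bound (the one involving $\log(\lambda_{\max}(M_{F_n})/b)$), giving the extra $\sqrt{\log n}$ slack.

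The main obstacle I anticipate is the uniform, \emph{nonasymptotic} control of the von Mises remainder for the M-functional: the expansion \eqref{vonMises} is stated asymptotically, so I cannot invoke it as a black box at a fixed finite $n$. I would instead argue directly from the defining equation $\sum_i \Psi(x_i,T(G_n))=0$: writing the estimating equations at $G_n$ and $G_n'$, subtracting, and doing a first-order Taylor expansion of $\Psi$ in $\theta$ controlled by $\dot\Psi$ (bounded by $L_n$, Condition \ref{cond:boundedpsi}) and by the Lipschitz-type Condition \ref{cond:smoothness}, then inverting $M_{G_n}$ (well-conditioned by Condition \ref{cond:hessian}). This is the step where all three conditions are genuinely needed and where the three separate lower bounds on $n$ in the theorem statement originate — one to guarantee $n \ge N_0$ so Condition \ref{cond:hessian} applies, one for the $e^{-\xi d_H}$-versus-conditioning-growth tradeoff, and one for the remainder to be lower order. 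Once the sensitivity inequality $\SS_\xi(T,F_n) \le \frac{\sqrt{\log n}}{n}\gamma(T,F_n)$ is in hand, the conclusion is immediate from the Gaussian-mechanism guarantee quoted in Section 2.2, since then $\sigma = \gamma(T,F_n)\frac{5\sqrt{2\log n\log(2/\delta)}}{\varepsilon n} \ge 5\sqrt{2\log(2/\delta)}\,\SS_\xi(T,F_n)/\varepsilon$ with the prescribed $\xi$, and post-processing (adding independent Gaussian noise) preserves privacy.
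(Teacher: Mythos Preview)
Your high-level plan is correct and matches the paper: reduce to the inequality $\SS_\xi(T,F_n) \le \frac{\sqrt{\log n}}{n}\gamma(T,F_n)$ and then invoke the Gaussian smooth-sensitivity mechanism of \cite{nissimetal2007}. You also correctly flag that \eqref{vonMises} cannot be used as an asymptotic black box and that a direct Taylor expansion of the estimating equations---controlled by Conditions~\ref{cond:boundedpsi}--\ref{cond:smoothness}---is what gives the nonasymptotic remainder bound; this is exactly what the paper's auxiliary lemmas do, and your attribution of the three $n$-thresholds to ($N_0$; far-regime comparison; remainder absorption) is right.

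Where your decomposition of the smooth-sensitivity supremum diverges from the paper is the mechanism for databases far from $F_n$. You propose a continuous tradeoff in which $\gamma(T,G_n)$ ``grows geometrically in $d_H$'' and $e^{-\xi d_H}$ kills it, with ``$\xi\log n$'' dominating. This is imprecise on two counts. First, under Condition~\ref{cond:hessian} one already has the \emph{uniform} bound $\gamma(T,G_n)\le K_n/b$ for every empirical $G_n$, so there is no unbounded growth; the only issue is comparing $K_n/b$ to $\gamma(T,F_n)\ge K_n/\lambda_{\max}(M_{F_n})$. Second, $\log n$ is not what does the work in the far regime. The paper instead makes a hard near/far split at radius $r_n=C\sqrt{m\log(2/\delta)/n}$ and introduces the intermediate quantity
\[
\Gamma_n=\sup\Big\{\gamma_{1/n}(T,G):\ d_\infty(F_n,G)\le r_n\Big\}.
\]
For $d_H(F_n,G_1)+1\le nr_n$ it shows $\LS(T,G_1)\le 2\Gamma_n/n$ by a remove-and-reinsert argument; for $d_H(F_n,G_1)+1> nr_n$ it uses only the uniform bound $\LS\le K_n/(bn)$ together with $e^{-\xi d_H}\le e^{-\xi(nr_n-1)}$, whose exponent is of order $\xi\sqrt{n}$, not $\xi\log n$. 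The second $n$-bound in the statement is exactly what forces $\frac{K_n}{bn}e^{-\xi(nr_n-1)}\le \frac{\sqrt{\log n}}{n}\gamma(T,F_n)$, and the third $n$-bound is what makes $\Gamma_n$ close to $\gamma(T,F_n)$ through a separate perturbation lemma. Your continuous-tradeoff heuristic could perhaps be made rigorous, but the missing structural ingredient relative to the paper is this two-regime split and the quantity $\Gamma_n$, which is what cleanly separates the roles of the two nontrivial sample-size constraints.
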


Theorem \ref{thm1} shows that our proposal leads to differentially private estimation. It builds on two lemmas, relegated to the Appendix, that  show that the smooth sensitivity of $T$ can indeed be upper bounded by twice its empirical gross error sensitivity. Note that the minimum sample size requirement depends on the values of $\{N_0,b,K_n,L_n,C_1,C_2\}$ defined in Conditions 1--3, as well as some constants $C$ and $C'$ resulting from our bounds on the error incurred by approximating the smooth sensitivity with the empirical gross-error-sensitivity.  We provide a discussion about the evaluation of these constants in the Appendix.

 \subsection{Examples}
 
Let us now present three important examples in order to illustrate how one can use readily available robust M-estimators and their influence functions to derive bounds on their empirical gross-error sensitivities. These quantities can in turn be used to release differentially private estimates $A_T(F_n)$ defined in \eqref{Gaussianmechanism1}. 

 \subsubsection*{Example 1: Location-scale model}

 We consider the location-scale model discussed in \cite[Chapter 6]{huberandronchetti2009}. Here we observe an iid random sample of univariate random variables $X_1,\dots,X_n$ with density function of the form $\frac{1}{\sigma}f(\frac{x-\mu}{\sigma})$, where $f$ is some known density function, $\mu$ is some unknown location parameter and $\sigma$ is an unknown positive scale parameter. The problem of simultaneous location and scale parameter estimation is motivated by invariance considerations. In particular, in order to make an M-estimate of location scale invariant, we must couple it with an estimate of scale.
If the underlying distribution $F$ is symmetric, location estimates $T$ and scale estimates $S$ typically are asymptotically independent, and the asymptotic behavior of $T$ depends on $S$ only through the asymptotic value $S(F)$. We can therefore afford to choose S on criteria other than low statistical variability. \cite{huber1964} generalized the maximum likelihood system of equations by considering simultaneous M-estimates of location and scale any pair of statistics $(T_n,S_n)$ determined by two equations of the form
\begin{equation*}
\sum_{i=1}^n\psi\Big(\frac{x_i-T_n}{S_n}\Big)=0 ~~~\mbox{ and }
\sum_{i=1}^n\chi\Big(\frac{x_i-T_n}{S_n}\Big)=0,
\end{equation*}
which lead $T_n=T(F_n)$ and $S_n=S(F_n)$ to be expressed in terms of functionals $T$ and $S$ defined by the population equations
\begin{equation*}
\int\psi\Big(\frac{x-T(F)}{S(F)}\Big)\mathrm{d}F(x)=0 ~~~\mbox{ and }
\int\chi\Big(\frac{x-T(F)}{S(F)}\Big)\mathrm{d}F(x)=0.
\end{equation*}
From the latter equations one can show that, if $\psi$ is odd and $\chi$ is even,  the influence functions of $T$ and $S$ are
\begin{equation}
\label{IFlocationscale}
\IF(x;T,F)=\frac{\psi\Big(\frac{x-T(F)}{S(F)}\Big)S(F)}{\int \psi'\Big(\frac{x-T(F)}{S(F)}\Big)\mathrm{d}F(x)}  ~~~\mbox{ and } ~~~ \IF(x;S,F)=\frac{\chi\Big(\frac{x-T(F)}{S(F)}\Big)S(F)}{\int \chi'\Big(\frac{x-T(F)}{S(F)}\Big)\frac{x-T(F)}{S(F)}\mathrm{d}F(x)}.
\end{equation}
The problem of robust joint estimation of location and scale was introduced in the seminal paper of \cite{huber1964}. In the important case of the normal model, where $F=\Phi$ is the standard normal distribution, a prominent example of the above system of equations is Huber's Proposal 2. In this case, $\psi(r)=\psi_c(r)=\min\{c,\max(-c,r)\}$ is the Huber function  and $\chi(r)=\chi_c(r)=\psi_c(r)^2-\kappa$, where $\kappa=\int \min(c^2,x^2)\mathrm{d}\Phi(x)$ is a constant that ensures Fisher consistency at the normal model i.e. $T(\Phi)=\mu$ and $S(\Phi)=\sigma^2$.  This particular choice of estimating equations and \eqref{IFlocationscale} show that the empirical gross-error sensitivities of $\hat{\mu}=T_n=T(F_n)$ and $\hat{\sigma}=S_n=S(F_n)$ are 
\begin{equation}
\label{GESlocationscale}
\gamma(T,F_n)=\frac{c \hat{\sigma}}{\frac{1}{n}\sum_{i=1}^nI_{\big|\frac{x_i-\hat{\mu}}{\hat\sigma}\big|<c} }~~~\mbox{ and }~~~\gamma(S,F_n)=\frac{(c^2-\kappa)\hat\sigma}{\frac{1}{n}\sum_{i=1}^n \Big(\frac{x_i-\hat\mu}{\hat\sigma}\Big)^2I_{\big|\frac{x_i-\hat{\mu}}{\hat\sigma}\big|<c} },
\end{equation}
where the last equation used that $\chi_c'(r)=\psi_c'(r)r$ almost everywhere and $I_E$ is the indicator function taking the value 1 under the event $E$ and is 0 otherwise. The formulas obtained in \eqref{GESlocationscale} can be used in the Gaussian mechanism  \eqref{Gaussianmechanism1} for obtaining private location and scale estimates. We  refer the reader to Chapter 6 in \cite{huberandronchetti2009} for more discussion and details on joint robust estimation of location and scale parameters.

 \subsubsection*{Example 2: Linear regression}
 
One can naturally build on the construction of the previous example to obtain robust estimators for the linear regression model
\begin{equation}
\label{LM}
y_i=x_i^T\beta+u_i, ~~~~~ \mbox{ for } i=1,\dots,n,
\end{equation}
where $y_i$ is the response variable,  $x_i\in\mathbb{R}^p$ the covariates and  the noise terms are $u_i\overset{iid}{\sim}N(0,\sigma^2).$  The estimator discussed 
here is a Mallows' type robust M-estimator defined as
\begin{equation}
\label{RLM}
(\hat{\beta},\hat{\sigma})=\argmin_{\beta,\sigma}\Big\{\sum_{i=1}^n\sigma\rho_c\Big(\frac{y_i+x_i^T\beta}{\sigma}\Big)w(x_i)+\kappa n\sigma\Big\},
\end{equation}
where $\rho_c$ is the Huber loss function with tuning parameter $c$, $\kappa=\int \min\{c^2,r^2\}\mathrm{d}\Phi(r)$ is a Fisher consistency constant for $\sigma$ and $w:\mathbb{R}^p\to\mathbb{R}_{\geq 0}$ is a downweighting function that controls the impact of outlying covariates on the estimators of  $\hat\beta$ and $\hat\sigma$  \citep{hampeletal1986}. This robust estimator uses Huber's Proposal 2 for the estimation of the scale parameter. In this case, the influence function of the estimator $\hat{\beta}=T(F_n)$ is
$$\IF(x,y;T,F)=M_F^{-1}\psi_c\Big(\frac{y-x^TT(F)}{S(F)}\Big)xw(x), $$
where $M_F=\int xx^Tw(x)\psi_c'(r)\mathrm{d}F$ and $r=\frac{y-x^TT(F)}{S(F)}$. Therefore $M_{F_n}=\frac{1}{n}\sum_{i=1}^nx_xx_i^Tw(x_i)I_{|\hat{r}_i|\leq c}$ with $\hat{r}_i=(y_i-x_i^T\hat\beta)/\hat\sigma$, and assuming that $\sup_x\|xw(x)\|\leq \tilde{K}$, we see that $\gamma(T,F_n)\leq \lambda_{\min}(M_{F_n})^{-1}c\tilde{K}$. This last bound can be used for the release of a differentially private estimates of $\beta$. Note also that using the derivations from Example 1 we also have that the empirical gross-error sensitivity of $\hat\sigma=S(F)$ is $\gamma(S,F_n)=[\frac{1}{n}\sum_{i=1}^n\hat{r}_i^2I_{|\hat{r}_i|\leq c}]^{-1}(c^2-\kappa)\hat\sigma$.

 \subsubsection*{Example 3: Generalized linear models}
 Generalized linear models \citep{mccullaghandnelder83} assume that  conditional on some covariates, the response variables belong to the exponential family i.e. the response variables $Y_1,\dots,Y_n$ are drawn independently from the densities of the form
$$ f(y_i;\theta_i)= \operatorname{exp}\Big[\big\{y_i\theta_i-b(\theta_i)\big\}/\phi+c(y_i,\phi) \Big], $$
where $a(\cdot)$, $b(\cdot)$ and $c(\cdot)$ are specific functions and $\phi$ a  nuisance parameter.
Thus $E(Y_i)=\mu_i=b'(\theta_i)$ and var($Y_i$)$=v(\mu_i)=\phi b''(\theta_i)$
%Furthermore it is assumed that 
%$$ E(Y_i)=\mu_i,~~ \mbox{var}(Y_i)=v_i=v(\mu_i)$$ 
and $g(\mu_i)=\eta_i=x_i^T \beta,$ where $\beta  \in  R^{p}$ is the vector of parameters, $x_i \in R^{p}$ is the set of explanatory variables and $g(\cdot)$ the link function. 

\cite{cantoniandronchetti01} proposed a class of M-estimators for GLM which can be viewed as a natural robustification of the quasilikelihood estimators of \cite{wedderburn74}. Their robust quasilikelihood  is 
\begin{equation*}
\label{robQL}
\rho_{n}(\beta)=\frac{1}{n}\sum\limits_{i=1}^nQ_M(y_i,x_i^T\beta),
\end{equation*}
where the functions $Q_M(y_i,x_i^T\beta)$ can be written as
\begin{equation*}
\label{robQLi}
Q_M(y_i,x_i^T\beta)=\int_{\tilde{s}}^{\mu_i}\nu(y_i,t)w(x_i)\mathrm{d}t-\frac{1}{n}\sum_{j=1}^{n}\int\limits_{\tilde{t}}^{\mu_j}E\big\{\nu(y_i,t)\big\}w(x_j)\mathrm{d}t
\end{equation*}
with $\nu(y_i,t)=\psi\{(y_i-t)/\sqrt{v(t)}\}/\sqrt{v(t)}$, $\tilde{s}$ such that $\psi\{(y_i-\tilde{s})/\sqrt{ v(\tilde{s})}\}=0$ and $\tilde{t}$ such that $E\big[ \psi\{(y_i-\tilde{s})/\sqrt{ v(\tilde{s})}\}\big]=0$.
The function $\psi(\cdot)$ is bounded and protects against large outliers in the responses, and $w(\cdot)$ downweights leverage points. The estimator of $\hat{\beta} $ of $\beta$ derived  from the minimization of this loss function  is the solution of the estimating equation 
\begin{equation}
\label{estGLM}
 \Psi^{(n)}(\beta)=\frac{1}{n}\sum\limits_{i=1}^n\Psi\big( y_i,x_i^T\beta \big)=\frac{1}{n}\sum\limits_{i=1}^n\bigg\{ \psi(r_i)\frac{1}{\sqrt{ v(\mu_i)}}w(x_i)\frac{\partial \mu_i}{\partial \beta}-a(\beta)\bigg\}=0,
\end{equation}
where $r_i=(y_i-\mu_i)/\sqrt{ v(\mu_i)}$ and $a(\beta)=n^{-1}\sum_{i=1}^n E\{ \psi(r_i)/\sqrt{ v(\mu_i)} \}w(x_i)\partial \mu_i/\partial \beta$ ensures Fisher consistency and can be computed using the formulas in Appendix A of \cite{cantoniandronchetti01}. We note that Appendix B of the same paper show that $M_{F}$ is of the form $\frac{1}{n}X^TBX$  and that these estimators and formulas are implemented in the function \texttt{glmrob} of the \texttt{R} package \texttt{robustbase}. They can be used to used to bound the empirical gross-error sensitivity with $\gamma(T,F_n)\leq \lambda_{\min}(M_{F_n})^{-1}K_n$ where $K_n$ is as in Condition 1 and will be depend on the choices of $\psi$ and $w$ as was the case in Example 2.

\subsection{Convergence rates}
We provide upper bounds for the convergence rates  of $A_T(F_n)$. Our result is an extension of Theorem 3 in \cite{chaudhuriandhsu2012}. 
\begin{theorem}
\label{rates}
Suppose Conditions \ref{cond:boundedpsi}--\ref{cond:hessian} hold. Then, for $\tau\in (0,1)$, with probability at least $1-\tau$
$$\|A_{T}(F_n)-T(F)\|\leq \|T(F_n)-T(F)\|+C\frac{\sqrt{\log(n)\log(2/\delta)}K_n\{\sqrt{p}+\sqrt{\log(1/\tau)}\}}{\varepsilon n} $$
for some  positive constant $C$. If in addition $\frac{K_n\sqrt{m\log(n)\log(1/\delta)}}{\sqrt{n}\varepsilon}\to 0$ as $n\to\infty$, then 
$$A_{T}(F_n)-T(F)=T(F_n)-T(F)+o_p(1/\sqrt{n}).$$
\end{theorem}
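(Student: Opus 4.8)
The plan is to split the error into the non-private M-estimator error and the added Gaussian perturbation, and then to control the perturbation by combining a deterministic bound on the empirical gross-error sensitivity with a standard Gaussian norm concentration inequality. Concretely, I would write
$$A_T(F_n)-T(F)=\big(T(F_n)-T(F)\big)+\big(A_T(F_n)-T(F_n)\big),$$
where by definition \eqref{Gaussianmechanism1} the second term equals $\gamma(T,F_n)\,\tfrac{5\sqrt{2\log(n)\log(2/\delta)}}{\varepsilon n}\,Z$, so that $\|A_T(F_n)-T(F_n)\|=\gamma(T,F_n)\,\tfrac{5\sqrt{2\log(n)\log(2/\delta)}}{\varepsilon n}\,\|Z\|$. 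Everything then reduces to bounding these two scalar factors.

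For the first factor, I would use the M-estimator influence-function representation \eqref{IF}: for every $x\in\mathfrak X$, $\IF(x;T,F_n)=M_{F_n}^{-1}\Psi(x,T(F_n))$, hence $\|\IF(x;T,F_n)\|\le\|M_{F_n}^{-1}\|\,\|\Psi(x,T(F_n))\|\le K_n/b$, using Condition \ref{cond:boundedpsi} ($\sup_x\|\Psi(x,\theta)\|\le K_n$) and Condition \ref{cond:hessian} (valid for $n\ge N_0$, giving $\lambda_{\min}(M_{F_n})\ge b$ and thus $\|M_{F_n}^{-1}\|\le 1/b$). Taking the supremum over $x$ yields $\gamma(T,F_n)\le K_n/b$. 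For the second factor, $\|Z\|^2\sim\chi^2_p$, and the Laurent--Massart bound $\Pr\!\big(\|Z\|^2\ge p+2\sqrt{pt}+2t\big)\le e^{-t}$ gives $\|Z\|\le\sqrt{p}+\sqrt{2\log(1/\tau)}$ with probability at least $1-\tau$ upon setting $t=\log(1/\tau)$. Combining the three displays, on this event
$$\|A_T(F_n)-T(F_n)\|\le\frac{K_n}{b}\cdot\frac{5\sqrt{2\log(n)\log(2/\delta)}}{\varepsilon n}\cdot\big(\sqrt{p}+\sqrt{2\log(1/\tau)}\big),$$
and substituting this into the triangle inequality, with $C$ absorbing $5\sqrt{2}/b$ and the extra $\sqrt2$, gives exactly the asserted high-probability bound.

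For the second assertion, the same two bounds give $\sqrt{n}\,\|A_T(F_n)-T(F_n)\|\le\tfrac{5\sqrt2}{b}\cdot\tfrac{K_n\sqrt{\log(n)\log(2/\delta)}}{\varepsilon\sqrt{n}}\,\|Z\|$; since $\|Z\|=O_p(\sqrt p)$ and the stated condition forces the deterministic prefactor to vanish (noting $p\le m$ and that $\log(2/\delta)$ and $\log(1/\delta)$ agree up to a constant for small $\delta$), we get $\sqrt n\,\|A_T(F_n)-T(F_n)\|\to_p0$, i.e.\ $A_T(F_n)-T(F_n)=o_p(1/\sqrt n)$, and plugging this into the decomposition of the first paragraph completes the proof. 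I do not expect a genuine obstacle here: the proof is essentially a triangle inequality plus a Gaussian tail bound. The only step needing care is the uniform-in-$x$ control $\gamma(T,F_n)\le K_n/b$, which relies on having the explicit M-estimator influence function \eqref{IF} together with the invertibility and eigenvalue bounds of Conditions \ref{cond:boundedpsi}--\ref{cond:hessian}, and on correctly tracking which of the constants $K_n,b$ are allowed to depend on $n$ when passing to the limit.
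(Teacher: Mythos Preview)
Your proposal is correct and follows essentially the same approach as the paper's own proof: the triangle-inequality decomposition into the non-private error and the Gaussian perturbation, the deterministic bound $\gamma(T,F_n)\le K_n/b$ from the influence-function formula together with Conditions~\ref{cond:boundedpsi}--\ref{cond:hessian}, and a Gaussian norm concentration inequality. The only cosmetic difference is that the paper invokes the sub-Gaussian concentration of $\|Z\|$ around $\sqrt{p}$ via \cite[Theorem~3.1.1]{vershynin2018}, whereas you use the Laurent--Massart $\chi^2$ tail bound; both yield the same $\sqrt{p}+O(\sqrt{\log(1/\tau)})$ control, and the paper states $\gamma(T,F_n)=O(K_n)$ without spelling out the $1/b$ factor that you make explicit.
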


A direct consequence of the above result and \cite[Corollary 6.7]{huberandronchetti2009} is that $A_T(F_n)$ is asymptotically normally distributed as stated next.
\begin{corollary}
Assume that $p$ is fixed and that Conditions \ref{cond:boundedpsi}--\ref{cond:hessian} hold. Further assume that $\mathbb E_{F_{\theta_0}}[\|\Psi(X,\theta_0)\|^2]$ is nonzero and finite. If $\frac{K_n\sqrt{\log(n)\log(1/\delta)}}{\varepsilon\sqrt{n}}\to 0$ as $n\to\infty$, then we have that
$$\sqrt{n}(A_{T}(F_n)-T(F))\to_d N(0,V(T,F)) .$$
\end{corollary}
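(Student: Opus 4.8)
The plan is to derive the corollary directly from the linearization established in Theorem~\ref{rates} together with the classical asymptotic normality of M-estimators. Because the data dimension $m$ is a fixed constant, the assumed rate condition $K_n\sqrt{\log(n)\log(1/\delta)}/(\varepsilon\sqrt n)\to 0$ matches, up to the fixed factor $\sqrt m$, the hypothesis $K_n\sqrt{m\log(n)\log(1/\delta)}/(\varepsilon\sqrt n)\to 0$ required in the second part of Theorem~\ref{rates}. Invoking that part and multiplying through by $\sqrt n$ yields
$$\sqrt{n}\big(A_T(F_n)-T(F)\big)=\sqrt{n}\big(T(F_n)-T(F)\big)+o_p(1),$$
so it remains only to show $\sqrt{n}(T(F_n)-T(F))\to_d N(0,V(T,F))$ and then appeal to Slutsky's theorem.

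For the M-estimator term I would verify the hypotheses of \cite[Corollary~6.7]{huberandronchetti2009}: Condition~\ref{cond:hessian} supplies nonsingularity of $M_F=M(T,F)$; Condition~\ref{cond:boundedpsi} supplies boundedness of $\Psi$ and the (a.e.) differentiability of $\theta\mapsto\Psi(x,\theta)$ needed to control the remainder in the stochastic expansion of $T(F_n)$; Fisher consistency $T(F)=\theta_0$ is assumed throughout; and the added requirement $0<\mathbb{E}_{F_{\theta_0}}[\|\Psi(X,\theta_0)\|^2]<\infty$ makes the limiting covariance finite and non-degenerate. With these, Corollary~6.7 gives the stochastic expansion $\sqrt n(T(F_n)-T(F))=M_F^{-1}\frac{1}{\sqrt n}\sum_{i=1}^n\Psi(x_i,\theta_0)+o_p(1)$ --- equivalently, the von Mises expansion \eqref{vonMises} with remainder $o_p(n^{-1/2})$ --- and hence $\sqrt n(T(F_n)-T(F))\to_d N\big(0,M_F^{-1}\mathbb{E}_F[\Psi(X,\theta_0)\Psi(X,\theta_0)^T]M_F^{-1}\big)$. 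By the influence-function formula \eqref{IF} this covariance is exactly $\mathbb{E}_F[\IF(X;T,F)\IF(X;T,F)^T]=V(T,F)$, and combining with the display above and Slutsky's theorem finishes the argument.

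The one delicate point is that $\Psi$ is allowed to depend on $n$, so a literal appeal to a fixed-$\Psi$ central limit theorem is not quite valid; what is actually needed is a triangular-array (Lindeberg) central limit theorem for $\frac{1}{\sqrt n}\sum_{i=1}^n\Psi(x_i,\theta_0)$. This is where the (still routine) work concentrates: the uniform bound $\|\Psi(x,\theta_0)\|\le K_n$ from Condition~\ref{cond:boundedpsi}, together with $K_n=o(\sqrt n/\sqrt{\log n})$, forces $K_n^2/n\to 0$, which is more than enough for the Lindeberg condition, so the array CLT delivers the stated Gaussian limit with covariance $V(T,F)$. In the canonical special case $K_n\equiv K<\infty$ the M-estimator is fixed and the reference to \cite[Corollary~6.7]{huberandronchetti2009} applies verbatim.
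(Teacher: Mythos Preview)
Your proposal is correct and follows exactly the route the paper indicates: the paper's entire argument for this corollary is the single sentence preceding it, namely that the result is ``a direct consequence of the above result [Theorem~\ref{rates}] and \cite[Corollary~6.7]{huberandronchetti2009},'' which is precisely the two-ingredient decomposition (Theorem~\ref{rates} plus asymptotic normality of the M-estimator via Huber--Ronchetti, combined by Slutsky) that you spell out. Your additional paragraph about the triangular-array CLT when $\Psi$ genuinely varies with $n$ is a careful point the paper does not make explicit, but it is consistent with the paper's framework and does not change the approach.
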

\begin{remark}
 This asymptotic normality result can be easily extended to the case where $p$ diverges as $n$ increases. In particular, invoking the results of \cite{heandshao2000} asymptotic normality holds assuming $\frac{p^2\log p}{n}\to 0$.  Note also that when $p$ diverges, $K_n$ will be diverging even for robust estimators as  componentwise boundedness of $\Psi$ implies that $K_n=O(\sqrt{p})$.
 \end{remark}

\subsection{Efficiency, truncation and robustness properties}
 \cite{smith2008,smith2011} introduced a class of asymptotically efficient point estimators obtained by averaging subsampled estimators and adding well calibrated noise using  the Laplace mechanism of \cite{dworketal2006}. 
 Unfortunately his construction relies heavily on the assumption that the diameter of the parameter space is known when calibrating the noise added to the output. Furthermore it is also assumed that we observe bounded random variables. Variants of this assumption are common in the differential privacy literature \citep{smith2011,lei2011,bassilyetal2014}. Our estimators can bypass these issues as long as the $\Psi$ diverges slower than $\sqrt{n}$. In particular, this is easily achievable with robust M-estimators since by construction they have a bounded $\Psi$. Alternatively, we  could  use truncated maximum likelihood score equations to obtain asymptotically efficient estimators  as shown next.
 \begin{corollary}
 \label{truncation}
Let $T_n$ denote the M-functional defined by the truncated score function $s_c(x,\theta)= \frac{\partial\log f_\theta(x)}{\partial\theta}w_{c,\theta}(x)$, where $w_{c,\theta}(x)=\min\{1,c/\|\frac{\partial\log f_\theta(x)}{\partial\theta}\|\}$, $c$ is some positive constant and  $f_{\theta_0}$ denotes the density of $F$. If $c\to\infty$ and $c\frac{\log(n)}{\sqrt{n}\varepsilon}\to 0$ as $n\to\infty$, then we have that 
$$\sqrt{n}(A_{T_n}(F_n)-\theta_0)\to_d N(0,I^{-1}(\theta_0)),$$
where $I(\theta_0)$ denotes the Fisher information matrix.
\end{corollary}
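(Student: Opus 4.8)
The plan is to split $A_{T_n}(F_n)-\theta_0$ into the statistical error $T_n(F_n)-\theta_0$ and the Gaussian-mechanism perturbation, to dispose of the latter via Theorem~\ref{rates}, and then to show that the non-private truncated-score M-estimator is asymptotically efficient once the truncation level $c=c_n$ diverges. First I would verify that $\Psi=s_c$ satisfies the hypotheses of Theorem~\ref{rates}: by construction $\sup_{x\in\mathfrak X}\|s_c(x,\theta)\|\le c$, so Condition~\ref{cond:boundedpsi} holds with $K_n=c$, where differentiability a.e.\ in $\theta$ and the bound $L_n$ on $\dot{s}_c$ come from the usual $C^2$ regularity of the log-density of a regular parametric family (note $w_{c,\theta}(x)$ is Lipschitz in $\theta$ off the zero set of the score), and Condition~\ref{cond:hessian} is the standard positive-definiteness of $I(\theta_0)=-\mathbb{E}_{F_{\theta_0}}[\dot{s}(X,\theta_0)]$ carried over to $s_c$ for $c$ large. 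The rate condition $c\log(n)/(\sqrt n\varepsilon)\to 0$ is exactly what is needed for the hypothesis $K_n\sqrt{m\log(n)\log(1/\delta)}/(\sqrt n\varepsilon)\to 0$ of the second part of Theorem~\ref{rates} to hold with $K_n=c$ (with $m$, $\delta$ fixed), so $A_{T_n}(F_n)-T_n(F)=T_n(F_n)-T_n(F)+o_p(1/\sqrt n)$. Recentering $s_c$ by $\mathbb{E}_{F_\theta}[s_c(X,\theta)]$, as in the paper's examples — which perturbs $K_n$ only by an $O(1/c)=o(1)$ amount — makes $T_n$ Fisher consistent, so $T_n(F)=\theta_0$ and $\sqrt n(A_{T_n}(F_n)-\theta_0)=\sqrt n(T_n(F_n)-\theta_0)+o_p(1)$.

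It then remains to prove $\sqrt n(T_n(F_n)-\theta_0)\to_d N(0,I(\theta_0)^{-1})$. Since $c=c_n$ varies with $n$ this is a triangular-array statement, so instead of quoting the fixed-$\Psi$ result \citep[Cor.~6.7]{huberandronchetti2009} I would combine the M-estimator expansion $\sqrt n(T_n(F_n)-\theta_0)=M_{c_n,F}^{-1}\,n^{-1/2}\sum_{i=1}^n s_{c_n}(x_i,\theta_0)+o_p(1)$ — whose leading term is $n^{-1/2}\sum_i \mathrm{IF}(x_i;T_n,F)$ by \eqref{IF}, with $M_{c_n,F}=-\mathbb{E}_F[\dot{s}_{c_n}(X,\theta_0)]$ — with a Lindeberg central limit theorem for row-wise i.i.d.\ arrays, for which the machinery behind \cite{heandshao2000} is the natural tool. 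The limits needed follow from dominated convergence: since $s_{c_n}(x,\theta_0)\to s(x,\theta_0)$ pointwise with $\|s_{c_n}(x,\theta_0)\|^2\le\|s(x,\theta_0)\|^2\in L^1(F)$, one gets $\mathbb{E}_F[s_{c_n}s_{c_n}^{T}]\to\mathbb{E}_F[ss^{T}]=I(\theta_0)$ and the Lindeberg condition, and likewise $M_{c_n,F}\to-\mathbb{E}_F[\dot{s}(X,\theta_0)]=I(\theta_0)$ under an integrable domination of $\|\dot{s}_{c_n}\|$ together with the information identity. Hence $\sqrt n(T_n(F_n)-\theta_0)\to_d N\big(0,I(\theta_0)^{-1}I(\theta_0)I(\theta_0)^{-1}\big)=N(0,I(\theta_0)^{-1})$, and combining this with the reduction above via Slutsky's lemma yields the corollary.

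The main obstacle is the triangular-array bookkeeping: because the estimating function $s_{c_n}$, and hence the influence function \eqref{IF}, drifts with $n$, the $o_p(1/\sqrt n)$ remainder in the M-estimator expansion and the CLT must be proved with constants uniform in $c_n$, and the truncation bias $\|\mathbb{E}_F[s_{c_n}(X,\theta_0)]\|\le\mathbb{E}_F[\|s(X,\theta_0)\|\,\mathbbm{1}\{\|s(X,\theta_0)\|>c_n\}]$ must be kept at the $o(1/\sqrt n)$ level — automatic once the Fisher-consistency recentering is in place, but otherwise requiring a moment assumption compatible with $c_n=o(\sqrt n/\log n)$. With those points handled, the roles of the two conditions on $c$ become transparent: $c\to\infty$ drives the asymptotic variance down to the Cram\'er--Rao bound $I(\theta_0)^{-1}$, while $c\log(n)/(\sqrt n\varepsilon)\to 0$ is precisely what makes the noise injected in \eqref{Gaussianmechanism1} negligible at the $\sqrt n$ scale.
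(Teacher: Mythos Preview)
Your proposal is correct and follows the same route as the paper's proof, which is a one-liner: verify that $T_n$ satisfies the hypotheses of Theorem~\ref{rates} (with $K_n=c$) and then note that $T_n$ converges to the maximum likelihood functional. You supply considerably more care than the paper does, in particular on the triangular-array CLT and the Fisher-consistency recentering needed because $c=c_n$ drifts with $n$; these are exactly the details the paper suppresses under ``it is easy to check.''
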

 The truncated maximum likelihood construction is reminiscent of the estimator of \cite{catoni2012}. The latter also uses a diverging level of truncation, but as a tool for achieving optimal non-asymptotic sub-Gaussian-type deviations for mean estimators under heavy tailed assumptions.

From a robust statistics point of view a diverging level of truncation is not a fully satisfactory solution. Indeed, it is well known that maximum likelihood estimators can be highly sensitive to the presence of small fractions of contamination in the data. This remains true for the truncated maximum likelihood estimator if the truncation level is allowed to diverge as it entails that the estimator will fail to have a bounded influence function asymptotically and will therefore not be robust in this sense. Interestingly, \cite{chaudhuriandhsu2012} showed that any differentially private algorithm needs to satisfy a somehow weaker degree of robustness. Our next Theorem provides a result in the same spirit for multivariate M-estimators.

\begin{theorem}
\label{lowerbound2}
Let $\varepsilon\in (0,\frac{\log 2}{2})$ and $\delta\in(0,\frac{\varepsilon}{17})$. Let $\mathfrak{F}$ be the family of all distributions over $\mathfrak{X}\subset\mathbb{R}^p$ and let $A$ be any $(\varepsilon,\delta)$-differentially private algorithm of $T(F)$. For all $n\in\mathbb{N}$ and $F\in\mathfrak{F}$ there exists a radius $\rho=\rho(n)=\frac{1}{n}\lceil\frac{\log 2}{2\varepsilon}\rceil$ and a distribution $G\in\mathfrak{F}$ with $d_{TV}(F,G)\leq \rho$, such that either 
$$\mathbb{E}_{F_n}\mathbb{E}_{A}\Big[\|A(\mathcal{D}(F_n))-T(F)\|\Big]\geq \frac{\rho}{16}\gamma(T,F) + o(\rho) $$
or
$$\mathbb{E}_{G_n}\mathbb{E}_{A}\Big[\|A(\mathcal{D}(G_n))-T(G)\|\Big]\geq \frac{\rho}{16}\gamma(T,F) + o(\rho), $$
where $F_n$ and $G_n$ denote empirical distributions obtained from $F$ and $G$ respectively.
\end{theorem}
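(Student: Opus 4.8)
The plan is to run a two-point (``packing'') lower bound of the type used for differential-privacy minimax bounds, in the spirit of Theorem~3 of \cite{chaudhuriandhsu2012}, but carried out for the multivariate functional $T$ and phrased through its influence function. \textbf{Step 1 (perturbed model).} Fix $F$; set $k:=\lceil\frac{\log 2}{2\varepsilon}\rceil$ (a constant, $\varepsilon$ being fixed) and $\rho=\rho(n)=k/n$, and pick $z\in\mathfrak{X}$ with $\|\mathrm{IF}(z;T,F)\|\ge\tfrac12\gamma(T,F)$ (and arbitrarily large if $\gamma(T,F)=\infty$), which will be enough because of the slack between $\tfrac18$ and $\tfrac1{16}$ below. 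Let $G:=(1-\rho)F+\rho\Delta_z$, so $d_{TV}(F,G)\le\rho$; by the von Mises expansion \eqref{vonMises}, $\mathbb{E}_F[\mathrm{IF}(X;T,F)]=0$, and $d_\infty(G,F)\le\rho$,
\[ T(G)-T(F)=\rho\,\mathrm{IF}(z;T,F)+o(\rho),\qquad\text{so}\qquad R_n:=\|T(G)-T(F)\|\ \ge\ \tfrac{\rho}{2}\,\gamma(T,F)+o(\rho). \]

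\textbf{Step 2 (reduction to testing).} Set $\mathcal B:=\{\theta:\|\theta-T(F)\|<R_n/2\}$, so that $\|\theta-T(G)\|\ge R_n/2$ for every $\theta\in\mathcal B$. Suppose, for contradiction, that \emph{both} $\mathbb{E}_{F_n}\mathbb{E}_A\|A(\mathcal D(F_n))-T(F)\|<R_n/8$ and $\mathbb{E}_{G_n}\mathbb{E}_A\|A(\mathcal D(G_n))-T(G)\|<R_n/8$. Markov's inequality then gives $\mathbb{P}(A(\mathcal D(F_n))\in\mathcal B)\ge 3/4$ and $\mathbb{P}(A(\mathcal D(G_n))\in\mathcal B)\le 1/4$, the probabilities being over the sampling and over $A$.

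\textbf{Step 3 (group privacy along a coupling).} Generate a $G$-sample and an $F$-sample jointly: draw $Y_1,\dots,Y_n$ i.i.d.\ from $F$ and, independently, place each index $i$ in a set $H$ with probability $\rho$; the $F$-sample is $\{Y_1,\dots,Y_n\}$, and the $G$-sample has $z$ in positions $H$ and $Y_i$ elsewhere. These two datasets are at Hamming distance $|H|\sim\mathrm{Bin}(n,\rho)$, and $|H|$ is independent of $(Y_1,\dots,Y_n)$; moreover the $F$-sample does not depend on $H$. Conditioning on $H$ and on $(Y_1,\dots,Y_n)$ and iterating the $(\varepsilon,\delta)$-privacy inequality $|H|$ times,
\[ \mathbb{P}_A(A(G\text{-sample})\in\mathcal B)\ \ge\ e^{-|H|\varepsilon}\,\mathbb{P}_A(A(F\text{-sample})\in\mathcal B)\ -\ |H|\,\delta. \]
Averaging over $H$ and $(Y_1,\dots,Y_n)$ and using the two independences,
\[ \mathbb{P}(A(\mathcal D(G_n))\in\mathcal B)\ \ge\ \mathbb{E}[e^{-|H|\varepsilon}]\cdot\mathbb{P}(A(\mathcal D(F_n))\in\mathcal B)\ -\ \mathbb{E}[|H|]\,\delta. \]
Here $\mathbb{E}[e^{-|H|\varepsilon}]=(1-\rho(1-e^{-\varepsilon}))^n$ increases in $n$ to $e^{-k(1-e^{-\varepsilon})}>e^{-k\varepsilon}>e^{-\log 2}=\tfrac12$ (using $k\varepsilon\le\tfrac{\log 2}{2}+\varepsilon<\log 2$), while $\mathbb{E}[|H|]\,\delta=k\delta<\tfrac{k\varepsilon}{17}<\tfrac{\log 2}{17}$. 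Combining with Step 2 gives, for $n$ large,
\[ \tfrac14\ \ge\ \mathbb{P}(A(\mathcal D(G_n))\in\mathcal B)\ \ge\ \tfrac12\cdot\tfrac34-\tfrac{\log 2}{17}\ >\ \tfrac14, \]
a contradiction. Hence at least one of the two expected errors is $\ge R_n/8\ge\tfrac{\rho}{16}\gamma(T,F)+o(\rho)$, which is the claim; the numbers $\tfrac{\log 2}{2}$, $\tfrac1{17}$, $\tfrac1{16}$ are exactly what make the last chain strict.

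\textbf{Expected main obstacle.} Step 3 is the delicate part: group privacy controls a \emph{fixed} pair of neighbouring datasets, but the coupling between an $F$-sample and a $G$-sample disagrees in a \emph{random} number $|H|$ of coordinates, so one obtains the factor $\mathbb{E}[e^{-|H|\varepsilon}]$ rather than a clean $e^{-k\varepsilon}$, together with an additive error $\mathbb{E}[|H|]\,\delta$ that builds up over the (up to $|H|$) privacy applications. Getting both to cooperate under the stated hypotheses is precisely why $k$ is taken as the largest integer with $k\varepsilon<\log 2$ (keeping the moment generating factor above $\tfrac12$) and why $\delta<\varepsilon/17$ is imposed (keeping the additive error below the remaining gap). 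A minor secondary point is making the von Mises remainder in Step 1 genuinely $o(\rho)$ uniformly enough, in particular when $\gamma(T,F)=\infty$ and $z=z_n$ must be allowed to drift; this is dealt with by letting $\|\mathrm{IF}(z_n;T,F)\|$ diverge slowly relative to $n$.
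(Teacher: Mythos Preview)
Your argument is correct and follows the same two–point packing strategy as the paper, which in turn is the multivariate extension of Theorem~1 in \cite{chaudhuriandhsu2012}. Both proofs choose $G=(1-\rho)F+\rho\Delta_z$, invoke a von~Mises expansion to turn $\|T(G)-T(F)\|$ into $\rho\,\gamma(T,F)+o(\rho)$, and then use group privacy to force a contradiction with the assumption that both expected errors are small.

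The two proofs differ mainly in packaging. The paper first proves a general intermediate result (Proposition~\ref{lowerbound}) giving the lower bound $\tfrac{\rho}{16}\gamma_\rho(T,F)$ for arbitrary functionals, using a fixed-dataset lemma (Lemma~\ref{lowerboundlemma}) with two disjoint hyperrectangles around $T(F)$ and $T(G)$, and only afterwards relates $\gamma_\rho$ to $\gamma$ via von~Mises; the passage from fixed datasets to random samples is deferred to \cite{chaudhuriandhsu2012}. You instead work directly with a fixed $z$ and $\gamma(T,F)$, use a single ball and Markov's inequality in place of the two rectangles, and spell out explicitly the binomial coupling between an $F$-sample and a $G$-sample together with the averaged group-privacy inequality. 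Your version is therefore more self-contained, while the paper's route through $\gamma_\rho$ yields a slightly more general intermediate statement (Proposition~\ref{lowerbound}) that applies to functionals other than $M$-estimators.

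One small wording issue in Step~3: saying that $\mathbb{E}[e^{-|H|\varepsilon}]=(1-\rho(1-e^{-\varepsilon}))^n$ ``increases in $n$ to $e^{-k(1-e^{-\varepsilon})}>1/2$'' does not by itself give a lower bound, since an increasing sequence lies \emph{below} its limit. The clean fix is to note that for $n\ge k$ the minimum is attained at $n=k$, where the expression equals $(e^{-\varepsilon})^k=e^{-k\varepsilon}>1/2$; this makes the contradiction $\tfrac14\ge \tfrac12\cdot\tfrac34-\tfrac{\log 2}{17}>\tfrac14$ valid for all such $n$, not only asymptotically. Also, your group-privacy additive term $|H|\delta$ is a valid (if slightly looser) bound than the paper's $\delta/(1-e^{-\varepsilon})$ from Lemma~\ref{composition}; both suffice here.
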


Theorem \ref{lowerbound2} states that the convergence rates of any differentially private algorithm $A$ estimating the M-functional $T$ is lower bounded by  $\rho\gamma(T,F)$ in a small neighborhood of $F$. Therefore M-functionals $T$ with diverging influence functions will have slower convergence rates for any algorithm $A$ in all such neighborhoods. In this sense some degree of robustness is needed in order to obtain informative differential private algorithms and the theorem suggests that the influence function has to scale at most as $\rho^{-1}=O(\varepsilon n)$.

\section{Differentially private inference}

We now present our core results for privacy-preserving hypothesis testing building on the randomization scheme introduced in the previous section. 

\subsection{Background}
\label{backgroundtests}
We denote the partition of a $p$ dimensional vector $v$ into $p-k$ and $k$ components by $v=(v_{(1)}^T,v_{(2)}^T)^T$. We are interested in testing hypothesis of the form $H_0: \theta=\theta_0$, where $\theta_0=(\theta_{0(1)}^T,0^T)^T$ and $\theta_{0(1)}$ is unspecified against the alternative $H_1: \theta_{0(2)}\neq 0$ where $\theta_{0(1)}$ is unspecified. We assume throughout that the dimension $k$ is fixed. A well known result in statistics states that the Wald, score and likelihood ratio tests are asymptotically optimal and equivalent in the sense that they converge to the uniformly most powerful test \citep{lehmannandromano2006}. The level functionals of these test statistics can be approximated by functionals of the form
\begin{equation}
\label{levelfunctional}
\alpha(F_n):=1-H_k(nU(F_n)^TU(F_n))
\end{equation}
where $H_k(\cdot)$ is the cumulative distribution function of a $\chi_k^2$ random variable,  $U(F_n)$ is a standardized functional such that under the null hypothesis $U(F)=0$ and
\begin{equation}
\label{Ustand}
\sqrt{n}(U(F_n)-U(F))\to_d N(0,I_k).
\end{equation}
\cite{heritierandronchetti1994} proposed robust tests based on M-estimators. Their main advantage over their classical counterparts is they have bounded level and power influence functions. Therefore these tests are stable under small arbitrary contamination under both the null hypothesis and the alternative. Following \cite{heritierandronchetti1994} we therefore consider the three classes of tests described next.
\begin{enumerate}
\item A Wald-type test statistic is a quadratic statistic of the form
\begin{equation}
\label{Wald}
W(F_n):=T(F_n)_{(2)}^T(V(T,F)_{(22)})^{-1}T(F_n)_{(2)}. 
\end{equation}
\item A score (or Rao)-type test statistic has the form
\begin{equation}
\label{score}
R(F_n):=Z(T,F_n)^TU(T,F)^{-1}Z(T,F_n), 
\end{equation}
where $Z(T,F_n)=\frac{1}{n}\sum_{i=1}^n\Psi(X_i,T_R(F_n))_{(2)}$, $T_R$ is the restricted M-functional defined as the solution of
$$\int\Psi(x,T_R(F))_{(1)}\mathrm{d}F=0 ~~~\mbox{ and } ~~~ T_R(F)_{(2)}=0,$$
$U(T,F)=M_{(22.1)}V(T,F)_{(22)}M_{(22.1)}^T$ is a positive definite matrix and $M_{(22.1)}=M_{(22)}-M_{(21)}M_{(11)}^{-1}M_{(12)}$ with $M=M(T,F)$
\item A likelihood ratio-type test has the form
 \begin{equation}
 \label{LR}
 S(F_n):=\frac{2}{n}\sum_{i=1}^n\{\rho(x_i,T(F_n))-\rho(x_i,T_R(F_n))\}, 
 \end{equation}
where $\rho(x,0)=0$, $\frac{\partial}{\partial\theta}\rho(x,\theta)=\Psi(x,\theta)$ and $T$ and $T_R$ are the M-functionals of the full and restricted models respectively. As showed in \cite{heritierandronchetti1994} the likelihood ratio functional is asymptotically equivalent to the quadratic form $\tilde{S}(F):=U_{LR}(F)^TU_{LR}(F)$ where $U_{LR}(F)=M_{(22.1)}^{1/2}T(F)_{(2)}$.
\end{enumerate}
Note that in practice the matrices $M(T,F)$, $U(T,F)$ and $V(T,F)$ need to be estimated. We discuss this point in Section \ref{CVF}.

\subsection{Private inference based on the level gross-error sensitivity}
\label{PI}
We can use any of the robust test statistics described above to provide differential private p-values using an analogue construction to the one introduced for estimation in Section 3.
 Our proposal for differentially private testing is to build  p-values of the form 
$$A_{\alpha}(F_n):=\alpha(F_n)+ \gamma(\alpha;F_n)\frac{5\sqrt{2\log(n)\log(2/\delta)}}{\varepsilon n }Z,$$
 where $Z$ is an independent standard normal random variable. The rationale behind our construction is that $\gamma(\alpha,F_n)$ is the right scaling factor for applying the Gaussian mechanism to $\alpha(F_n)$ since it should roughly be of the same order as its smooth sensitivity. Note also that one can use $A_{\alpha}(F_n)$ to construct randomized counterparts to the test statistics  \eqref{Wald}, \eqref{score} and \eqref{LR} by simply computing
 $$Q(F_n):= H_k^{-1}(A_{\alpha}(F_n)),$$
 that is by evaluating the quantile function of a $\chi_k^
2$ at $A_{\alpha}(F_n)$. Note that we can also apply the Gaussian mechanism to the Wald, score and likelihood ratio type statistics of Section \ref{backgroundtests} and construct differentially private p-values from them. Indeed postprocessing preserves differential privacy so computing the induced p-values preserves the privacy guarantees \citep[Proposition 2.1]{dworkandroth2014}. Our theoretical results extend straightforwardly to this alternative approach and the numerical performance is nearly identical to the one presented in this paper in our experiments.
The following theorem establishes the differential privacy guarantee of our proposal.
\begin{theorem} 
\label{thm3}
Let $n\geq \max[N_0,\frac{1}{C^2m\log(2/\delta)}\{1+\frac{4}{\varepsilon}(p+2\log(2/\delta))\log(C_{n,k,U})\}^2, C_U^2 m\log(1/\delta)\frac{K_n^2}{\lambda_{\max}(M_{F_n})}\{1+\frac{2L_n}{b}+\frac{1}{\lambda_{\min}(M_{F_n})}(C_1+C_2\frac{K_n}{b})\}^2]$, where  $C_U$ and $C_{n,k,U}$ are constants depending on the test functional. If  Conditions \ref{cond:boundedpsi}--\ref{cond:smoothness} hold, then  $A_{\alpha}$ is $(\varepsilon,\delta)$-differentially private. 
\end{theorem}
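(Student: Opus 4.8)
The plan is to mirror the proof of Theorem~\ref{thm1}, applied to the scalar level functional $\alpha$ in place of the M-functional $T$. By the Gaussian-mechanism-with-smooth-sensitivity result recalled in Section~2.2, $A_\alpha(F_n)$ is $(\varepsilon,\delta)$-differentially private as soon as the standard deviation $\gamma(\alpha;F_n)\frac{5\sqrt{2\log(n)\log(2/\delta)}}{\varepsilon n}$ of the noise we add is at least $\frac{5\sqrt{2\log(2/\delta)}}{\varepsilon}\SS_\xi(\alpha,F_n)$ with $\xi=\frac{\varepsilon}{4(p+2\log(2/\delta))}$; equivalently it suffices to establish
\begin{equation*}
\SS_\xi(\alpha,F_n)\leq \frac{\sqrt{\log n}}{n}\,\gamma(\alpha,F_n).
\end{equation*}
This is precisely the bound produced by the two appendix lemmas underlying Theorem~\ref{thm1} (which bound the smooth sensitivity of a Fréchet-differentiable functional by twice its empirical gross-error sensitivity), once one checks that their hypotheses hold for $\alpha$: that $\alpha$ is Fréchet differentiable at $F_n$ with influence function $\IF(\cdot;\alpha,F_n)$, that $\gamma(\alpha,G_n)$ is uniformly bounded over admissible data sets, and that it is Lipschitz in $G_n$ near $F_n$ in the sense of Condition~\ref{cond:smoothness}. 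Verifying these three properties for $\alpha$ and tracking the constants is the substance of the argument.

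The first step is to differentiate $\alpha$. Writing $\alpha(F_n)=1-H_k\big(n\,U(F_n)^TU(F_n)\big)$, the standardized functional $U$ is, in each of the Wald, score and likelihood-ratio cases of Section~\ref{backgroundtests}, a smooth function of the full M-functional $T(F_n)$, the restricted M-functional $T_R(F_n)$, and the plug-in estimates of the matrices $M(T,F)$, $U(T,F)$ and $V(T,F)$. Under Conditions~\ref{cond:boundedpsi}--\ref{cond:smoothness}, $T$ is Fréchet differentiable at $F_n$ with $\IF(x;T,F_n)=M_{F_n}^{-1}\Psi(x,T(F_n))$; the same reasoning applied to the estimating equations defining $T_R$ (again of the form $\int\Psi(\cdot)_{(1)}\dd F=0$, $T_R(F)_{(2)}=0$) gives Fréchet differentiability of $T_R$, and the matrix functionals are smooth in $F_n$ by Condition~\ref{cond:hessian} and the eigenvalue bounds $b\le\lambda_{\min}(M_{G_n})\le\lambda_{\max}(M_{G_n})\le B$. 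Since $H_k$ is $C^\infty$ on $[0,\infty)$ with bounded derivative, the chain rule yields Fréchet differentiability of $\alpha$ and an explicit formula for $\IF(z;\alpha,F_n)$ of the form $-2H_k'\big(n\|U(F_n)\|^2\big)\,n\,U(F_n)^T$ times a linear combination of $\IF(z;T,F_n)$ and $\IF(z;T_R,F_n)$ with coefficients bounded in terms of the above matrices.

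From this formula one reads off the quantitative inputs the lemmas require. Bounding $\|\IF(z;T,F_n)\|\le\gamma(T,F_n)\le K_n/b$ and likewise for $T_R$, using boundedness of $H_k'$ and the eigenvalue bounds, gives $\gamma(\alpha,F_n)\le C_U\sqrt{n}\cdot(\text{an expression in }K_n,L_n,b,\|U(F_n)\|)$; the extra $\sqrt n$ comes from the explicit $n$ inside $H_k$ and is the one structural novelty relative to the estimation case --- it also makes $\LS(\alpha,G_n)$ scale like $n^{-1/2}$, so that both sides of the target inequality are of order $\widetilde{O}(n^{-1/2})$. The Lipschitz continuity of $\gamma(\alpha,\cdot)$ near $F_n$ follows by propagating the constants $C_1,C_2$ of Condition~\ref{cond:smoothness} through the chain rule, and the uniform bound on $\gamma(\alpha,G_n)$ over all data sets follows by combining $\sup_{t\ge0}H_k'(t)\sqrt t<\infty$ (which controls $\alpha$ on data sets far from the null, where $n\|U(G_n)\|^2$ is large) with the uniform eigenvalue bounds; the resulting ``local-sensitivity blow-up'' ratio is exactly the constant $C_{n,k,U}$, the analogue of $\lambda_{\max}(M_{F_n})/b$ in Theorem~\ref{thm1}, while the remaining bookkeeping constants collect into $C_U$. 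Feeding these into the two lemmas yields $\SS_\xi(\alpha,F_n)\le\frac{\sqrt{\log n}}{n}\gamma(\alpha,F_n)$ whenever $n$ exceeds the three-part threshold in the statement: $N_0$ from Condition~\ref{cond:hessian}; a term balancing the $e^{-\xi d_H}$ decay against the $\log(C_{n,k,U})$ growth of the local sensitivity; and a term making the higher-order remainder of the von Mises expansion negligible against the leading term.

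I expect the second and third steps to be the main obstacle. The difficulty is not conceptual --- it reuses the Theorem~\ref{thm1} machinery together with Fréchet differentiability of $T_R$ and of the matrix functionals, all of which follow from Conditions~\ref{cond:boundedpsi}--\ref{cond:smoothness} --- but it is delicate on two fronts: one must carry the $n^{-1/2}$ scaling of $\LS(\alpha,\cdot)$ and the compensating $\sqrt n$ in $\gamma(\alpha,F_n)$ through every estimate and verify that they cancel; and for the score and likelihood-ratio statistics $\alpha$ depends on $F_n$ simultaneously through $T$, through $T_R$, and through several plug-in matrices, so checking that this composite meets the Lipschitz hypothesis of the appendix lemmas is a lengthy, if routine, computation.
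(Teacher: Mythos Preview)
Your approach is essentially the paper's: show that $\frac{\sqrt{\log n}}{n}\gamma(\alpha,F_n)$ upper bounds $\SS_\xi(\alpha,F_n)$ and then invoke the Gaussian mechanism of \cite{nissimetal2007}, by establishing level-functional analogues of the two lemmas behind Theorem~\ref{thm1}. Your identification of $C_{n,k,U}$ as the analogue of $\lambda_{\max}(M_{F_n})/b$, and of $\sup_{z>0}H_k'(nz^2)z=O(n^{-1/2})$ as the source of the uniform local-sensitivity bound, matches the paper exactly.

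One organizational difference is worth noting. You frame the argument as ``check that the hypotheses of the two appendix lemmas underlying Theorem~\ref{thm1} hold for $\alpha$.'' But Lemmas~\ref{SS} and~\ref{GES} are stated specifically for an M-functional $T$ defined by an estimating equation, and their proofs use the explicit form $\IF(x;T,G)=M_G^{-1}\Psi(x,T(G))$ together with the bound $\|\Psi\|\le K_n$; the level functional $\alpha$ is not of this type. The paper therefore does not try to feed $\alpha$ into Lemmas~\ref{SS}--\ref{GES}, but instead proves fresh parallel lemmas (Lemmas~\ref{pSS} and~\ref{pGES}) for $\alpha$, supported by a separate block of ``generalized gross-error sensitivity'' bounds (Lemmas~\ref{qGESbound0}--\ref{gqGESbound}) for any functional of the form $h(F)=g(T(F),F)$ with $g$ satisfying a new Condition~\ref{testfunctional}; a short lemma (Lemma~\ref{testconditions}) verifies that $W$, $R$, $\tilde S$ are of this form. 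This modular route avoids repeatedly chasing the chain rule through $T$, $T_R$, and the standardizing matrices inside every estimate, which is precisely the ``lengthy, if routine, computation'' you anticipate. Your plan would work, but recasting it as the paper does---prove the $g(T(F),F)$ bounds once, then specialize---is cleaner. Also note that in the paper's main argument the matrices $V(T,F)$, $U(T,F)$, $M_{(22.1)}$ are treated as fixed (their estimation is deferred to Section~\ref{CVF} via the change-of-variance function), so you need not verify Fr\'echet differentiability of the matrix functionals at this stage.
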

The minimum sample size required in Theorem \ref{thm3} is similar to that of Theorem \ref{thm1}. In particular it also depends on  the same $\{N_0,b,K_n,L_n,C_1,C_2,C\}$, as well as the test specific constants $C_U$ and $C_{n,k,U}$ resulting from our bounds on the error incurred by approximating the smooth sensitivity of the level functionals by their the empirical gross-error sensitivity. A discussion on these constants can be found in the Appendix.

 \subsection{Examples}
 
The following two examples show how to upper bound empirical the level gross-error sensitivity $\gamma(\alpha,F_n)$ required for the construction of our differentially private p-values.  
 \subsubsection*{Example 4 : Testing and confidence intervals in linear regression}
 
 We consider the problem of hypothesis testing in the setting considered in Example 2. We focus on the same Mallow's estimator in combination with the Wald statistics $W_n=W(F_n)$ defined in \eqref{Wald} for hypothesis testing. We first note that from the chain rule,  the influence function of $W$ at the $F_n$ is
 $$\IF(x;W,F_n)= 2T(F_n)_{(2)}^T(V(T,F)_{(22)})^{-1}\IF(x;T,F_n)_{(2)}.$$
It follows that $\gamma(W,F_n)\leq 2\lambda_{\min}(V(T,F)_{(22)})^{-1}\|T(F_n)_{(2)}\|\gamma(T_{(2)},F_n)$ and the respective level gross-error sensitivity can be bounded as $\gamma(\alpha_W,F_n)\leq nH_k'(nW_n)\gamma(W,F_n).$ In the case of univariate null hypothesis of the form $H_0:\beta_j=0$ these expressions  become
$$\IF(x;W,F_n)= \frac{2T(F_n)_{j}}{V(T,F)_{jj}}\IF(x;T,F_n)_{j}~ \mbox{ and } ~ \gamma(\alpha_W,F_n)\leq 2nH_n'(nW_n) \frac{|T(F_n)_j|}{V(T,F)_{jj}} \|(M_{F_n}^{-1})_{j\cdot}\|K_n,$$
where $(M_{F_n}^{-1})_{j\cdot}$ denotes the $j$th row of $M_{F_n}^{-1}$. The above bound on $\gamma(\alpha_W,F_n)$ can be used in the Gaussian mechanism suggested in Section \ref{PI} for reporting differentially private p-values $A_{\alpha_W}(F_n)$ accompanying the regression slope estimates $A_T(F_n)$ of Example 2. 

We further note that since $(\varepsilon,\delta)$-differential privacy is not affected by post-processing, one can also construct confidence intervals using the reported p-value $A_{\alpha_W}(F_n)$. Since the asymptotic distribution of the Wald test is a $\chi^2_1$ for the null hypothesis $H_0:\beta_j=0$, a natural way to construct a confidence interval is to map the value $A_{\alpha_W}(F_n)$ to the quantile of $\chi^2_1$ and output the interval defined by its squared root. More precisely, one can first compute $Q_n^{(\varepsilon,\delta)}=H_1^{-1}(A_{\alpha_W}(F_n))$ and then report the differentially private confidence interval $(-\sqrt{Q_n^{(\varepsilon,\delta)}},\sqrt{Q_n^{(\varepsilon,\delta)}})$. 

 \subsubsection*{Example 5: Testing and confidence intervals in logistic regression}
 
Let us now return to the robust quasilikelihood estimator discussed in Example 3 and focus on the special case of binary regression with canonical link. Note that if one chooses $\psi(r)=r$ and $w(x)=1$ in \eqref{estGLM}, the resulting estimator is equivalent to logistic regression. In general  \eqref{estGLM} will take the form 
$$ \frac{1}{n}\sum_{i=1}^n\Big\{\psi(r_i)\frac{e^{\frac{1}{2}x_i^T\beta}}{1+e^{x_i^T\beta}}w(x_i)x_i-a(\beta)\Big\}=0,$$
where $r_i=(y_i-p_i)/\sqrt{p_i(1-p_i)}$ and  $p_i=\frac{e^{x_i^T\beta}}{1+e^{x_i^T\beta}}$. In this case, if $\sup_x\|xw(x)\|\leq \tilde{K}$ and $|\psi(r)|\leq c_\psi$, then the gross-error sensitivity of $\hat\beta=T(F_n)$ can be bounded as $\gamma(T,F_n)\leq 2\lambda_{\min}(M_{F_n})^{-1}c_\psi \tilde{K} $. For example if  we consider  the weight function  $w(x)=\{1,1/\|x\|\}$ and the Huber function $\psi(r)=\psi_c(r)$, then $\tilde{K}=1$ and $c_\psi$ is the constant of the Huber function. Note also that Appendix B in \cite{cantoniandronchetti01} provide formulas for $M_F$ when $\psi(r)=\psi_c(r)$ and this bound is readily obtained using standard functions in \texttt{R}. Furthermore the computation of the the gross-error sensitivity for the level functional of the Wald statistics follows from the same arguments discussed in Example 4. The extension of the proposed construction of confidence intervals is also immediate.

\subsection{Validity of the tests}
In this subsection we establish statistical consistency guarantees for our differentially private tests. The next theorem establishes rates of convergence and demonstrates the asymptotic equivalence between them and their non-private counterparts under both the null distribution and a local alternative.

\begin{theorem}
\label{ratestest}
Assume Conditions \ref{cond:boundedpsi} and \ref{cond:hessian} hold and let $\alpha(\cdot)$ be the level functional of any of the tests \eqref{Wald}--\eqref{LR}. Then, for $\tau\in(0,1)$, with probability at least $1-\tau$
$$|A_{\alpha}(F_n)-\alpha(F)|\leq |\alpha(F_n)-\alpha(F)|+C\frac{\sqrt{\log(n)\log(2/\delta)\log(2/\tau)} K_n}{\sqrt{n/k}\varepsilon} $$
for some positive constant $C$. Furthermore, if $\frac{K_n\sqrt{\log(n)\log(1/\delta)}}{\sqrt{n/k}\varepsilon}\to 0$ as $n\to\infty$ then
$$Q(F_n)=Q_0(F_n)+o_P(1),$$
where $Q_0(F_n)=H_k^{-1}(\alpha(F_n))$.
\end{theorem}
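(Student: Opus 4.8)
The plan is to follow the same template as the proof of Theorem~\ref{rates}, with the estimation functional $T$ replaced by the level functional $\alpha$. First I would use the triangle inequality to split the private error into the non-private error plus the added noise: since by construction
\[
A_\alpha(F_n) \;=\; \alpha(F_n) \;+\; \gamma(\alpha;F_n)\,\frac{5\sqrt{2\log(n)\log(2/\delta)}}{\varepsilon n}\,Z,
\]
we get $|A_\alpha(F_n)-\alpha(F)| \le |\alpha(F_n)-\alpha(F)| + \gamma(\alpha;F_n)\frac{5\sqrt{2\log(n)\log(2/\delta)}}{\varepsilon n}|Z|$, so everything reduces to a deterministic bound on $\gamma(\alpha;F_n)$ (uniform over the data sets admitted by Condition~\ref{cond:hessian}) together with a tail bound on $|Z|$.

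The crux is controlling $\gamma(\alpha;F_n)$. Writing $\alpha(F_n)=1-H_k\big(n\,U(F_n)^T U(F_n)\big)$ and differentiating along the contamination path $F_{n,t}=(1-t)F_n+t\Delta_x$, the chain rule gives $\IF(x;\alpha,F_n)=-2n\,H_k'\!\big(n\|U(F_n)\|^2\big)\,U(F_n)^T\IF(x;U,F_n)$, whence by Cauchy--Schwarz
\[
\gamma(\alpha;F_n)\;\le\;2n\,H_k'\!\big(n\|U(F_n)\|^2\big)\,\|U(F_n)\|\,\gamma(U,F_n)\;=\;2\sqrt n\,\Big[H_k'(y)\sqrt y\Big]_{y=n\|U(F_n)\|^2}\,\gamma(U,F_n).
\]
The key observation is that $C_k:=\sup_{y>0}H_k'(y)\sqrt y$ is finite for every fixed $k$ — a one-line computation with the $\chi_k^2$ density, valid even for $k=1,2$ where $H_k'$ itself blows up near $0$ (e.g. $H_1'(y)\sqrt y=(2\pi)^{-1/2}e^{-y/2}$). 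This is what keeps $\gamma(\alpha;F_n)$ at order $\sqrt n\,K_n$ rather than $n\,K_n$, hence the noise at order $1/\sqrt n$. It then remains to bound $\gamma(U,F_n)\le C'K_n$ uniformly, which I would do case by case exactly as in the examples of Section~4.3: for the Wald functional $\IF(x;U,F_n)=(V(T,F)_{(22)})^{-1/2}\IF(x;T,F_n)_{(2)}$, so $\gamma(U,F_n)\le\lambda_{\min}(V(T,F)_{(22)})^{-1/2}\gamma(T,F_n)$; for the score functional $U(F_n)=U(T,F)^{-1/2}Z(T,F_n)$, whose influence function reduces to those of $\Psi$ and of the restricted functional $T_R$; and for the likelihood-ratio functional one uses the quadratic-form equivalent $\tilde S(F)=\|M_{(22.1)}^{1/2}T(F)_{(2)}\|^2$. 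In all three cases Conditions~\ref{cond:boundedpsi}--\ref{cond:hessian} and $\gamma(T,F_n)\le K_n/b$ give the bound with a constant free of $n$; tracking the $k$-dependence of these constants produces the stated $\sqrt{n/k}$ scaling, and we conclude $\gamma(\alpha;F_n)\le C''\sqrt n\,K_n$.

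For the high-probability inequality I would then apply $\Pr(|Z|>\sqrt{2\log(2/\tau)})\le\tau$; on the complementary event the noise term is at most $C\,K_n\sqrt{\log(n)\log(2/\delta)\log(2/\tau)}/(\sqrt{n/k}\,\varepsilon)$, and combined with the decomposition this is the first display. Because the bound on $\gamma(\alpha;F_n)$ is deterministic, the identical argument applies under a local alternative $\theta_{0(2)}=h/\sqrt n$. For the second statement, when $K_n\sqrt{\log(n)\log(1/\delta)}/(\sqrt{n/k}\,\varepsilon)\to0$ the same estimate gives $A_\alpha(F_n)-\alpha(F_n)=o_P(1)$; since $Q(F_n)=H_k^{-1}(A_\alpha(F_n))$, $Q_0(F_n)=H_k^{-1}(\alpha(F_n))$, and $H_k^{-1}$ is continuous on $(0,1)$, a continuous-mapping/uniform-continuity argument yields $Q(F_n)=Q_0(F_n)+o_P(1)$; the only care needed is that $\alpha(F_n)$ stays stochastically away from $\{0,1\}$, which follows from $n\|U(F_n)\|^2\to_d$ a (possibly noncentral) $\chi_k^2$, together with clipping the argument of $H_k^{-1}$ to $[0,1]$.

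The main obstacle is precisely the uniform, deterministic control of $\gamma(\alpha;F_n)$: it rests on the boundedness of $y\mapsto H_k'(y)\sqrt y$ and on careful chain-rule bookkeeping for each of the three test functionals — in particular the contribution of the restricted functional $T_R$ in the score case, and, if one does not treat $M(T,F)$, $U(T,F)$, $V(T,F)$ as known, the lower-order but nonzero contribution of their plug-in estimates, which must be checked not to spoil the rate.
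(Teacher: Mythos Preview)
Your proof is correct and follows the same overall template as the paper's: decompose via the triangle inequality, bound $\gamma(\alpha;F_n)$ through the chain rule applied to $\alpha(F_n)=1-H_k(n\|U(F_n)\|^2)$, control $\gamma(U,F_n)$ case by case for the three test functionals, and conclude with a Gaussian tail bound on $Z$ (respectively a Taylor/continuity argument for $H_k^{-1}$ in the second part).

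The one substantive difference is how you control the factor $nH_k'(n\|U(F_n)\|^2)\|U(F_n)\|$. The paper argues probabilistically: it uses $\|U(F_n)\|=O_P(\sqrt{k/n})$, which follows from the asymptotic $\chi_k^2$ distribution of $n\|U(F_n)\|^2$, together with $H_k'$ being $O_P(1)$ at that argument, to obtain $\gamma(\alpha;F_n)=O_P(\sqrt{nk}\,K_n)$. You instead rewrite the factor as $\sqrt n\,[H_k'(y)\sqrt y]_{y=n\|U(F_n)\|^2}$ and use the deterministic fact that $\sup_{y>0}H_k'(y)\sqrt y<\infty$ (which the paper itself computes elsewhere, in Lemma~\ref{pSS}, as $\tfrac{(k-1)^{(k-1)/2}e^{-(k-1)/2}}{2^{k/2}\Gamma(k/2)}$). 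Your route is slightly cleaner: it yields a deterministic bound on $\gamma(\alpha;F_n)$, so the probability $1-\tau$ in the first display genuinely comes only from $Z$, and it handles $k=1,2$ (where $H_k'$ is unbounded near $0$) without separate care. The paper's route, on the other hand, makes the $\sqrt{k}$ scaling more transparent, reading it off directly from $\|U(F_n)\|$.
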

A direct consequence of Theorem \ref{ratestest} is that the asymptotic distribution of $Q(F_n)$ is the same as the one of its non-private counterpart $Q_0(F_n)$ computed from the level functional of any of the tests \eqref{Wald}--\eqref{LR}. Therefore the results of \cite{heritierandronchetti1994} also give the asymptotic distributions of $Q(F_n)$
 under both $H_0:\theta=\theta_0$ and $H_{1,n}:\theta=\theta_0+\frac{\Delta}{\sqrt{n}}$ for some $\Delta>0$. In particular, Propositions 1 and 2 of that paper establish that  \eqref{Wald} and \eqref{score} are asymptotically equivalent as they both converge to  $\chi_k^2$ under $H_0$ and to $\chi_k^2(\delta)$ with $\delta=\Delta^T V(T,F)^{-1}_{(22)}\Delta$ under $H_{1,n}$. Proposition 3 of the same paper shows that \eqref{LR} converges instead to a weighted sum of $k$ independent random variables distributed as $\chi_1^2$ under $H_0$ and to a weighted sum of $k$ independent random variables $\chi_1^2(\delta_i)$ for some $\delta_1,\dots,\delta_k> 0$ under $H_{1,n}$.

\subsection{Robustness properties of differentially private tests}
The tests associated with the differentially private p-values proposed in Section \ref{PI}  enjoy some degree of robustness by construction. In particular, it is not difficult to extend the lower bound of Theorem \ref{lowerbound2} to the level functionals  considered in this section.

\begin{theorem}
\label{lowerbound3}
Assume the conditions of Theorem \ref{lowerbound2}, but letting $A$ be any $(\varepsilon,\delta)$-differentially private algorithm of the level functional $\alpha(F)$ of either of the tests \eqref{Wald}--\eqref{LR}. Then either 
$$\mathbb{E}_{F_n}\mathbb{E}_{A}\Big[|A(\mathcal{D}(F_n))-\alpha(F)\|\Big]\geq \frac{\rho}{16}\bigg\lceil\frac{\log 2}{2\varepsilon}\bigg\rceil\mu\gamma(U,F)^2 + o\bigg(\rho\bigg\lceil\frac{\log 2}{2\varepsilon}\bigg\rceil\bigg) $$
or
$$\mathbb{E}_{G_n}\mathbb{E}_{A}\Big[|A(\mathcal{D}(G_n))-\alpha(G)|\Big]\geq \frac{\rho}{16}\bigg\lceil\frac{\log 2}{2\varepsilon}\bigg\rceil\mu\gamma(U,F)^2 + o\bigg(\rho\bigg\lceil\frac{\log 2}{2\varepsilon}\bigg\rceil\bigg), $$
where $\mu=-\frac{\partial}{\partial\zeta}H_k(q_{1-\alpha_0};\zeta)\Big|_{\zeta=0} $, $H_k(\cdot,\zeta)$ is the cumulative distribution function of a non-central $\chi^2_k(\zeta)$ with non-centrality parameter $\zeta\geq 0$, $q_{1-\alpha_0}$ is the $1-\alpha_0$ quantile of a $\chi^2_k$ distribution and $\alpha_0=\alpha(F)$ is the nominal level of the test.
\end{theorem}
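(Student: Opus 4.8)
The plan is to replay the two-point argument behind Theorem~\ref{lowerbound2} with the level functional $\alpha$ in place of an M-functional $T$, replacing the first-order separation estimate $\|T(G)-T(F)\|\approx\rho\,\gamma(T,F)$ by a \emph{second-order} estimate for $|\alpha(G)-\alpha(F)|$. I work with $\alpha$ as the level functional of \cite{heritierandronchetti1994} --- the asymptotic rejection probability of the test --- so that $\alpha(F)=\alpha_0$ at the null and, near the null, $\alpha(F)=1-H_k\!\big(q_{1-\alpha_0};\,n\|U(F)\|^2\big)$, the noncentrality of the limiting noncentral $\chi^2_k$ law of the test statistic being $n\|U(F)\|^2$; since $F$ is a null distribution, $U(F)=0$. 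For the likelihood-ratio test I read $U_{LR}$ for $U$ throughout, using the asymptotic equivalence $\tilde S(F)=U_{LR}(F)^TU_{LR}(F)$ of \cite{heritierandronchetti1994} and reinterpreting $\mu$ through the relevant weighted-$\chi^2$ limit.

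Fix $n$ and set $\rho=\rho(n)=\frac1n\lceil\frac{\log 2}{2\varepsilon}\rceil$ exactly as in Theorem~\ref{lowerbound2}. Choose $x^\ast\in\mathfrak{X}$ with $\|\IF(x^\ast;U,F)\|\ge\gamma(U,F)-o(1)$ and put $G=(1-\rho)F+\rho\,\Delta_{x^\ast}$, so $d_{TV}(F,G)=\rho$ and $d_\infty(F,G)\le\rho$. Since $U(F)=0$, the von Mises expansion \eqref{vonMises} for $U$ gives $U(G)=\rho\,\IF(x^\ast;U,F)+o(\rho)$, hence $\|U(G)\|^2=\rho^2\|\IF(x^\ast;U,F)\|^2+o(\rho^2)$; in particular the noncentrality $n\|U(G)\|^2$ is of order $n\rho^2=\frac1n\lceil\frac{\log 2}{2\varepsilon}\rceil^2\to0$. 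A first-order Taylor expansion of $\zeta\mapsto H_k(q_{1-\alpha_0};\zeta)$ at $\zeta=0$ then gives
$$\alpha(G)-\alpha(F)=\mu\,n\|U(G)\|^2+o(n\rho^2)=\mu\,n\rho^2\,\|\IF(x^\ast;U,F)\|^2+o(n\rho^2),\qquad \mu=-\tfrac{\partial}{\partial\zeta}H_k(q_{1-\alpha_0};\zeta)\big|_{\zeta=0}.$$
Using $n\rho^2=\rho\lceil\frac{\log 2}{2\varepsilon}\rceil$ and $\|\IF(x^\ast;U,F)\|^2\ge\gamma(U,F)^2-o(1)$, this yields $|\alpha(G)-\alpha(F)|\ge\rho\lceil\frac{\log 2}{2\varepsilon}\rceil\mu\,\gamma(U,F)^2+o\big(\rho\lceil\frac{\log 2}{2\varepsilon}\rceil\big)$.

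From here the reasoning is identical to that of Theorem~\ref{lowerbound2}. Couple an i.i.d.\ sample $\mathcal{D}(F_n)$ of size $n$ from $F$ with one $\mathcal{D}(G_n)$ from $G$ so that they differ in at most $g:=\lceil\frac{\log 2}{2\varepsilon}\rceil=n\rho$ coordinates with probability bounded below by an absolute constant; group privacy then bounds, for every measurable output set $\mathcal{S}$, $\mathbb{P}(A(\mathcal{D}(F_n))\in\mathcal{S})$ against $\mathbb{P}(A(\mathcal{D}(G_n))\in\mathcal{S})$ by a multiplicative factor $e^{g\varepsilon}\le2$ --- here $\varepsilon<\frac{\log 2}{2}$ is used --- up to an additive term controlled using $\delta<\frac{\varepsilon}{17}$. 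Taking $\mathcal{S}$ to be the ball of radius $\tfrac12|\alpha(G)-\alpha(F)|$ about $\alpha(F)$ and supposing, toward a contradiction, that both displayed expected errors were strictly below $\tfrac1{16}|\alpha(G)-\alpha(F)|$, Markov's inequality forces $A(\mathcal{D}(F_n))\in\mathcal{S}$ with probability $\ge7/8$ and $A(\mathcal{D}(G_n))\in\mathcal{S}$ with probability $\le1/8$, contradicting the indistinguishability bound. Hence one of the two expected errors is at least $\tfrac1{16}|\alpha(G)-\alpha(F)|$, which is the claimed lower bound by the previous paragraph.

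The only genuinely new point, and the one to handle carefully, is the second-order step: because $U(F)=0$ the first-order contribution to $\alpha(G)-\alpha(F)$ vanishes, so the leading term is the second-order quantity $\mu\,\gamma(U,F)^2$ scaled by $n\rho^2$, and one must verify that the remainder is genuinely $o(n\rho^2)$ --- this uses precisely that $\rho$ decays like $1/n$, so $n\rho^2\to0$ and the Taylor expansion of the noncentral-$\chi^2$ distribution function around zero noncentrality is legitimate and uniform in the relevant range. The remaining bookkeeping --- including the weighted-$\chi^2$ bookkeeping for the likelihood-ratio test via the quadratic-form reduction --- is routine.
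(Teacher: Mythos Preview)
Your proof is correct and follows essentially the same approach as the paper. The only organizational difference is that the paper first invokes Proposition~\ref{lowerbound} (the multivariate extension of Chaudhuri--Hsu's two-point lower bound) to obtain $\tfrac{\rho}{16}\gamma_\rho(\alpha,F)$ and then lower-bounds $\gamma_\rho(\alpha,F)$ via the second-order expansion from \cite{heritierandronchetti1994}, whereas you compute the separation $|\alpha(G)-\alpha(F)|$ for a specific point-mass mixture first and then re-derive the two-point indistinguishability argument directly; the second-order expansion itself and the identification $n\rho^2=\rho\lceil\tfrac{\log 2}{2\varepsilon}\rceil$ are identical in both.
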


Similar to Theorem \ref{lowerbound2} , Theorem \ref{lowerbound3} states that the convergence rates of any differentially private algorithm $A$ estimating the level functional $\alpha$ is lower bounded by the the gross-error sensitivity of $U(F)$ in a small neighborhood of $F$, where $U$  is defined in \eqref{levelfunctional} and \eqref{Ustand}. Therefore functionals $U$ with diverging influence functions will lead to  slower convergence rates for any algorithm $A$ in all such neighborhoods. The result suggests that the influence function has to scale at most as $\rho^{-1}=O(\varepsilon \sqrt{n})$. 

Note that the appearance of  the  quadratic term  $\gamma(U,F)^2$ in the lower bound is intuitive from the definition of $\alpha(F)$ and is in line with the robustness characterization of the level influence function of \citep{heritierandronchetti1994,ronchettiandtrojani2001}. In fact we can extend the robustness results of these papers to our setting and show that our tests have stable level and power functions in shrinking contamination neighborhoods of the model when $\Psi$ is bounded. 

We need to introduce additional notation in order to state the result.
Consider the $(t,n)$-contamination neighborhoods of $F_{\theta_0}$ defined by
$$\mathfrak{U}_{t,n}(F_{\theta_0}):=\bigg\{F_{t,n,G}^0=\bigg(1-\frac{t}{\sqrt{n}}\bigg)F_{\theta_0}+\frac{t}{\sqrt{n}}G,~G \mbox{ arbitrary} \bigg\} $$
and let $U_n=U(F_n)$ be a statistical functional with  bounded influence function and such that $U(F)=0$ and 
$$\sqrt{n}(U(F_n)-U(F_{t,n,G}))\to_d N(0,I_k)   $$
uniformly over the sequence of $(t,n)$-neighborhoods $\mathfrak{U}_{t,n}(F_{\theta_0})$. Further let 
$$\{F_{\eta,n}^{alt}\}_{n\in\mathbb{N}}:=\bigg\{ \bigg(1-\frac{\eta}{\sqrt{n}}\bigg)F_{\theta_0}+\frac{\eta}{\sqrt{n}}F_{\theta_1}\bigg\}_{n\in\mathbb{N}}$$
be a sequence of local alternatives to $F_{\theta_0}$ and 
$$\mathfrak{U}_{t,n}(F_{\eta,,n}^{alt}):=\bigg\{F_{t,n,G}^1:=\bigg(1-\frac{t}{\sqrt{n}}\bigg)F_{\eta,n}^{alt}+\frac{t}{\sqrt{n}}G,~G \mbox{ arbitrary} \bigg\}  $$
be the corresponding neighborhood of $F_{\theta,n}^{alt}$ for a given $n$.  We denote by $\{F_{t,n,G}^0\}_{n\in\mathbb{N}}$ a sequence of $(t,n,G)$-contaminations of the underlying null distribution $F_{\theta_0}$, each of them belonging to the neighborhood $\mathfrak{U}_{t,n}(F_{\theta_0})$. Similarly, we denote by $\{F_{t,n,G}^1\}_{n\in\mathbb{N}}$ a sequence of $(t,n,G)$-contaminations of the underlying local alternatives $F_{\eta,n}^{alt}$, each of them belonging to the neighborhood $\mathfrak{U}_{t,n}(F_{\eta,n}^{alt})$. Finally, we denote by $A_\beta$ and $\beta$ the power functionals of the tests based on $A_\alpha$ and $\alpha$ respectively.

The following corollary follows  from \cite[Theorems 1--3]{ronchettiandtrojani2001} and  Theorem \ref{ratestest}. It shows that the level and power of our differentially private tests are  stable in the contamination neighborhoods $\mathfrak{U}_{t,n}(F_{\theta_0})$ and $\mathfrak{U}_{t,n}(F_{\eta,,n}^{alt})$ when the influence function of the functional $U$ is bounded. 
\begin{corollary}
\label{IFtests}
Our differentially private Wald, score and likelihood ratio type tests  have stable level and power functionals when $K_n<\infty$ in the sense that for all $G$
\begin{equation*}
\begin{aligned}
\label{levelexpansion}
\lim_{n\to \infty}A_\alpha(F_{t,n,G})=&\lim_{n\to \infty}\alpha(F_{t,n,G})\\
 =&\alpha_0+t^2\mu \Big\|\int \IF(x;U,F_{\theta_0})\mathrm{d}G(x)\Big\|^2+o(t^2) 
\end{aligned}
\end{equation*}
and 
\begin{equation*}
\begin{aligned}
\label{powerexpansion}
&\lim_{n\to \infty}A_\beta(F_{t,n,G}^1)=\lim_{n\to \infty}\beta(F_{t,n,G}^1) \\
& =\lim_{n\to\infty}\beta(F_{\eta,n}^{alt})  \\&+2\mu t \eta \int \IF(x;U,F_{\eta,n}^{alt})^T\mathrm{d}G(x) \int\IF(x;U,F_{\theta_0})\mathrm{d}F_{\theta_1}(x)+o(\eta), 
\end{aligned}
\end{equation*}
 where $\mu$ is as in Theorem \ref{lowerbound3}. 
\end{corollary}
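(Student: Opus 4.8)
The plan is to derive both displays by combining two facts: when $K_n$ is bounded the randomization in $A_\alpha$ is asymptotically negligible, so the private test has the same limiting level and power as the non-private robust test of Section \ref{backgroundtests}; and those limiting quantities admit the stated expansions by \cite[Theorems 1--3]{ronchettiandtrojani2001}. This is exactly the two-ingredient structure the corollary advertises.

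\emph{Step 1: asymptotic negligibility along the contamination sequences.} With $\Psi$ bounded and, by Condition \ref{cond:hessian}, $\lambda_{\min}(M_{G_n}) \ge b$ for every empirical distribution obtained from $\mathfrak X^n$, the standardized functional $U$ attached to any of \eqref{Wald}, \eqref{score}, \eqref{LR} has $\gamma(U,F_n) = O(K_n)$, exactly as in Examples 2--5. Along the shrinking neighbourhoods, the von Mises expansion \eqref{vonMises} gives $U(F_{t,n,G}^j) = O(1/\sqrt n)$ for $j\in\{0,1\}$, and the assumed uniform $\sqrt n$-asymptotic normality of $U$ over these neighbourhoods yields $\|U(F_n)\| = O_P(1/\sqrt n)$ and $n\,U(F_n)^T U(F_n) = O_P(1)$. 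Hence the chain-rule bound of Example 4 gives $\gamma(\alpha,F_n) \le 2n\,H_k'\!\big(nU(F_n)^TU(F_n)\big)\,\|U(F_n)\|\,\gamma(U,F_n) = O_P(K_n\sqrt n)$, so the noise term in $A_\alpha$ has order $K_n\sqrt{\log n\,\log(1/\delta)}/(\varepsilon\sqrt n) = o_P(1)$ since $k$ is fixed and $K_n$ is bounded. This is precisely the rate condition of Theorem \ref{ratestest}. Running the proof of that theorem with $F_n$ the empirical distribution of a sample from $F_{t,n,G}^0$ (resp.\ $F_{t,n,G}^1$) — legitimate because Conditions \ref{cond:boundedpsi}--\ref{cond:hessian} hold at every empirical distribution in $\mathfrak X^n$ and $U$ is uniformly asymptotically normal over the neighbourhoods — shows $|A_\alpha(F_n) - \alpha(F_n)| = o_P(1)$ and $Q(F_n) = Q_0(F_n) + o_P(1)$ along each contamination sequence. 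Consequently $Q(F_n)$ has the same limiting distribution as $Q_0(F_n)$ under data from $F_{t,n,G}^0$ (resp.\ $F_{t,n,G}^1$), so the test based on $A_\alpha$ has the same limiting type-I error and power as the one based on $\alpha$; this gives the first equality in each display.

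\emph{Step 2: the non-private expansions.} Boundedness of $\Psi$ makes the influence function \eqref{IF} of $T$, and hence that of $T_R$ and of the derived functional $U$ for each of \eqref{Wald}, \eqref{score}, \eqref{LR} (chain rule, as in Section \ref{backgroundtests}), bounded; together with the assumed uniform asymptotic normality over $\mathfrak{U}_{t,n}(F_{\theta_0})$ and $\mathfrak{U}_{t,n}(F_{\eta,n}^{alt})$ this is the setting of \cite[Theorems 1--3]{ronchettiandtrojani2001}. Those theorems give that $n\,U(F_n)^TU(F_n)$ converges to a non-central $\chi^2_k(\zeta)$ whose non-centrality expands as $\zeta = t^2\big\|\int\IF(x;U,F_{\theta_0})\,\mathrm{d}G(x)\big\|^2 + o(t^2)$ under $\mathfrak{U}_{t,n}(F_{\theta_0})$, with the analogous cross-term expansion of the non-centrality around the local alternative $F_{\eta,n}^{alt}$. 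Taylor-expanding $\zeta \mapsto H_k(q_{1-\alpha_0};\zeta)$ at $\zeta = 0$, whose derivative is $-\mu$ by the definition of $\mu$ in Theorem \ref{lowerbound3}, turns these into the level and power formulas in the statement; combined with Step 1 this yields the corollary.

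The one genuinely delicate point — where I would concentrate the effort — is the uniformity needed in Step 1: Theorem \ref{ratestest} is stated at the fixed model $F$, so I must verify that its proof uses only Conditions \ref{cond:boundedpsi}--\ref{cond:hessian} (imposed over all of $\mathfrak X^n$) together with the uniform $\sqrt n$-asymptotic normality of $U$ that is part of the corollary's hypotheses, and in particular that the $O_P(K_n\sqrt n)$ control of $\gamma(\alpha,F_n)$ persists along the shrinking-contamination sequences rather than only under $F$. Everything else reduces to routine Slutsky and Taylor arguments.
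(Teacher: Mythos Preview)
Your proposal is correct and follows essentially the same approach as the paper: the corollary is stated as a direct consequence of \cite[Theorems 1--3]{ronchettiandtrojani2001} together with Theorem \ref{ratestest}, and you have unpacked precisely these two ingredients. Your explicit attention to the uniformity issue in Step 1 --- that the $o_P(1)$ conclusion of Theorem \ref{ratestest} must hold along the shrinking-contamination sequences and not just at the fixed model --- is a genuine point the paper leaves implicit, and your justification via Conditions \ref{cond:boundedpsi}--\ref{cond:hessian} holding over all of $\mathfrak X^n$ together with the assumed uniform asymptotic normality of $U$ is the right way to close it.
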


\subsection{Accounting for the change of variance sensitivity}
\label{CVF}

In practice the standardizing matrices $M(T,F)$, $U(T,F)$ and $V(T,F)$ are estimated, so the actual form of the functional $U$ defining the test functional is 
$$ U(F_n)=S(F_n)^{-1/2}\tilde{T}(F_n),$$
where $\tilde{T}$ is such that $\tilde{T}(F)=0$ and $\sqrt{n}(\tilde{T}(F_n)-\tilde{T}(F))\to_d N(0,S(F))$. The general construction of Section \ref{PI} is still valid provided additional regularity conditions on $\Psi$ hold. In particular, it remains true that $\gamma(\alpha,F)$ can be used to upper bound $\tilde{\Gamma}_n$ provided $\Big|\frac{\partial}{\partial\theta_j}\dot{\Psi}\Big|<\infty$ for all $j=1,\dots,p$. This condition implies third order infinitesimal robustness in the sense of \cite{lavecchiaetal2012}.   From a practical point of view an upper bound on $\gamma(\alpha,F_n)$ can be computed in this case using both the influence function and the change of variance function of $T$. The latter accounts for the fact the $S(F)$ is also estimated. We refer the reader to the Appendix for the precise form of the the change of variance function of general M-estimators and a more detailed discussion of the implications of estimating the variance in the noise calibration of our Gaussian mechanism.

\section{Numerical examples}
\label{simulations}

We investigate the finite sample performance of our proposals with simulated and real data.  We  focus on a linear regression setting where we obtain consistent slope parameter estimates at the model and show that our differentially private tests reach the desired nominal level and has power under the alternative even in mildly contaminated scenarios.  We first present a simulation
 experiments that shows the statistical performance of our methods in  small samples before turning to a real data example with a large sample size. For the sake of space we relegate to the Appendix  a more extended discussion about other existing methods, some complementary simulation results and  a discussion of the evaluation of the constants of Theorems 1 and 4.

\subsection{Synthetic data}

We consider a simulation setting  similar to  the one of \cite{salibianbarreraetal2016} in order to explore the behavior of our consistent differentially private estimates and  illustrate the efficiency loss incurred by them, relative to their non private counterparts. We generate the linear regression model \eqref{LM} with $\beta=(1,1,0,0)^T$, $x_i\sim N(0,V)$ and $V=\{0.5^{|j-k|}\}_{j,k=1}^4$. We illustrate the effect of small amounts of contaminated data by generating outliers in the responses as well as bad leverage points. This was done by replacing $1\%$ of the values of $y$ and $x_2$ with observations following a $N(12,0.1^2)$ and a $N(5,0.1^2)$ distribution respectively. All the results reported below were obtained over $5000$ replications and sample sizes ranging from $n=100$ to $n=1000$.

 The differentially private estimates considered here is the same Mallow's type robust regression estimator of Example 2. In particular, we consider the robust estimators of $\beta$ defined by
\begin{equation*}
\label{RLM2}
(\hat{\beta}_0,\hat{\beta},\hat{\sigma})=\argmin_{\beta,\sigma}\Big\{\sum_{i=1}^n\sigma\rho_c\Big(\frac{y_i-\beta_0+x_i^T\beta}{\sigma}\Big)w(x_i)+\kappa_cn\sigma\Big\},
\end{equation*}
where $\rho_c$ is the Huber loss function with tuning parameter $c$, $w:\mathbb{R}^p\to\mathbb{R}_{\geq 0}$ is a downweighting function and $\kappa_c=\int \min\{x^2,c^2\}\mathrm{d}\Phi(x)$ is a constant ensuring that $\hat\sigma$ is consistent. In all our simulations we set $c=1.345$ and $w(x)=\min\{1,2/\|x\|_2\}$. %{\color{red} comment on lack of invariance of this choice of weight functions}. 
This robust estimator uses Huber's Proposal 2 for the estimation of the scale parameter \citep{huberandronchetti2009}.  We computed it using the function \texttt{rlm} of the $\texttt{R}$ package ``MASS".
\begin{figure}[h]
\begin{center}
    \includegraphics[width = 5.8in, height=4in]{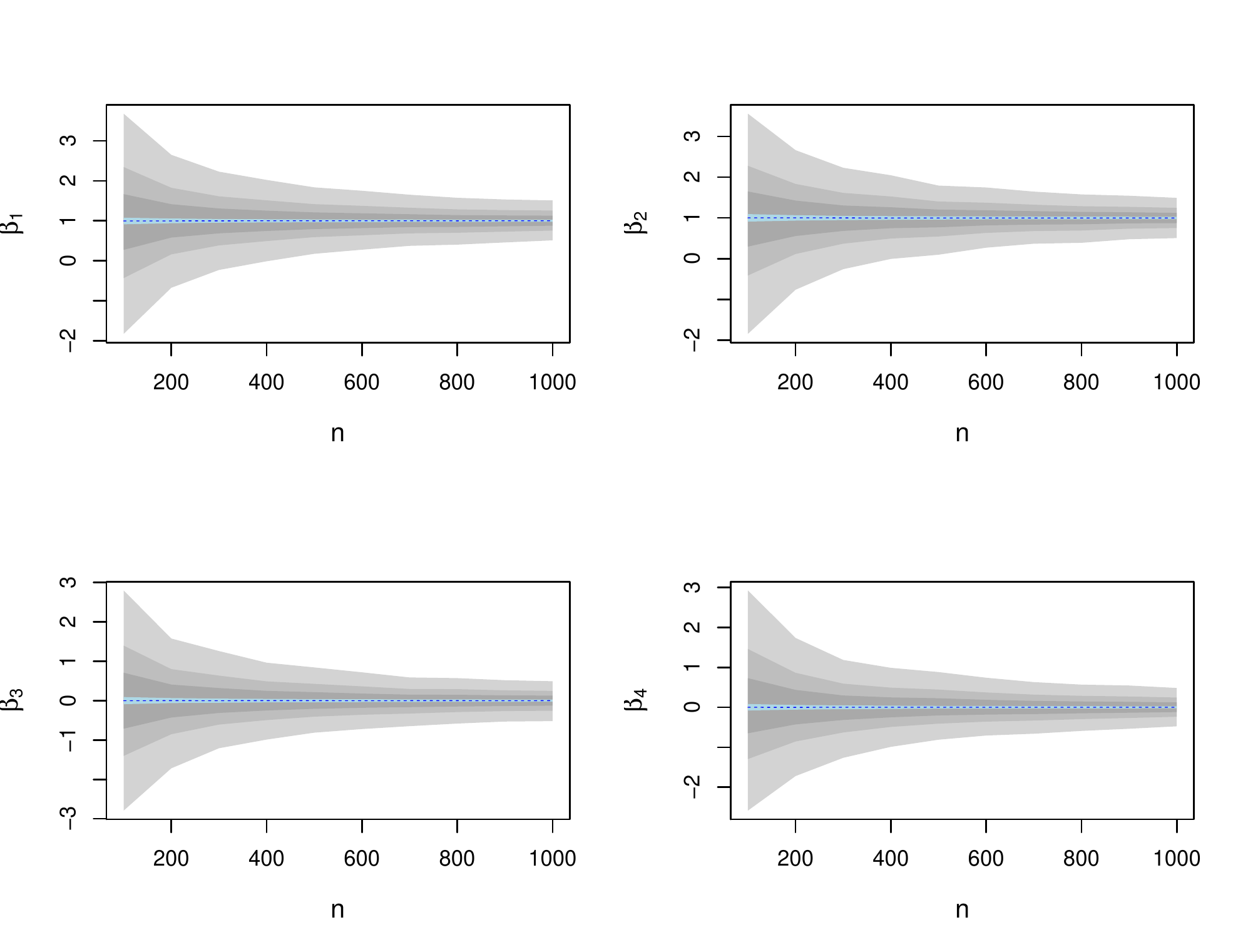}
\end{center}
\vspace{-0.5cm}
\caption{ {\small The plots show the componentwise estimation error of the
 parameter $\beta$ for clean data sets ranging from size $n=100$ to $n=1000$. The dotted dark blue line shows the median estimated value of the target robust estimator while the light blue shaded area give pointwise quartiles of the same estimator. The larger shaded gray areas give the pointwise quartiles of the estimated differentially private estimators with privacy parameters $\varepsilon=\{0.2,0.1,0.05\}$.} }
\label{estimation}
\end{figure}
Figure \ref{estimation} shows how the level of privacy affects the performance of estimation relative to that of the target robust estimator. In particular, it illustrates the slower convergence of our differential private estimators for the range of privacy parameters  $\varepsilon=\{0.2,0.1,0.05\}$ and $\delta=1/n^2$. Figure \ref{accuracytest} shows the empirical level of the Wald statistics for testing the null hypothesis $H_0: \beta_3=\beta_4=0$ with increasing sample sizes and nominal level of $5\%$. We see that all the tests have good empirical coverage and that as expected the differentially private tests are not too sensitive to the presence of a small amount of contamination. 
Interestingly, the empirical levels of the robust test and the differentially private one are nearly identical when the privacy parameters  $\varepsilon=\{1,0.1\}$ and $n\geq 200$. When we choose the very stringent $\varepsilon=0.001$ the noise added to the target p-value is so large that the resulting test amounts to flipping a coin.

In order to explore the power of our tests we set the regression parameter $\beta$ to $(1,1,\nu,0)^T$, where $\nu$ varied in the range $[-0.5,0.5]$. As seen in Figure \ref{powerstability} (a) the power function of the three tests considered is almost indistinguishable when the data follows the normal model \eqref{LM}. Figure \ref{powerstability} (b) shows that the power functions of the robust Wald tests and the derived differentially private test remain almost  identical to the one they have without contamination. This reflects the power function stability result established in Theorem \ref{powerexpansion}. From the same figure,  we clearly see that  the power function of the Wald test constructed using least squares estimator is shifted as a result of a small amount of contamination. 
\begin{figure}[h!]
\begin{center}
    \includegraphics[width = 5in, height=2.7in]{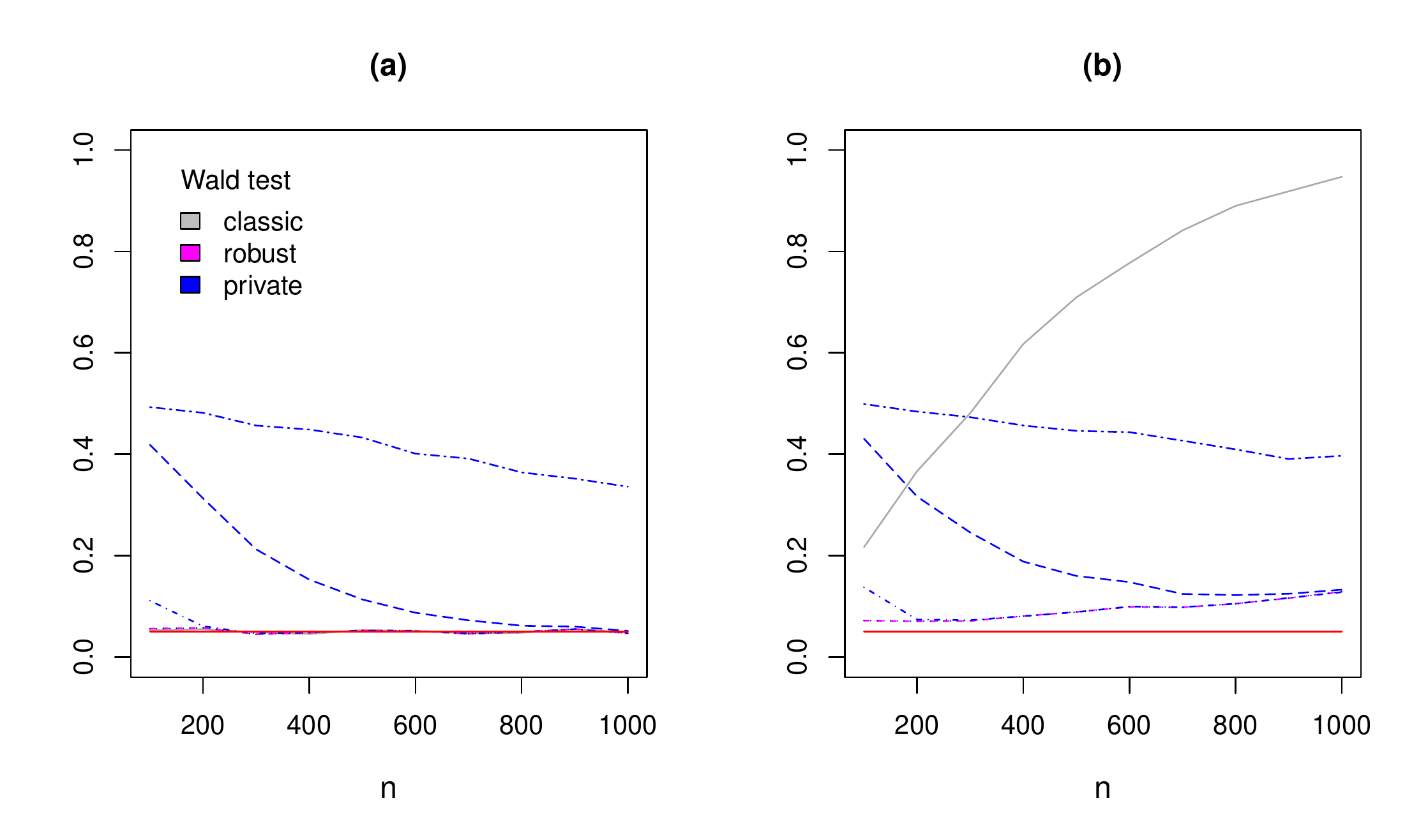}
\end{center}
\vspace{-0.5cm}
\caption{ {\small (a) shows the convergence of our Wald statistic to the nominal level $0.05$  at the model while (b) shows its behavior under $1\%$ contamination. We report four empirical differentially private level curves: dotted lines, $\varepsilon=1$; dash-dotted lines, $\varepsilon=0.1$; dash-dotted lines, $\varepsilon=0.01$, two-dashed lines, $\varepsilon=0.001$.}}
\label{accuracytest}
\end{figure}
\begin{figure}[h!]
\begin{center}
    \includegraphics[width = 5in, height=2.7in]{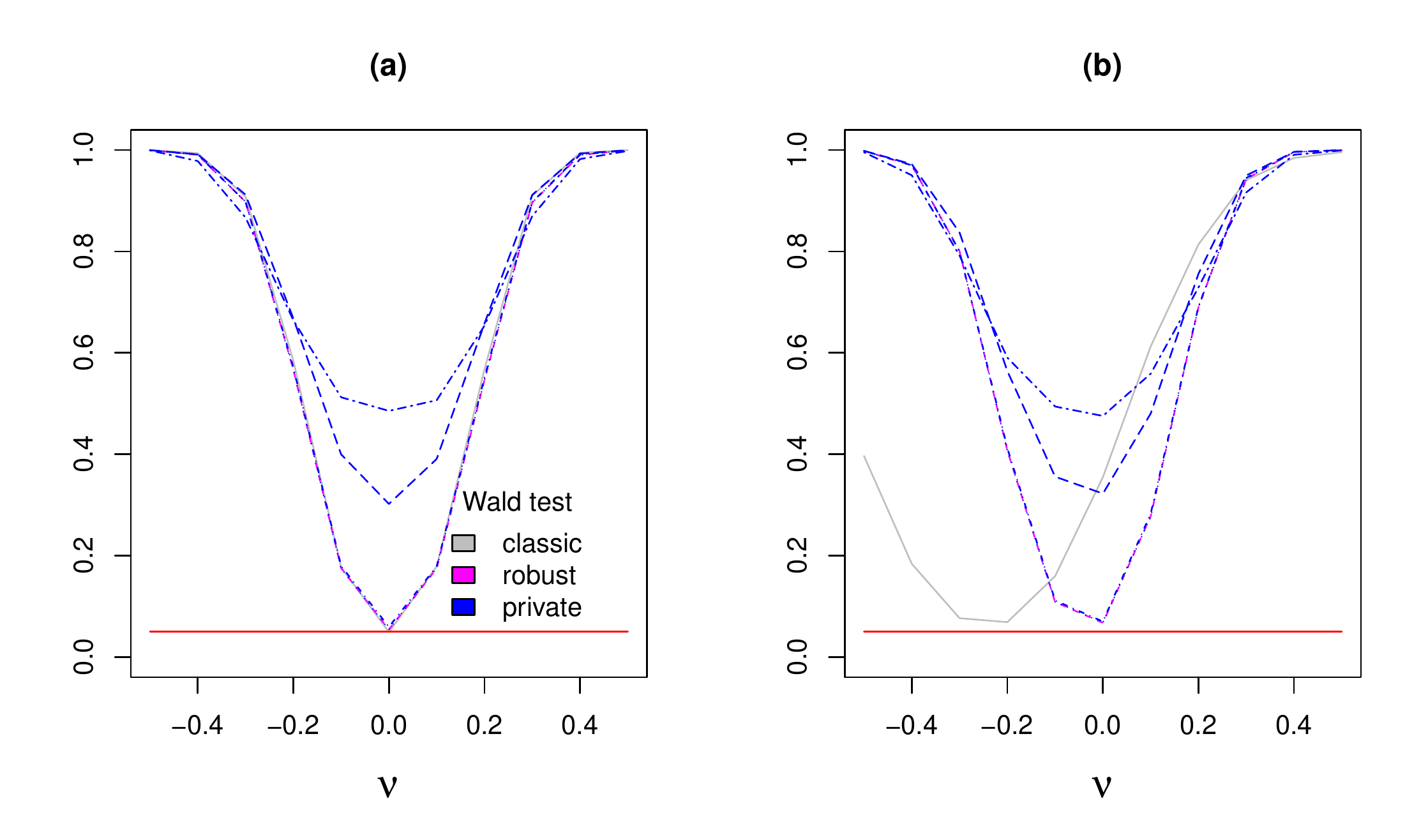}
\end{center}
\vspace{-0.5cm}
\caption{ {\small (a) shows the power function of our Wald statistic at the model when $n=200$ and $\beta_3\in[-0.5,0.5]$; (b) shows its behavior under $1\%$ contamination.  We report four empirical differentially private power curves: dotted lines, $\varepsilon=1$; dash-dotted lines, $\varepsilon=0.1$; dash-dotted lines, $\varepsilon=0.01$, two-dashed lines, $\varepsilon=0.001$.}}
\label{powerstability}
\end{figure}

\newpage
\subsection{Application to housing price data}

We revisit the housing price data set considered in \cite{lei2011}. The data consist of $348'189$ houses sold in the San Francisco Bay Area Between $2003$
and $2006$, for which we have the price, size, year of transaction, and county in which the house is located. The data set has two continuous covariates (price and size), one ordinal variable with 4 levels (year), and one categorical variable (county) with 9 levels. We exclude the observations with missing entries and follow the preprocessing suggested in \cite{lei2011}, i.e. we filter out data points with price outside the range of $\$10^5\sim\$9\times 10^5 $ or with size larger than $3'000$ squared feet. After preprocessing, we have $250'070$ observations and the county variables has $6$ levels after combination.   We also consider the same data without filtering price and size, in which case we are left with $286'537$ observations. 
We fitted a simple linear regression model in order to predict the housing price using ordinary least squares, a robust estimator and differentially private estimators. We computed the private estimator described in 5.1  as well as the differentially private M-estimators based on a perturbed histogram with enhanced thresholding as in \cite{lei2011}. We assess the performance of the differentially private regression coefficients by comparing them with their non-private counterparts. More specifically, we look at the componentwise relative deviance from the non-private estimates $d_j=|\hat{\beta}^{DP}_j/\hat{\beta}_j-1|$ where $\hat{\beta}_j$ stands for the $j$th regression coefficient of either the ordinary least squares or the robust estimator, and $\hat{\beta}^{DP}_j$ is its differentially private counterpart. In order to account for the randomness of the Gaussian mechanism, we report the mean square error of the deviations $d_j$ obtained over $500$ realizations. The results are summarized in Tables \ref{table1} and \ref{table2}.
\begin{table}[h]
\begin{center}
\caption{ {\small Linear regression coefficients using the Bay housing data after preprocessing. The second and third columns give the regression coefficients obtained by ordinary least squares and the robust Mallow's estimator without privacy guarantees. We compare the performance of their differentially private counterparts using the perturbed histogram approach and our Gaussian mechanism for a fixed privacy level $\varepsilon=0.1$. The reported number is the componentwise root mean square relative error over 1000 realizations}.\label{table1}}
% \resizebox{14.5cm}{!}{
\begin{tabular}{c c c |c c c  } 
       &   & & \multicolumn{3}{c}{\bf $\varepsilon=0.1$  }    \\

\hline
 {\bf Method} & {\bf OLS}  & {\bf Rob}& {\bf PH$_{OLS}$ }& {\bf PH$_{Rob}$ }&   {\bf DP}  \\ 
\hline
Intercept &135141  &  118479     & 8.9 &  10.4  & 1.4$\times 10^{-4}$     \\ 
Size      &  209   &  216        & 4.0  &  5.1   & 7.3$\times 10^{-2}$     \\
Year      & 56375  &  58136      & 2.6  &  5.2   & 2.8$\times 10^{-4}$     \\ 
County 2  &-53765  &  -59605     & 8.1  &  7.6   & 2.9$\times 10^{-4}$     \\
County 3  &146593  &  149202     & 2.7  &  3.8   & 1.1$\times 10^{-4}$     \\ 
County 4  &-27546  &  -29681     & 37.7 &  28.4  & 5.2$\times 10^{-4}$     \\
County 5  & 45828  &  41184      & 7.8  &  16.5  & 4.1$\times 10^{-4}$     \\ 
County 6  &-140738 &  -139780    & 3.6  &  7.7   & 1.1$\times 10^{-4}$     \\
\hline
\end{tabular}
% }
\end{center}
\end{table}
\begin{table}[h]
\begin{center}
\caption{ \small{Linear regression coefficients using the raw Bay housing data without preprocessing. The reported numbers are as in Table \ref{table1}}. \label{table2}}
% \resizebox{14.5cm}{!}{
\begin{tabular}{c c c |c c c} 
       &   & & \multicolumn{3}{c}{\bf $\varepsilon=0.1$  }    \\

\hline
 {\bf Method} & {\bf OLS}  & {\bf Rob}& {\bf PH$_{OLS}$ }& {\bf PH$_{Rob}$ }&   {\bf DP} \\ 
\hline
Intercept       &  456344  &  101524  & 33.4  & 28.6  &  1.5$\times 10^{-4}$     \\ 
Size            &  0.5     &  229     & 247.1 & 229.3 &  6.2$\times 10^{-2}$     \\
Year            &  71241   &  65170   &  87.8 & 85.7  &  2.2$\times 10^{-4}$    \\ 
County 2        &  -11261  &  -53727  & 416.8 & 376.9 &  2.9$\times 10^{-4}$    \\
County 3        &  275058  &  196967  & 82.4  & 80.7  &  7.5$\times 10^{-5}$    \\ 
County 4        &  -16425  &  -29337  & 569.0 & 519.1 &  4.8$\times 10^{-4}$    \\
County 5        &  98775   &  57524   & 101.9 & 95.9  &  2.6$\times 10^{-4}$    \\ 
County 6        &  -149027 &  -152499 & 143.3 & 141.2 &  9.2$\times 10^{-5}$    \\
\hline
\end{tabular}
% }
\end{center}
\end{table}

It is interesting to notice that with the preprocessed data the least squares fit and the robust fit are very similar. However with the raw data, the large unfiltered values of price and size affect to a greater extent the estimator of \cite{lei2011}. The accuracy of this estimator also deteriorates for the raw data as reflected by the larger mean squared deviations obtained in this case. On the other hand, our differentially private estimators give similar results for both preprocessed and raw data, in terms of values of the fitted regression coefficients and mean squared deviations from the target robust estimates. This is a particularly  desirable feature when privacy is an issue since researchers are likely to have limited access to the data and hence carrying out a careful preprocessing might not be possible. Note also that for the same level of privacy $\varepsilon=10^{-1}$, our method provides much more accurate estimation. The poorer performance of the histogram estimator is to be expected as it suffers from the curse of dimensionality. In this particular example Lei's estimator  effectively reduces the sample size to only 2400 pseudo observations that can be sampled from the differentially private estimated histogram.

We see from the reported values in Tables \ref{table1}--\ref{table2} that  the accuracy of our private estimator is comparable with that of the perturbed histogram if we impose the much stronger privacy requirement  $\varepsilon=10^{-3}$. This feature is also very appealing in practice and confirms what our theory predicts and what we observed in simulations: we can afford a fixed privacy budget with a smaller sample size or equivalently, for a  fixed sample size we can ensure a higher level of privacy using our methods. Note that given the large sample size of this data set, unsurprisingly all the covariates are significantly predictive for the non-private estimators. All univariate Wald statistics for the slope parameters in this example yield p-values smaller than $10^{-16}$ for the non-private estimators. Since our differentially private p-values give similar results we chose not to report them. 

\section{Concluding remarks}
 
 We introduced a general framework for differentially private statistical inference for parametric models based on M-estimators. The central idea of our approach is to leverage tools from robust statistics in the design of a mechanism for the release of differentially private statistical outputs. In particular, we release noisy versions of statistics of interest that we view as functionals of the empirical distribution induced by the data. We use a bound of their influence function in order to scale the random perturbation added to the desired statistics to guarantee privacy. As a result, we propose a new class of consistent differentially private estimators that can be easily and efficiently computed, and provide  a general framework for parametric hypothesis testing with privacy guarantees. 
  
 An interesting extension to be explored in the future is the construction of differentially private tests in the context of nonparametric and high-dimensional regression. In principle the idea of using the influence function to calibrate the noise added to test functionals also seems intuitive in these settings, but the technical challenge of these extensions is twofold. First, there  are no general results regarding the level influence function of tests for these settings.   Second, the influence function of nonparametric and high-dimensional penalized estimators has been formulated for a fixed tuning parameter \citep{christmannandsteinwart2007, marco2017}. Since in practice this parameter is usually chosen by some data driven criterion, it would be necessary to account for this selection step in the derivation of differentially private statistics following the approach of this work.    Another interesting direction for future research is to explore whether information-standardized influence functions could be used to derive better or more general differentially private estimators \citep{hampeletal1986, heandsimpson1992}. It would also be interesting to explore the construction of tests based on alternative approaches to differential privacy such as objective function perturbation \citep{chaudhuriandmonteleoni2008,chaudhurietal2011,kieferetal2012} or stochastic gradient descent \citep{rajkumarandargawal2012,bassilyetal2014,wangetal2015}.

\titleformat{\section}{\Large\bfseries}{\appendixname~\thesection :}{0.5em}{}

\newpage
 \pagenumbering{arabic}
    \setcounter{page}{1}
\title{\textsc{Supplementary File for ``Privacy-preserving parametric inference: a case for robust statistics''}  }
\vskip 1.5cm
\author{Marco Avella-Medina$^{*}$ }

\maketitle
\begin{appendices}

\section{proof of main results}

\subsection*{Proof of Theorem \ref{thm1}}

\begin{proof}
Our argument consists of using Lemmas  \ref{SS} and \ref{GES} to show that $\frac{\sqrt{\log(n)}}{n}\gamma(T,F_n)$ upper bounds the $\xi$-smooth sensitivity of the M-functional $T$. This suffices to show the desired result since choosing $\xi= \frac{\varepsilon}{4\{p+2\log(2/\delta)\}}$  guarantees $(\varepsilon,\delta)$-differential privacy as shown in \cite[Lemmas 2.6 and 2.9]{nissimetal2007}. 

From Lemma 2 we have that $\sqrt{\log n}\gamma(T,F_n)>\Gamma_n$ for $n\geq (C')^2 m\log(1/\delta)\{\frac{2L_n}{b\lambda_{\min}(M_{F_n})}(C_1+C_2K_n/b)\}^2$. Given Lemma 1, it therefore remains to show that 
\begin{equation}
\label{thm1.1}
\frac{\sqrt{\log n}\gamma(T,F_n)}{n} \geq \frac{1}{bn}K_n\exp\Big(-\xi C\sqrt{mn\log(2/\delta)}+\xi\Big).
\end{equation}
Further note that  $\gamma(T,F_n)\geq K_n/B_n$, where $B_n=\lambda_{\max}(M_F)$. Hence in order to show \eqref{thm1.1} it would suffice to establish that
$$\sqrt{\log(n)} \geq \frac{B_n}{b}\exp\Big(-\xi C\sqrt{mn\log(2/\delta)}+\xi\Big) $$
or equivalently 
\begin{equation}
\label{thm1.2}
2C\xi\sqrt{mn\log(2/\delta)}-2\xi-2\log(B_n/b) \geq -\log\log(n).
\end{equation}
Since  $\xi\leq \frac{\varepsilon}{4\{p+2\log(2/\delta)\}}\leq 1$, the left hand side of \eqref{thm1.2} will be nonnegative if   
\begin{equation*}
\label{thm1.3}
n\geq \Big\{ 1+\frac{\log(B_n/b)}{\xi}\Big\}^2\frac{1}{C^2m\log(2/\delta)}\geq \Big[ 1+\frac{4\{p+2\log(2/\delta)\}\log(B_n/b)}{\varepsilon}\Big]^2\frac{1}{C^2m\log(2/\delta)}
\end{equation*}
which holds by assumption.  We have thus established  \eqref{thm1.1} and hence that $\frac{\sqrt{\log(n)}}{n}\gamma(T,F_n)$ upper bounds the smooth sensitivity of $T$. Therefore the Gaussian mechanism with scaling $ \gamma(T,F_n)\frac{5\sqrt{2\log(n)\log(2/\delta)}}{\varepsilon n}$ guarantees $(\varepsilon, \delta)$-differential privacy \cite[Lemmas 2.6 and 2.9]{nissimetal2007}. 
\end{proof}

In addition to Conditions \ref{cond:boundedpsi}--\ref{cond:smoothness} discussed in the main document, the statements of Lemmas  \ref{SS} and \ref{GES} require three additional definitions introduced in \cite{chaudhuriandhsu2012}. The first two are fixed scale versions  of the influence function and the gross error sensitivity, i.e. for a fixed  $\rho>0$, we define
$$\IF_{\rho}(x;T,F):=\frac{T((1-\rho)F+\rho\delta_x)-T(F)}{\rho} $$
and
$$\gamma_{\rho}(T,F):=\sup_{x\in\mathfrak{X}}\|\IF_{\rho}(x;T,F)\| $$
The third  important quantity  appearing in our analysis is the supremum, over a Borel-Cantelli type neighborhood, of  the gross-error sensitivity  i.e.
\begin{equation}
\label{Gamma}
\Gamma_n:=\sup\bigg\{ \gamma_{1/n}(T,G) : d_{\infty}(F_n,G)\leq C\sqrt{\frac{m\log(2/\delta)}{n}} \bigg\}. 
\end{equation}
We are now ready to state the two main auxiliary lemmas.
 \begin{lemma}
\label{SS}
Assume Conditions \ref{cond:boundedpsi} and \ref{cond:hessian} hold. Then
$$\SS_{\xi}(T,\mathcal{D}(F_n))\leq \max\bigg\{\frac{2\Gamma_n}{n}, \frac{1}{bn}K_n\exp\Big(-C\xi \sqrt{mn\log(2/\delta)}+\xi\Big)\bigg\},$$
where $C$ is as in \eqref{Gamma}.
\end{lemma}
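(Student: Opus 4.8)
The plan is to bound the local sensitivity at an arbitrary neighbouring data set and then absorb the weight $e^{-\xi d_H}$ by splitting according to Hamming distance. The crucial first observation is that a one-point change realises a \emph{change of contaminant} in a $1/n$-contamination: if $\mathcal D'$ and $\mathcal D''$ differ only in position $i$, with values $a$ and $b$ there, and $H$ denotes the $(n-1)$-point empirical obtained by deleting position $i$ from either, then the empirical measures of $\mathcal D'$ and $\mathcal D''$ equal $(1-\tfrac1n)H+\tfrac1n\delta_a$ and $(1-\tfrac1n)H+\tfrac1n\delta_b$. Hence, by the triangle inequality and the definition of $\IF_{1/n}$,
\[
\|T(\mathcal D')-T(\mathcal D'')\|\le \tfrac1n\|\IF_{1/n}(a;T,H)\|+\tfrac1n\|\IF_{1/n}(b;T,H)\|\le \tfrac2n\gamma_{1/n}(T,H),
\]
so $\LS(T,\mathcal D')\le \tfrac2n\max_H\gamma_{1/n}(T,H)$, the maximum being over the deletions $H$ of a single point of $\mathcal D'$.

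Next I would control $\gamma_{1/n}(T,H)$ in two regimes, according to $k=d_H(\mathcal D(F_n),\mathcal D')$. In the local regime: if $\mathcal D'$ differs from $\mathcal D(F_n)$ in $k$ positions, then any further deletion $H$ satisfies $d_\infty(F_n,H)\le (k+1)/(n-1)$, since each altered or deleted point shifts the empirical measure by at most $1/(n-1)$ in sup-norm; so once $k$ is below a threshold of order $C\sqrt{mn\log(2/\delta)}$, every such $H$ lies in the neighbourhood appearing in \eqref{Gamma}, whence $\gamma_{1/n}(T,H)\le\Gamma_n$ and $e^{-\xi k}\LS(T,\mathcal D')\le 2\Gamma_n/n$. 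In the far regime I would use a crude uniform bound: subtracting the estimating equations for $T(H)$ and $T((1-\tfrac1n)H+\tfrac1n\delta_x)$ and applying the fundamental theorem of calculus to $\theta\mapsto\int\Psi(\cdot,\theta)\,dH$ gives $\bar M\big(T((1-\tfrac1n)H+\tfrac1n\delta_x)-T(H)\big)=-\tfrac1{n-1}\Psi\big(x,T((1-\tfrac1n)H+\tfrac1n\delta_x)\big)$, with $\bar M$ a path-average of Jacobians of $\Psi$ with respect to $H$; Condition \ref{cond:hessian} lower bounds $\lambda_{\min}(\bar M)$ by $b$ and Condition \ref{cond:boundedpsi} bounds $\|\Psi\|$ by $K_n$, so that $\gamma_{1/n}(T,H)=O(K_n/b)$ and $\LS(T,\mathcal D')=O(K_n/(bn))$. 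Multiplying by $e^{-\xi k}$ and using that $k$ now exceeds the threshold of order $C\sqrt{mn\log(2/\delta)}$ produces the second term $\tfrac1{bn}K_n\exp(-C\xi\sqrt{mn\log(2/\delta)}+\xi)$, with the residual numerical constants and the exact placement of the threshold absorbed into the $e^{\xi}$ slack. Taking the maximum of the two regimes over all $\mathcal D'$ gives the claim.

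The main obstacle is the uniform conditioning estimate $\lambda_{\min}(\bar M)\ge b$: Condition \ref{cond:hessian} is literally a statement about $M_{G_n}$ evaluated at the M-estimate $T(G_n)$, whereas $\bar M$ averages Jacobians along the short segment between $T(H)$ and $T((1-\tfrac1n)H+\tfrac1n\delta_x)$ and with respect to the $(n-1)$-point measure $H$; bridging this requires either reading Condition \ref{cond:hessian} as valid on a shrinking neighbourhood of the M-estimates, or combining continuity of $\theta\mapsto\mathbb E[\dot\Psi(X,\theta)]$ with the $O(1/n)$ length of that segment, and some measure-theoretic care is needed since $\Psi$ is only differentiable almost everywhere. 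A secondary, bookkeeping obstacle is calibrating the Hamming threshold so that the $d_\infty$ estimate keeps $H$ inside the neighbourhood of \eqref{Gamma} in the local regime while still leaving enough decay in $e^{-\xi k}$ to match the stated exponent in the far regime.
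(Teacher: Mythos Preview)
Your approach is correct and coincides with the paper's: split by Hamming distance from $\mathcal D(F_n)$, use the deletion-as-contamination trick in the near regime to invoke $\Gamma_n$, and a crude uniform bound in the far regime. The one substantive difference is how the far-regime bound is obtained. Instead of still passing through $\gamma_{1/n}(T,H)$ and subtracting estimating equations (which produces your path-averaged Jacobian $\bar M$), the paper applies the exact integral identity of Lemma~\ref{intermediatepoint},
\[
T(G_1)-T(G_1')=\int_0^1\!\!\int\IF\bigl(x;T,(1-t)G_1+tG_1'\bigr)\,\mathrm d(G_1-G_1')(x)\,\mathrm dt,
\]
directly to the two neighbouring data sets $G_1,G_1'$, giving $\|T(G_1)-T(G_1')\|\le d_\infty(G_1,G_1')\sup_t\gamma\bigl(T,(1-t)G_1+tG_1'\bigr)\le K_n/(bn)$. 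This buys two things. First, the factor of $2$ from the deletion trick disappears, so the second term of the lemma comes out exactly as stated rather than with an extra constant to absorb. Second, it softens your ``main obstacle'': the only matrix that must be inverted is $M_{F_t}=-\mathbb E_{F_t}[\dot\Psi(X,T(F_t))]$ evaluated at the actual M-estimate $T(F_t)$ of the intermediate mixture, which is precisely the object Condition~\ref{cond:hessian} speaks about---the paper tacitly reads that condition as extending from $n$-point empiricals to such convex combinations, so no segment-averaged Jacobian ever enters. Your secondary bookkeeping concern is handled by placing the threshold at $[d_H(\mathcal D(F_n),\mathcal D(G_1))+1]/n=r_n$ with $r_n=C\sqrt{m\log(2/\delta)/n}$; the $+\xi$ in the exponent is exactly the slack from $d_H>nr_n-1$.
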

\begin{proof}
We adapt Lemma 1 in \cite{chaudhuriandhsu2012} to our setting.
We will show that for any $\mathcal{D}(G_1)\in \RR^{n\times m}$ we have that
$$e^{-\xi d_H(\mathcal{D}(F_n),\mathcal{D}(G_1))}\LS\big(T,\mathcal{D}(G_1)\big)\leq\max\Big\{2\Gamma_n/n, \frac{1}{bn}K_n\exp\big(-\xi nr_n+\xi \big)\Big\},$$
where $r_n= C\sqrt{\frac{m\log(2/\delta)}{n}}$. For this we consider two possible cases. First suppose that $[d_H\big(\mathcal{D}(F_n),\mathcal{D}(G_1)\big)+1]/n > r_n$. Letting $G_1'$ such that $d_H\big(\mathcal{D}(G_1),\mathcal{D}(G_1')\big)=1$ and taking $\rho=1$ in Lemma \ref{intermediatepoint} we get that $\LS\big(T,\mathcal{D}(G_1)\big)\leq \frac{K_n}{bn}$ since
\begin{align}
\label{lem1.1}
\|T(G_1)-T(G_1')\|\leq & \Big\|\int_0^1\int\IF(x;T,(1-t)G_1+tG_1')\mathrm{d}(G_1-G_1')\mathrm{d}t\Big\| \nonumber \\
\leq &d_\infty(G_1,G_1') \sup_{t\in[0,1]}\gamma(T,(1-t)G_1+tG_1')\nonumber \\
\leq &\frac{K_n}{bn}.
\end{align}
Therefore 
$$e^{-\xi d_H(\mathcal{D}(F_n),\mathcal{D}(G_1))}\LS\big(T,\mathcal{D}(G_1)\big)\leq \frac{K_n}{bn} \exp\big(-\xi nr_n+\xi\big).$$
Suppose now that $[d_H\big(\mathcal{D}(F_n),\mathcal{D}(G_1)\big)+1]/n \leq r_n$ and fix $\mathcal{D}(G_2)\in\RR^{n\times m}$ such that $d_H\big(\mathcal{D}(G_1),\mathcal{D}(G_2)\big)=1$. Let $j\in\{1,\dots,n\}$ be the index at which $\mathcal{D}(G_1)$ and $\mathcal{D}(G_2)$ differ.  Finally let $\mathcal{D}(G_3)\in\RR^{(n-1)\times m}$ be the data set obtained by removing the $j$th element of $\mathcal{D}(G_1)$. Then by the triangle inequality
$$d_{\infty}(F_n,G_3)\leq d_{\infty}(F_n,G_1)+d_{\infty}(G_3,G_1)\leq \big[d_H\big(\mathcal{D}(F_n),\mathcal{D}(G_1)\big)+1\big]/n\leq r_n $$
and hence $\gamma_{1/n}(T,G_3)\leq \Gamma_n$. Furthermore, using the triangle inequality we have that
\begin{align*}
\|T(G_1)-T(G_2)\|&=\|T(G_1)-T(G_3)+T(G_3)-T(G_2)\| \nonumber\\
 &=\frac{1}{n}\|\IF_{1/n}(x_j;T,G_3)-\IF_{1/n}(x_j';T,G_3)\|\nonumber\\
 &\leq \frac{2}{n}\gamma_{1/n}(T,G_3)\nonumber\\
 & \leq \frac{2\Gamma_n}{n}. \label{lem1.2}
\end{align*}
Since the last bound holds for any choice of $\mathcal{D}(G_2)$ we see that $\LS\big(T,\mathcal{D}(G_1)\big)\leq 2\Gamma_n/n$ and consequently $e^{-\xi d_H(\mathcal{D}(F_n),\mathcal{D}(G_1))}\LS\big(T,\mathcal{D}(G_1)\big)\leq 2\Gamma_n/n$.
\end{proof}
\begin{lemma}
\label{GES}
Assume  Conditions \ref{cond:boundedpsi}--\ref{cond:smoothness} hold. Then %,   with probability at least $1-\delta$
$$\Gamma_n\leq 2\gamma(T,F_n) +
C'\sqrt{\frac{m\log(2/\delta)}{n}}K_n\lambda_{\max}(M_{F_n}^{-1})\Big\{2L_n/b+\lambda_{\max}(M_{F_n}^{-1})(C_1+2C_2K_n/b)\Big\}$$
for some positive constant $C'$.
\end{lemma}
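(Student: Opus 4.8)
The plan is to adapt, in the spirit of \cite{chaudhuriandhsu2012}, the fixed–scale influence function analysis to multivariate M-functionals. Fix a distribution $G$ with $d_\infty(F_n,G)\le r_n:=C\sqrt{m\log(2/\delta)/n}$, fix $x\in\mathfrak X$, write $\rho=1/n$ and $G_\rho=(1-\rho)G+\rho\delta_x$. Writing out the M-estimating equations $\int\Psi(y,T(H))\,\dd H(y)=0$ at $H=G$ and $H=G_\rho$, subtracting, and Taylor-expanding $\Psi(y,\cdot)$ around $T(G)$ yields the exact identity
$$\IF_{1/n}(x;T,G)=\widetilde M_\rho^{-1}\,\Psi\bigl(x,T(G_\rho)\bigr),\qquad \widetilde M_\rho:=-(1-\rho)\int\!\!\int_0^1\dot\Psi\bigl(y,T(G)+t\{T(G_\rho)-T(G)\}\bigr)\,\dd t\,\dd G(y).$$
Thus everything reduces to comparing $\widetilde M_\rho$ with $M_{F_n}$ and $\Psi(x,T(G_\rho))$ with $\Psi(x,T(F_n))$.

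First I would control how far the functional moves. Exactly as in \eqref{lem1.1}, Conditions \ref{cond:boundedpsi}--\ref{cond:hessian} give $\gamma(T,H)\le K_n/b$ along any contamination segment, hence $\|T(H_1)-T(H_2)\|\le (K_n/b)\,d_\infty(H_1,H_2)$; in particular $\|T(G)-T(F_n)\|\le (K_n/b)r_n$, $\|T(G_\rho)-T(G)\|\le (K_n/b)\rho$, and $\|T(G_\rho)-T(F_n)\|\le (K_n/b)(\rho+r_n)\le (2K_n/b)r_n$ once $n$ is large enough that $\rho\le r_n$. Next I would bound $\|\widetilde M_\rho-M_{F_n}\|$ by writing it as $(\widetilde M_\rho-M_G)+(M_G-M_{F_n})$ with $M_G:=-\EE_G[\dot\Psi(X,T(G))]$: the first bracket is at most $\rho\|M_G\|+C_2\|T(G_\rho)-T(G)\|\le \rho B+C_2(K_n/b)\rho$ (the $\theta$-Lipschitz half of Condition \ref{cond:smoothness} together with Condition \ref{cond:hessian}), and the second is at most $C_2\|T(G)-T(F_n)\|+C_1 d_\infty(F_n,G)\le (C_1+C_2K_n/b)r_n$ (both halves of Condition \ref{cond:smoothness}, the distributional one invoked at the nearby $G$). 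Collecting terms and absorbing the $O(1/n)$ contributions into the $O(r_n)$ ones gives $\|\widetilde M_\rho-M_{F_n}\|\le (C_1+2C_2K_n/b)r_n$, which for $n$ above the stated threshold is at most $\tfrac12\lambda_{\min}(M_{F_n})$; a Neumann-series argument then shows $\widetilde M_\rho$ is invertible with $\|\widetilde M_\rho^{-1}\|\le 2\lambda_{\max}(M_{F_n}^{-1})$ and $\|\widetilde M_\rho^{-1}-M_{F_n}^{-1}\|\le 2\lambda_{\max}(M_{F_n}^{-1})^2\|\widetilde M_\rho-M_{F_n}\|$.

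Finally I would split
$$\IF_{1/n}(x;T,G)=M_{F_n}^{-1}\Psi(x,T(F_n))+\bigl(\widetilde M_\rho^{-1}-M_{F_n}^{-1}\bigr)\Psi(x,T(G_\rho))+M_{F_n}^{-1}\bigl(\Psi(x,T(G_\rho))-\Psi(x,T(F_n))\bigr),$$
bounding the first term by $\gamma(T,F_n)$, the second by $\|\widetilde M_\rho^{-1}-M_{F_n}^{-1}\|K_n\le 2\lambda_{\max}(M_{F_n}^{-1})^2K_n(C_1+2C_2K_n/b)r_n$ via Condition \ref{cond:boundedpsi}, and the third by $\lambda_{\max}(M_{F_n}^{-1})L_n\|T(G_\rho)-T(F_n)\|\le 2\lambda_{\max}(M_{F_n}^{-1})(L_n/b)K_n r_n$ via Conditions \ref{cond:boundedpsi}--\ref{cond:hessian}. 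Taking the supremum over $x\in\mathfrak X$ and then over all admissible $G$, and recalling $r_n=C\sqrt{m\log(2/\delta)/n}$, gives
$$\Gamma_n\le \gamma(T,F_n)+C'\sqrt{\tfrac{m\log(2/\delta)}{n}}\,K_n\lambda_{\max}(M_{F_n}^{-1})\Bigl\{\tfrac{2L_n}{b}+\lambda_{\max}(M_{F_n}^{-1})\bigl(C_1+2C_2\tfrac{K_n}{b}\bigr)\Bigr\}$$
for a suitable $C'$, and since $\gamma(T,F_n)\ge 0$ this is stronger than the claimed bound. The step I expect to be the main obstacle is the matrix-perturbation bookkeeping for $\widetilde M_\rho$: it simultaneously mixes the $(1-\rho)$ factor, the segment-averaging of $\dot\Psi$ between $T(G)$ and $T(G_\rho)$, and the change of underlying measure from $F_n$ to $G$, and one must keep it invertible with a uniformly controlled inverse — which is exactly what forces the lower bound on $n$ — while applying the two halves of Condition \ref{cond:smoothness} carefully (at $F_n$, and at the nearby $G$ thanks to closeness in $d_\infty$) to each of these pieces.
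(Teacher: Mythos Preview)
Your argument is correct and in fact slightly sharper than the paper's: you obtain a leading term of $\gamma(T,F_n)$ rather than $2\gamma(T,F_n)$. The route, however, is genuinely different. The paper proceeds in two modular steps via auxiliary Lemmas \ref{GESbound0}--\ref{gGESbound}: first it bounds $\gamma_{1/n}(T,G)\le 2\sup_{t\in[0,1/n]}\gamma(T,G_t)\le 2\gamma(T,G)+O(1/n)$ using the segment identity of Lemma \ref{intermediatepoint} (this is where the factor $2$ enters), and then separately bounds $\gamma(T,G)\le\gamma(T,F_n)+O(r_n)$ by comparing influence functions at $G$ and $F_n$ through a Neumann expansion of $M_G^{-1}-M_{F_n}^{-1}$. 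You instead write down an \emph{exact} finite-$\rho$ identity $\IF_{1/n}(x;T,G)=\widetilde M_\rho^{-1}\Psi(x,T(G_\rho))$ with an averaged Jacobian $\widetilde M_\rho$, and then carry out a single perturbation step comparing $\widetilde M_\rho$ directly with $M_{F_n}$ and $\Psi(\cdot,T(G_\rho))$ directly with $\Psi(\cdot,T(F_n))$.

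What each buys: your one-shot approach avoids the loose triangle-inequality step in Lemma \ref{GESbound0} and hence the factor $2$, and packages the whole argument into one Neumann series rather than two. The paper's modular approach is more reusable: the intermediate lemmas are stated for a generic pair $(F,G)$ and are recycled essentially verbatim for the level functionals $\alpha(\cdot)$ in Lemmas \ref{qGESbound0}--\ref{gqGESbound}, which feed into Theorem \ref{thm3}; your direct expansion would need to be redone for that setting. One small bookkeeping point: when you collapse the $O(1/n)$ pieces into $O(r_n)$, the bound $\|\widetilde M_\rho-M_{F_n}\|\le(C_1+2C_2K_n/b)r_n$ quietly drops a $\rho B$ contribution from the $(1-\rho)$ factor; this is harmless since it is absorbed into the unspecified constant $C'$, but it is worth noting that the displayed curly-brace expression is matched only up to such lower-order constants.
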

\begin{proof}
First note that by  Lemma \ref{GESbound}
\begin{equation}
\label{lem3.1}
\gamma_{1/n}(T,G)\leq 2\gamma(T,G)+O\Big[\frac{K_n\lambda_{\max}(M_G^{-1})}{n}\Big\{2L_n/b+\lambda_{\max}(M_{G}^{-1})(C_1+2C_2K_n/b)\Big\}\Big]
\end{equation}
as long as $M_G$ is positive definite for  all $G\in \{H:d_{\infty}(F_n,H)\leq C\sqrt{\frac{m\log(2/\delta)}{n}}\}$.
Provided this condition holds, it would suffice to show that
$$\gamma(T,G)\leq \gamma (T,F_n)+O\bigg[\sqrt{\frac{m\log(2/\delta)}{n}}K_n\lambda_{\max}(M_{F_n}^{-1})\Big\{2L_n/b+\lambda_{\max}(M_{G}^{-1})(C_1+2C_2K_n/b)\Big\}\bigg].$$
 This last inequality is a consequence of Lemma \ref{gGESbound}. 
\end{proof}

\subsection*{Proof of Theorem \ref{rates} }
\begin{proof}
First note that  Theorem 3.1.1 in \cite{vershynin2018} guarantees that a $p$-dimensional standard Gaussian random variables concentrates around $\sqrt{p}$. Specifically, for $Z\sim N_p(0,I)$ and with probability $1-\tau$, we have that $\|Z\|-\sqrt{p} \leq C\sqrt{\log(1/\tau)}$ for some universal constant $C$.  Applying this result to our Gaussian mechanism shows that with probability $1-\tau$
$$\|A_T(F_n)-T(F_n)\|\leq   C\gamma(T,F_n)\frac{5\sqrt{2\log(n)\log(2/\delta)} }{\varepsilon n}(\sqrt{p}+\sqrt{\log(1/\tau)}).$$
The first claimed result  follows from the above expression since by Conditions \ref{cond:boundedpsi} and \ref{cond:hessian} we have that $\gamma(T,F_n)=O(K_n)$. The second claim is verified by further noting that
\begin{align*}
A_T(F_n)-T(F)&=T(F_n)-T(F)+\gamma(T,F_n)\frac{5\sqrt{2\log(n)\log(2/\delta)} }{\varepsilon n}Z \\
&=T(F_n)-T(F)+O_p\Big(\frac{K_n\sqrt{\log(n)\log(2/\delta)} }{\varepsilon n}\Big) \\
&=T(F_n)-T(F)+o_p(1/\sqrt{n}),
\end{align*}
where the last equality leveraged the assumed scaling $\frac{K_n\sqrt{\log(n)\log(1/\delta)}}{\varepsilon \sqrt{n}}=o(1)$.

\end{proof}

\subsection*{Proof of Corollary \ref{truncation}}
\begin{proof}
It is easy to check that $T_n$ satisfies the conditions of Theorem \ref{rates} and that $T_n$ converges to the maximum likelihood M-functional.
\end{proof}

\subsection*{Proof of Theorem \ref{lowerbound2}}
\begin{proof}
First note that
\begin{align*}
\gamma_{\rho}(T,F) & =  \sup_z\frac{1}{\rho}\big\|T( (1-\rho)F+\Delta_x)-T(F)\big\| \\
&\geq\frac{1}{\rho}\big\|T( (1-\rho)F+\Delta_x)-T(F)\big\|\\
&= \frac{1}{\rho} \big\|\rho\int \IF(z;T,F)\mathrm d(\Delta_x-F)+o(\rho)\big\| \\
&= \frac{1}{\rho} \big\|\rho\IF(x;T,F)+o(\rho)\big\| \\
&\geq \big\|\IF(x;T,F)\big\|+o(1) ,
\end{align*}
where a von Mises expansion justifies the second equality and the third one follows from \eqref{IF}. Taking the supremum over $x$ in the last inequality we obtain
$$\gamma_{\rho}(T,F)\geq \gamma(T,F)+o(1). $$
The proof is completed by incorporating this result in the lower bound provided by Proposition \ref{lowerbound} below.
\end{proof}

Proposition \ref{lowerbound} is a generalization of Theorem 1 in \cite{chaudhuriandhsu2012} and it constitutes a somehow more general result than Theorem \ref{lowerbound2} since it gives a lower bound for any differentially private algorithm without restricting $T(F)$ to be an M-functional.

\begin{proposition}
\label{lowerbound}
Let $\varepsilon\in (0,\frac{\log 2}{2})$ and $\delta\in(0,\frac{\varepsilon}{17})$. Let $\mathfrak{F}$ be the family of all distributions over $\mathfrak{X}\subset\mathbb{R}^m$ and let $A$ be any $(\varepsilon,\delta)$-differentially private algorithm approximating $T(F)$, where $T:\mathfrak{F}\mapsto \mathbb R^k$ with $k\in\{1,\dots,p\}$. For all $n\in\mathbb{N}$ and $F\in\mathfrak{F}$, there exists a radius $\rho=\rho(n)=\frac{1}{n}\lceil\frac{\log 2}{2\varepsilon}\rceil$ and a distribution $G\in\mathfrak{F}$ with $d_{TV}(F,G)\leq \rho$, such that either
$$\mathbb{E}_{F_n}\mathbb{E}_{A}\Big[\|A(\mathcal{D}(F_n))-T(F)\|\Big]\geq \frac{\rho}{16}\gamma_{\rho}(T,F)$$
or
$$\mathbb{E}_{G_n}\mathbb{E}_{A}\Big[\|A(\mathcal{D}(G_n))-T(G)\|\Big]\geq \frac{\rho}{16}\gamma_{\rho}(T,F),$$
where $F_n$ and $G_n$ denote empirical distributions obtained from $F$ and $G$ respectively.
\end{proposition}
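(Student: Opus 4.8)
The plan is to generalise to vector-valued $T$ the two-point construction behind Theorem 1 of \cite{chaudhuriandhsu2012}: I would exhibit a contamination $G$ of $F$ that (i) pushes $T(G)$ a distance of order $\rho\,\gamma_{\rho}(T,F)$ away from $T(F)$, yet (ii) is statistically nearly indistinguishable from $F$ under $n$ i.i.d.\ draws once $A$ is differentially private, and then observe that no $A$ can be accurate at both $F$ and $G$. We may assume $\gamma_{\rho}(T,F)>0$; the case $\gamma_{\rho}(T,F)=\infty$ follows by running the argument with an arbitrary finite $M$ in place of $\gamma_{\rho}(T,F)$ and letting $M\to\infty$. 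Fix a small $\eta\in(0,1)$, pick $x^{*}\in\mathfrak{X}$ with $\|\IF_{\rho}(x^{*};T,F)\|\ge(1-\eta)\gamma_{\rho}(T,F)$, and set $G:=(1-\rho)F+\rho\Delta_{x^{*}}$. Then $G\in\mathfrak{F}$, $d_{TV}(F,G)=\rho\,d_{TV}(\Delta_{x^{*}},F)\le\rho$, and by the definition of $\IF_{\rho}$ one has $T(G)-T(F)=\rho\,\IF_{\rho}(x^{*};T,F)$, so $\|T(F)-T(G)\|\ge(1-\eta)\rho\,\gamma_{\rho}(T,F)$. It then suffices to show that for this $G$ at least one of the two expectations in the statement is $\ge\tfrac{\rho}{16}\gamma_{\rho}(T,F)$.

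The crux is an indistinguishability lemma: with $k_{0}:=n\rho=\lceil\tfrac{\log 2}{2\varepsilon}\rceil$, for every measurable output set $S$
$$\mathbb{P}\big(A(\mathcal{D}(G_n))\in S\big)\ \le\ a\,\mathbb{P}\big(A(\mathcal{D}(F_n))\in S\big)+c$$
for absolute constants $a<4$ and $c<\tfrac{3}{17}$. I would prove this by coupling an i.i.d.\ $G$-sample to an i.i.d.\ $F$-sample coordinatewise: draw independent $Y_i\sim F$ and flags $B_i\sim\mathrm{Bernoulli}(\rho)$, take the $F$-sample to be $(Y_1,\dots,Y_n)$ and the $G$-sample to equal $x^{*}$ wherever $B_i=1$ and $Y_i$ otherwise. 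The two data sets then differ in at most $K:=\sum_i B_i\sim\mathrm{Bin}(n,\rho)$ coordinates, $(Y_i)$ is independent of $K$, and $\mathbb{E}[K]=k_{0}$. Conditioning on $K$ and iterating $(\varepsilon,\delta)$-differential privacy along a chain of $K$ single swaps (group privacy; \cite[Ch.~2--3]{dworkandroth2014}) gives
$$\mathbb{P}\big(A(\mathcal{D}(G_n))\in S\mid K\big)\ \le\ e^{K\varepsilon}\,\mathbb{P}\big(A(\mathcal{D}(F_n))\in S\big)+\tfrac{e^{K\varepsilon}-1}{e^{\varepsilon}-1}\,\delta ,$$
and averaging over $K$ with the binomial moment generating function $\mathbb{E}[e^{K\varepsilon}]\le\exp\!\big(k_{0}(e^{\varepsilon}-1)\big)$ yields the claim with $a=\mathbb{E}[e^{K\varepsilon}]$ and $c=\tfrac{\delta}{e^{\varepsilon}-1}\big(\mathbb{E}[e^{K\varepsilon}]-1\big)$. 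This is where the hypotheses enter: $e^{\varepsilon}-1<2\varepsilon$ on $(0,\log 2)$ and $k_{0}\varepsilon<\tfrac{\log 2}{2}+\varepsilon<\log 2$ give $k_{0}(e^{\varepsilon}-1)<2\log 2$, hence $a<e^{2\log 2}=4$, and then $\delta<\tfrac{\varepsilon}{17}$ gives $c<\tfrac{\delta}{\varepsilon}(e^{2\log 2}-1)<\tfrac{3}{17}$; this is precisely the role of $\varepsilon<\tfrac{\log2}{2}$, $\delta<\tfrac{\varepsilon}{17}$ and of the specific radius $\rho(n)$.

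To conclude, let $\Phi$ assign to an output $a$ the label $F$ or $G$ according to which of $T(F),T(G)$ is nearer to $a$ (ties to $F$); the set $\{\Phi=G\}=\{a:\|a-T(G)\|<\|a-T(F)\|\}$ is a half-space, hence measurable. By the triangle inequality, any $a$ with $\Phi(a)=G$ has $\|a-T(F)\|\ge\tfrac12\|T(F)-T(G)\|\ge\tfrac{(1-\eta)\rho}{2}\gamma_{\rho}(T,F)$, and symmetrically for $\Phi(a)=F$; hence both expectations in the statement are bounded below by $\tfrac{(1-\eta)\rho}{2}\gamma_{\rho}(T,F)$ times the probability that $\Phi$ returns the wrong label. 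Writing $p_{F}:=\mathbb{P}(\Phi(A(\mathcal{D}(F_n)))=G)$ and $p_{G}:=\mathbb{P}(\Phi(A(\mathcal{D}(G_n)))=F)$ and applying the lemma with $S=\{\Phi=G\}$ gives $1-p_{G}\le a\,p_{F}+c$, so $\max(p_{F},p_{G})\ge\tfrac{1-c}{a+1}$. Thus the larger of the two expectations is at least $\tfrac{(1-\eta)\rho}{2}\gamma_{\rho}(T,F)\cdot\tfrac{1-c}{a+1}$, and since $\tfrac{1-c}{2(a+1)}>\tfrac{1}{16}$ with $a<4$, $c<\tfrac{3}{17}$ in hand, choosing $\eta$ small enough (and then $M\to\infty$ if $\gamma_{\rho}=\infty$) makes this $\ge\tfrac{\rho}{16}\gamma_{\rho}(T,F)$, which is the assertion. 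I expect the main obstacle to be the constant bookkeeping in the indistinguishability lemma — controlling $\mathbb{E}[e^{K\varepsilon}]$ and the residual $\delta$-term so that the final inequality $\tfrac{(1-\eta)(1-c)}{2(a+1)}\ge\tfrac{1}{16}$ survives — which is exactly the purpose of the numerical restrictions on $\varepsilon,\delta$ and of the choice $\rho(n)=\tfrac1n\lceil\tfrac{\log 2}{2\varepsilon}\rceil$.
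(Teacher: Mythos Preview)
Your argument is correct, but it follows a genuinely different route from the paper's. The paper does not reprove the full result: it observes that the multivariate case follows from Theorem~1 of \cite{chaudhuriandhsu2012} once one supplies a multivariate version of their Lemma~3, and then proves only that lemma (Lemma~\ref{lowerboundlemma} in the appendix). That lemma is \emph{deterministic}: for two fixed data sets $\mathcal{D},\mathcal{D}'$ with $d_H(\mathcal{D},\mathcal{D}')\le n_0$ it shows $\mathbb{E}_A[\|A(\mathcal{D})-\tau\|+\|A(\mathcal{D}')-\tau'\|]\ge\gamma\|\tau-\tau'\|$ via two \emph{disjoint hyperrectangles} $I,I'$ centred at $\tau,\tau'$, combined with the group-privacy bound $\mathbb{P}(A(\mathcal{D})\in S)\ge e^{-n_0\varepsilon}\mathbb{P}(A(\mathcal{D}')\in S)-\delta/(1-e^{-\varepsilon})$. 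The passage from this deterministic two-point lemma to the statement about i.i.d.\ samples $F_n,G_n$ is left to the Chaudhuri--Hsu argument.

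Your proof is more self-contained and replaces both ingredients. First, instead of the hyperrectangle construction you use the nearest-neighbour half-space $\{\Phi=G\}$; this is a cleaner way to separate $T(F)$ from $T(G)$ in $\mathbb{R}^k$ and immediately gives the factor $\tfrac12\|T(F)-T(G)\|$ on the error. Second, and more substantively, you bypass the deterministic lemma altogether: you couple $F_n$ and $G_n$ so that they differ in a random $K\sim\mathrm{Bin}(n,\rho)$ coordinates, apply group privacy conditionally on $K$, and then average using the binomial moment generating function $\mathbb{E}[e^{K\varepsilon}]\le\exp(k_0(e^\varepsilon-1))$. This yields the averaged indistinguishability inequality directly, with the constants $a<4$, $c<3/17$ falling out of the hypotheses $\varepsilon<\tfrac{\log 2}{2}$, $\delta<\varepsilon/17$ and the choice $k_0=\lceil\tfrac{\log 2}{2\varepsilon}\rceil$; the final check $a+8c<7$ gives $\tfrac{1-c}{2(a+1)}>\tfrac{1}{16}$ with a uniform margin, so a fixed small $\eta$ suffices. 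The paper's modular route has the advantage that the deterministic Lemma~\ref{lowerboundlemma} is reusable and mirrors the existing literature; your route is shorter, handles the randomness of $F_n,G_n$ in one stroke, and makes the role of the numerical constraints on $(\varepsilon,\delta,\rho)$ more transparent.
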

\begin{proof}
The claimed result can be established by extending to our multivariate setting the arguments provided in Theorem 1 of \cite{chaudhuriandhsu2012}. The only missing ingredient is a multivariate version of their Lemma 3 that we derive in Lemma \ref{lowerboundlemma} below.
\end{proof}

We will use the following result in the proof Lemma \ref{lowerboundlemma}.
\begin{lemma}
\label{composition}
Let $A:\mathbb R^{n\times m}\mapsto\mathbb R^k$ for $k\in\{1,\dots,p\}$ be any $(\varepsilon,\delta)$-differentially private algorithm, and let $D\in\cX^{n\times m}$ and $\mathcal{D}'\in\cX^{n\times m}$ be two data sets which differ by less than $n_0<n$ entries. Then, for any $S$
$$\PP[A(\mathcal{D})\in S]\geq e^{-n_0\varepsilon}\PP[A(\mathcal{D}')\in S]-\frac{\delta}{1-e^{-\varepsilon}}.$$
\end{lemma}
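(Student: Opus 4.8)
The plan is to prove this by the standard ``group privacy'' chaining argument. First I would record the reversed one-step inequality: applying the definition of $(\varepsilon,\delta)$-differential privacy to the ordered pair $(\mathcal{D}',\mathcal{D})$, which are neighbors whenever $d_H(\mathcal{D},\mathcal{D}')=1$, gives $\PP[A(\mathcal{D}')\in S]\le e^{\varepsilon}\PP[A(\mathcal{D})\in S]+\delta$ for every measurable $S$, hence
$$\PP[A(\mathcal{D})\in S]\ \ge\ e^{-\varepsilon}\PP[A(\mathcal{D}')\in S]-e^{-\varepsilon}\delta$$
for any neighboring pair and any $S$.

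Next, since $\mathcal{D}$ and $\mathcal{D}'$ disagree in at most $n_0$ coordinates, I would build a chain $\mathcal{D}=\mathcal{D}_{(0)},\mathcal{D}_{(1)},\dots,\mathcal{D}_{(d)}=\mathcal{D}'$ in $\cX^{n\times m}$ with $d\le n_0$ and $d_H(\mathcal{D}_{(i)},\mathcal{D}_{(i+1)})=1$ for every $i$, obtained by rewriting the disagreeing coordinates one at a time; every intermediate data set still lies in $\cX^{n\times m}$ because each coordinate only ever takes a value already present in $\mathcal{D}$ or in $\mathcal{D}'$. Writing $p_i:=\PP[A(\mathcal{D}_{(i)})\in S]$, the one-step bound gives $p_i\ge e^{-\varepsilon}p_{i+1}-e^{-\varepsilon}\delta$ for each $i$, and a straightforward induction on $i$ (iterating the bound $d$ times) yields
$$p_0\ \ge\ e^{-d\varepsilon}p_d-\delta\sum_{j=1}^{d}e^{-j\varepsilon}.$$

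Finally I would estimate the two terms: since $d\le n_0$ and $p_d\ge 0$ we have $e^{-d\varepsilon}p_d\ge e^{-n_0\varepsilon}p_d$, and $\sum_{j=1}^{d}e^{-j\varepsilon}\le\sum_{j=1}^{\infty}e^{-j\varepsilon}=\frac{e^{-\varepsilon}}{1-e^{-\varepsilon}}\le\frac{1}{1-e^{-\varepsilon}}$, the last step using $e^{-\varepsilon}\le 1$. Substituting $p_0=\PP[A(\mathcal{D})\in S]$ and $p_d=\PP[A(\mathcal{D}')\in S]$ gives exactly the claimed inequality. There is no serious obstacle here; the only points needing a little care are orienting the one-step inequality correctly (it is the ``reverse'' form obtained by interchanging the two databases in the privacy definition) and observing that the accumulated $\delta$-error forms a geometric series whose sum is bounded by $\delta/(1-e^{-\varepsilon})$ independently of $n_0$.
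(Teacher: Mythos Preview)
Your proof is correct and is precisely the standard group-privacy chaining argument that the paper defers to (it cites Lemma 2 of Chaudhuri and Hsu (2012) without spelling out details). The only cosmetic difference is that you bound the finite geometric sum by the full series $\sum_{j\ge 1}e^{-j\varepsilon}=e^{-\varepsilon}/(1-e^{-\varepsilon})$ and then by $1/(1-e^{-\varepsilon})$, which is exactly the bound in the statement; nothing further is needed.
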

\begin{proof}
The same arguments of Lemma 2 in \cite{chaudhuriandhsu2012} apply here.
\end{proof}
\begin{lemma}
Let $D\in\cX^{n\times m}$ and $\mathcal{D}'\in\cX^{n\times m}$ be two data sets that differ in the value of at most $n_0<n$
entries. Furthermore let $A:\mathbb R^{n\times m}\mapsto\mathbb R^k$ for $k\in\{1,\dots,p\}$ be any $(\varepsilon,\delta)$-differentially private algorithm. For all $0<\gamma<1/3$, and for all $\tau, \tau'\in \RR^k$, if $n_0\leq \frac{\log(1/(2\gamma))}{\varepsilon}$ and if $\delta\leq \frac{1}{4}\gamma(1-e^{-\varepsilon})$, then
\label{lowerboundlemma}
$$\EE_{A}\Big[\|A(\mathcal{D})-\tau\|+\|A(\mathcal{D}')-\tau'\|\Big]\geq \gamma\|\tau-\tau'\|.$$
\end{lemma}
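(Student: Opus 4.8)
Write $r:=\|\tau-\tau'\|$ and assume $r>0$, the case $r=0$ being trivial. The plan is a standard packing-type argument: if the output law of $A$ on $\mathcal{D}$ concentrates near $\tau$, then group privacy forces the output law on $\mathcal{D}'$ to place a comparable amount of mass near $\tau$ as well, hence far from $\tau'$. Concretely, I would first rewrite both expectations through the layer-cake identity, truncated at level $r$:
\begin{equation*}
\EE_A\|A(\mathcal{D})-\tau\|\ \ge\ \int_0^r \PP\big(\|A(\mathcal{D})-\tau\|\ge t\big)\,\dd t,
\qquad
\EE_A\|A(\mathcal{D}')-\tau'\|\ \ge\ \int_0^r \PP\big(\|A(\mathcal{D}')-\tau'\|> t\big)\,\dd t .
\end{equation*}
The geometric input is the triangle inequality: for $t\in(0,r)$, $\|a-\tau\|<r-t$ implies $\|a-\tau'\|>t$, so with $U_s:=\{a:\|a-\tau\|<s\}$ one has $\PP(\|A(\mathcal{D}')-\tau'\|>t)\ge\PP(A(\mathcal{D}')\in U_{r-t})$. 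Since $\mathcal{D}$ and $\mathcal{D}'$ differ in at most $n_0$ coordinates, Lemma~\ref{composition} (applied with the roles of $\mathcal{D}$ and $\mathcal{D}'$ exchanged) gives
\begin{equation*}
\PP(A(\mathcal{D}')\in U_{r-t})\ \ge\ e^{-n_0\varepsilon}\,\PP(A(\mathcal{D})\in U_{r-t})-\frac{\delta}{1-e^{-\varepsilon}},
\end{equation*}
a bound that remains valid when its right-hand side is negative because the left-hand side is nonnegative.

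Adding the two integral lower bounds, substituting $s=r-t$ in the second, and noting that the terms $\PP(A(\mathcal{D})\in U_s)$ then appear with the nonpositive coefficient $-(1-e^{-n_0\varepsilon})$, I would bound $\PP(A(\mathcal{D})\in U_s)\le 1$ to obtain
\begin{equation*}
\EE_A\big[\|A(\mathcal{D})-\tau\|+\|A(\mathcal{D}')-\tau'\|\big]\ \ge\ r\Big(e^{-n_0\varepsilon}-\frac{\delta}{1-e^{-\varepsilon}}\Big).
\end{equation*}
The hypotheses then close the argument: $n_0\le\varepsilon^{-1}\log(1/(2\gamma))$ gives $e^{-n_0\varepsilon}\ge 2\gamma$, and $\delta\le\tfrac14\gamma(1-e^{-\varepsilon})$ gives $\delta/(1-e^{-\varepsilon})\le\gamma/4$, so the right-hand side is at least $\tfrac74\gamma r\ge\gamma\|\tau-\tau'\|$; the restriction $\gamma<1/3$ only serves to keep the hypothesis on $n_0$ non-vacuous.

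The one pitfall to avoid is the naive route of splitting each of the two expectations by Markov's inequality at the midpoint of the segment joining $\tau$ and $\tau'$: that loses a factor of $2$ and yields only $\tfrac78\gamma r<\gamma r$, which is not enough. Using the layer-cake representation above---equivalently, the elementary inequality $\|a-\tau\|+\min(\|a-\tau'\|,r)\ge r$ valid for every $a$---is exactly what removes this loss, after which the rest is routine bookkeeping with the group-privacy inequality of Lemma~\ref{composition}.
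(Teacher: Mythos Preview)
Your argument is correct and takes a genuinely different route from the paper's. The paper proceeds by constructing two disjoint boxes $I$ and $I'$ around $\tau$ and $\tau'$, shows by contradiction (via Lemma~\ref{composition}) that $\PP[A(\mathcal{D})\in I]+\PP[A(\mathcal{D}')\in I']\le 2(1-\gamma)$, and then combines this with the fact that outputs outside these boxes are at distance at least $\tfrac12\|\tau-\tau'\|$ from their targets. The constraint $\gamma<1/3$ is actually used there, in the final arithmetic of the contradiction step. Your layer-cake argument is more direct: it avoids the box construction and the contradiction, applies group privacy pointwise in the radius, and integrates. It yields the sharper lower bound $\tfrac74\gamma\|\tau-\tau'\|$ and does not need the restriction $\gamma<1/3$ anywhere in the logic (so your remark that it only keeps the $n_0$ hypothesis non-vacuous is exactly right for your proof, though in the paper's proof it plays a real role). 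Your closing observation about the naive single-threshold Markov bound losing a factor of two is also accurate and nicely explains why the full layer-cake representation is the right tool here.
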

\begin{proof}
We adapt the proof of Lemma 3 in \cite{chaudhuriandhsu2012} to our setting. It suffices to construct two disjoint hyperrectangles $I$ and $I'$ such that
\begin{equation}
\label{LBlemma1}
\PP_{A}[A(\mathcal{D})\in I]+\PP_{A}[A(\mathcal{D}')\in I']\leq 2(1-\gamma)
\end{equation}
\begin{equation}
\label{LBlemma2}
\EE_{A}\Big[\|A(\mathcal{D})-\tau\|\big|A(\mathcal{D})\in I\Big]\geq \frac{1}{2}\|\tau-\tau'\| ~\mbox{ and }~ \EE_{A}\Big[\|A(\mathcal{D}')-\tau'\|\big|A(\mathcal{D}')\in I'\Big]\geq \frac{1}{2}\|\tau-\tau'\|
\end{equation}
since they imply that
\begin{align*}
 \EE_{A}&\Big[\|A(\mathcal{D})-\tau\|+\|A(\mathcal{D}')-\tau'\|\Big] \\
 & > \EE_{A}\Big[\|A(\mathcal{D})-\tau\|\big|A(\mathcal{D})\notin I\Big]\PP_{A}[A(\mathcal{D})\notin I] + \EE_{A}\Big[\|A(\mathcal{D}')-\tau'\|\big|A('D)\notin I'\Big]\PP_{A}[A(\mathcal{D}')\notin I']\\
& \geq \frac{1}{2}\|\tau-\tau'\| \Big( \PP_{A}[A(\mathcal{D})\notin I]+\PP_{A}[A(\mathcal{D}')\notin I'] \Big)\\
  &\geq \gamma \|\tau-\tau'\|.
\end{align*}
Let us now build $I$ and $I'$. Write $\tau=(\tau_1,\dots,\tau_k)$ and $\tau'=(\tau'_1,\dots,\tau_k')$. Without loss of generality assume that $\tau_j<\tau_j'$ and let $t_j=\frac{1}{2}(\tau'_j-\tau_j)$ for all $j=1,\dots,p$. Further let $I_j=(\tau_j-t_j,\tau_j+t_j)$ and $I'_j=(\tau_j'-t_j,\tau_j'+t_j)$. By construction the hyperrectangles $I:=I_1\times I_2\times \dots \times I_k$ and $I':=I'_1\times I'_2\times\dots\times I_k$ are disjoint and satisfy \eqref{LBlemma2}. It remains to show that \eqref{LBlemma1} holds. We proceed by contradiction. Suppose \eqref{LBlemma1} does not hold, then
\begin{align*}
2\gamma &> \PP_{A}\Big[A(\mathcal{D})\notin I\Big]+\PP_{A}\Big[A(\mathcal{D}')\notin I'\Big]\\
& \geq \PP_{A}\Big[A(\mathcal{D})\in I'\Big]+\PP_{A}\Big[A(\mathcal{D}')\in I\Big] \\
& \geq e^{-n_0\varepsilon}\Big(\PP_{A}\Big[A(\mathcal{D}')\in I'\Big]+\PP_{A}\Big[A(\mathcal{D})\in I\Big]\Big)-\frac{2\delta}{1-e^{-\varepsilon}} \\
&\geq e^{-n_0\varepsilon} 2(1-\gamma)-\frac{\gamma}{2}.
\end{align*}
The first inequality follows by assumption, the second from $I\cap I'=\emptyset$, the third one from Lemma \ref{composition} and the last one by assumption and $\delta \leq \frac{1}{4}\gamma(1-e^{-\varepsilon})$. Further note that the last inequality  leads to
$$e^{-n_0\varepsilon} 2(1-\gamma)-\frac{\gamma}{2}\geq 4\gamma(1-\gamma)-\frac{\gamma}{2}\geq \frac{7}{2}\gamma-4\gamma^2>2\gamma $$
for $\gamma\leq \frac{1}{3}$ since $n_0\leq \frac{\log(1/(2\gamma))}{\varepsilon}$.  This is a contradiction and therefore \eqref{LBlemma1} holds.
\end{proof}

\subsection*{Proof of Theorem \ref{thm3} }
\begin{proof}
The proof follows from the arguments of Theorem \ref{thm1} by combining Lemmas \ref{pSS}, \ref{pGES} and the results of \cite{nissimetal2007}. 

From Lemma \ref{pGES} we have that $\sqrt{\log n}\gamma(\alpha,F_n)>\tilde{\Gamma}_n$ for $n\geq C'\sqrt{m\log(2/\delta)} K_n \lambda_{\max}(M_F^{-1})\big\{1+2L_n/b+\lambda_{\max}(M_F^{-1})(C_1 +2C_2K_n/b )\big\}$. Given Lemma \ref{pSS}, it therefore remains to show that 
\begin{equation}
\label{thm4.1}
\frac{\sqrt{\log n}\gamma(\alpha,F_n)}{n} \geq C_{n,k}\Gamma_{U,n}\exp\Big(-C\xi\sqrt{ mn\log(2/\delta)}+\xi\Big).
\end{equation}
 Hence in order to show \eqref{thm4.1} it would suffice to establish that
\begin{equation}
\label{thm4.2}
2C\xi\sqrt{mn\log(2/\delta)}-2\xi-2\log\Big(\frac{nC_{n,k}\Gamma_{U,n}}{\gamma(\alpha,F_n)}\Big) \geq -\log\log(n).
\end{equation}
Letting $C_{n,k,U}=\frac{nC_{n,k}\Gamma_{U,n}}{\gamma(\alpha,F_n)}$ and since  $\xi\leq \frac{\varepsilon}{4\{p+2\log(2/\delta)\}}\leq 1$, 
the left hand side of \eqref{thm4.2} will be nonnegative if   
\begin{equation*}
\label{thm1.3}
n\geq \Big\{ 1+\frac{\log(C_{n,k,U})}{\xi}\Big\}^2\frac{1}{C^2m\log(2/\delta)}\geq \Big[ 1+\frac{4\{p+2\log(2/\delta)\}\log(C_{n,k,U})}{\varepsilon}\Big]^2\frac{1}{C^2m\log(2/\delta)}.
\end{equation*}
This last inequality holds by assumption.
\end{proof}

We introduce an analogue of the term $\Gamma_n$ used in the proof of Theorem \ref{thm1},  but in the context for level functionals, namely
\begin{equation}
\label{Gamma2}
\tilde{\Gamma}_n:=\sup\bigg\{\gamma_{1/n}(\alpha,G):~
d_\infty(F_n,G)\leq C\sqrt{\frac{m\log (2/\delta)}{n}} \bigg\}.
\end{equation} 
$\tilde\Gamma_n$ plays an important role in the analysis of our differentially private p-values. Lemmas \ref{pSS} guarantees that for large $n$ it suffices to control $\tilde\Gamma_n$ in order to bound the smooth sensitivity, while Lemma \ref{pGES} shows that $\tilde{\Gamma}_n$ is roughly of the same order as the empirical level gross-error sensitivity.

\begin{lemma}
\label{pSS}
Assume that Conditions \ref{cond:boundedpsi} and \ref{cond:hessian} hold. Then
$$\SS_{\xi}(\alpha,\mathcal{D}(F_n))\leq \max\bigg\{\frac{2\tilde{\Gamma}_n}{n},C_{n,k}\Gamma_{U,n}\exp\Big(-C\xi \sqrt{mn\log(2/\delta)}+\xi\Big)\bigg\},$$
where $C$ is as in \eqref{Gamma2},  $\Gamma_{U,n}=\sup\{\sup_{t\in[0,1]}\gamma\big(U,(1-t)G_n+tG_n'\big): ~d_H\big(\mathcal{D}(G_n),\mathcal{D}(G_n')\big)=1, ~G_n,G_n'\in\mathcal{G}_n\}$ and $C_{n,k}=2\frac{(k-1)^{(k-1)/2}e^{-(k-1)/2}}{\sqrt{n}2^{k/2}\Gamma(k/2)}$ where $\Gamma(\cdot)$ is the gamma function.
\end{lemma}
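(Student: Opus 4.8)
The plan is to mimic the proof of Lemma~\ref{SS}, with the M-functional $T$ replaced by the level functional $\alpha$ and its influence function controlled through the chain rule coming from $\alpha(G)=1-H_k\big(n\|U(G)\|^2\big)$. First I would fix an arbitrary $\mathcal{D}(G_1)\in\mathfrak{X}^n$ and set $r_n=C\sqrt{m\log(2/\delta)/n}$, then split according to whether $[d_H(\mathcal{D}(F_n),\mathcal{D}(G_1))+1]/n>r_n$ or not, bounding $e^{-\xi d_H(\mathcal{D}(F_n),\mathcal{D}(G_1))}\LS(\alpha,\mathcal{D}(G_1))$ by the appropriate term of the claimed maximum in each case; taking the supremum over $\mathcal{D}(G_1)$ then gives the stated bound on $\SS_\xi(\alpha,\mathcal{D}(F_n))$.

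The preliminary ingredient is a chain-rule bound on $\gamma(\alpha,G)$. Since $H_k$ is the $\chi^2_k$ c.d.f., differentiating $\alpha=1-H_k(n\|U\|^2)$ gives $\IF(x;\alpha,G)=-2nH_k'\big(n\|U(G)\|^2\big)\,U(G)^{T}\IF(x;U,G)$, hence
\[
\|\IF(x;\alpha,G)\|\ \le\ 2\sqrt{n}\,\gamma(U,G)\,\sup_{y\ge 0}\sqrt{y}\,H_k'(y).
\]
Since $\sqrt{y}\,H_k'(y)=\tfrac{1}{2^{k/2}\Gamma(k/2)}y^{(k-1)/2}e^{-y/2}$ attains its maximum $\tfrac{(k-1)^{(k-1)/2}e^{-(k-1)/2}}{2^{k/2}\Gamma(k/2)}$ at $y=k-1$, this gives $\gamma(\alpha,G)\le nC_{n,k}\,\gamma(U,G)$ with $C_{n,k}$ exactly as in the statement. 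Combining this with the von Mises / intermediate-point argument already used for \eqref{lem1.1} — legitimate here because Condition~\ref{cond:hessian} keeps the relevant $M$ matrices well conditioned along the contamination path and $U=S^{-1/2}\tilde T$ is a smooth function of an M-functional — for any $G_1'$ with $d_H(\mathcal{D}(G_1),\mathcal{D}(G_1'))=1$ I obtain
\[
\|\alpha(G_1)-\alpha(G_1')\|\ \le\ d_\infty(G_1,G_1')\sup_{t\in[0,1]}\gamma\big(\alpha,(1-t)G_1+tG_1'\big)\ \le\ \tfrac{1}{n}\,nC_{n,k}\Gamma_{U,n}\ =\ C_{n,k}\Gamma_{U,n},
\]
so $\LS(\alpha,\mathcal{D}(G_1))\le C_{n,k}\Gamma_{U,n}$. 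In the first case $d_H(\mathcal{D}(F_n),\mathcal{D}(G_1))>nr_n-1$, so $e^{-\xi d_H(\mathcal{D}(F_n),\mathcal{D}(G_1))}\le \exp(-\xi nr_n+\xi)=\exp(-C\xi\sqrt{mn\log(2/\delta)}+\xi)$, which yields the second term of the maximum.

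In the second case $[d_H(\mathcal{D}(F_n),\mathcal{D}(G_1))+1]/n\le r_n$, and I would argue exactly as in Lemma~\ref{SS}: fix $\mathcal{D}(G_2)$ with $d_H(\mathcal{D}(G_1),\mathcal{D}(G_2))=1$ differing in, say, the $j$th entry, and let $G_3$ be $\mathcal{D}(G_1)$ with that entry deleted; the triangle inequality for sup-norm distances gives $d_\infty(F_n,G_3)\le r_n$, hence $\gamma_{1/n}(\alpha,G_3)\le\tilde\Gamma_n$ by \eqref{Gamma2}. Writing $\alpha(G_1)-\alpha(G_2)=\tfrac1n\big(\IF_{1/n}(x_j;\alpha,G_3)-\IF_{1/n}(x_j';\alpha,G_3)\big)$ gives $\|\alpha(G_1)-\alpha(G_2)\|\le \tfrac{2}{n}\gamma_{1/n}(\alpha,G_3)\le \tfrac{2\tilde\Gamma_n}{n}$, uniformly over $G_2$, so $\LS(\alpha,\mathcal{D}(G_1))\le 2\tilde\Gamma_n/n$ and a fortiori $e^{-\xi d_H(\cdots)}\LS(\alpha,\mathcal{D}(G_1))\le 2\tilde\Gamma_n/n$, completing the two cases.

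The step I expect to be the main obstacle is not the case split (which copies Lemma~\ref{SS}) but justifying the differentiation $\IF(x;\alpha,G)=-2nH_k'(n\|U\|^2)U^{T}\IF(x;U,G)$ and the accompanying von Mises expansion \emph{uniformly} along the one-step contamination paths: one has to know that $U$, and in particular the standardizing matrix hidden in $U=S^{-1/2}\tilde T$, varies smoothly, and that $\|U(G)\|$ stays in a range where $nH_k'(n\|U\|^2)\|U\|$ is genuinely dominated by $\sqrt{n}$ times the universal maximum of $\sqrt{y}H_k'(y)$. This is precisely the extra regularity on $\Psi$ flagged in Section~\ref{CVF}, and $\Gamma_{U,n}$ is defined exactly so that $\gamma(U,\cdot)$ is controlled along the paths that enter the local-sensitivity bounds above.
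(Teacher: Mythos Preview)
Your proposal is correct and follows essentially the same route as the paper's own proof: the same two-case split on whether $[d_H(\mathcal{D}(F_n),\mathcal{D}(G_1))+1]/n$ exceeds $r_n$, the same chain-rule computation $\IF(x;\alpha,G)=-2nH_k'(n\|U(G)\|^2)U(G)^T\IF(x;U,G)$, the same optimization $\sup_{y\ge 0}\sqrt{y}H_k'(y)$ yielding $C_{n,k}$, and the same $G_3$ deletion argument in the near case. Your closing remark about the uniform validity of the von Mises expansion along the contamination path is exactly the implicit regularity the paper absorbs into the definition of $\Gamma_{U,n}$ and the conditions of Section~\ref{CVF}.
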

\begin{proof}
The result follows from arguments similar to those of Lemma \ref{SS}.
We will show that for any $\mathcal{D}(G_1)\in \RR^{n\times m}$ we have that
\begin{equation*}
e^{-\xi d_H(\mathcal{D}(F_n),\mathcal{D}(G_1))}\LS(\alpha,\mathcal{D}(G_1)) 
\leq \max\bigg\{\frac{2\tilde{\Gamma}_n}{n},\exp\big(-\xi(nr_n-1)\big)\bigg\},
\end{equation*}
where $r_n= C\sqrt{\frac{m\log(2/\delta)}{n}}$. For this we consider two possible cases. First suppose that $d_H(\mathcal{D}(F_n),\mathcal{D}(G_1))+1)/n > {r}_n$. Letting $G_1'$ such that $d_H\big(\mathcal{D}(G_1),\mathcal{D}(G_1')\big)=1$ and taking $\rho=1$ in Lemma \ref{intermediatepoint}, we get that $\LS\big(T,\mathcal{D}(G_1)\big)\leq C_{n,k}\Gamma_{U,n}$ since
\begin{align}
\label{pSS1}
&|\alpha(G_1)-\alpha(G_1')|\nonumber \\
 &\leq \Big|\int_0^1\int\IF(x;\alpha,(1-t)G_1+tG_1')\mathrm{d}(G_1-G_1')\mathrm{d}t\Big| \nonumber \\
 & \leq d_\infty(G_1,G_1') \sup_{t\in[0,1]}\gamma(\alpha,(1-t)G_1+tG_1')\nonumber \\
 & \leq \frac{1}{n}\sup_{t\in[0,1]}\sup_{x}\Big|H_k'(n\|U((1-t)G_1+tG_1')\|^2)2nU((1-t)G_1+tG_1')^T\IF(x;U,(1-t)G_1+tG_1') \Big| \nonumber\\
%&\leq \sup_{t\in[0,1]}\sup_{x}\Big|H_k'(n\|U((1-t)G_1+tG_1')\|^2)\|U((1-t)G_1+tG_1')\|\|\IF(x;U,(1-t)G_1+tG_1')\|\nonumber\\
&\leq 2\sup_{z>0}\{H_k'(nz^2)z\}\sup_{t\in[0,1]}\sup_{x}\|\IF(x;U,(1-t)G_1+tG_1')\|\nonumber\\
&\leq C_{n,k}\Gamma_{U,n}
\end{align}
The last inequality used the definition of $\Gamma_{U,n} $ and $\sup_{z>0}\{H_k'(nz^2)z\}= \frac{(k-1)^{(k-1)/2}e^{-(k-1)/2}}{\sqrt{n}2^{k/2}\Gamma(k/2)}$.
Therefore 
$$e^{-\xi d_H(\mathcal{D}(F_n),\mathcal{D}(G_1))}\LS\big(T,\mathcal{D}(G_1)\big)\leq C_{n,k}\Gamma_{U,n} \exp\big(-\xi nr_n+\xi\big).$$

Suppose now that $[d_H(\mathcal{D}(F_n),\mathcal{D}(G_1))+1]/n \leq {r}_n$ and fix $\mathcal{D}(G_2)\in\RR^{n\times m}$ such that $d_H(\mathcal{D}(G_1),\mathcal{D}(G_2))=1$. Let $j\in\{1,\dots,n\}$ be the index at which $\mathcal{D}(G_1)$ and $\mathcal{D}(G_2)$ differ.  Finally let $\mathcal{D}(G_3)\in\RR^{(n-1)\times m}$ be the data set obtained by removing the $j$th element of $\mathcal{D}(G_1)$. Then by the triangle inequality
$$d_{\infty}(F_n,G_3)\leq d_{\infty}(F_n,G_1)+d_{\infty}(G_3,G_1)\leq [d_H(\mathcal{D}(F_n),\mathcal{D}(G_1)+1]/n\leq {r}_n $$
and hence $\gamma_{1/n}(\gamma,G_3)\leq \tilde{\Gamma}_n$. Therefore simple calculations show that
\begin{align*}
\|\alpha(G_1)-\alpha(G_2)\| &= \|\alpha(G_1)-\alpha(G_3)+\alpha(G_3)-\alpha(G_2)\| \nonumber\\
&\leq \frac{2}{n}\gamma_{1/n}(\alpha,G_3)\\
&\leq\frac{2\tilde{\Gamma}_n}{n}
\end{align*}
Since the bound holds for any choice of $\mathcal{D}(G_2)$, we see that $\LS(T,\mathcal{D}(G_1))\leq 2\tilde{\Gamma}_n/n$ and consequently $e^{-\xi d_H(\mathcal{D}(F_n),\mathcal{D}(G_1))}\LS(T,\mathcal{D}(G_1))\leq 2\tilde{\Gamma}_n/n$ .
\end{proof}
\begin{lemma}
\label{pGES}
Assume Conditions \ref{cond:boundedpsi}--\ref{cond:smoothness}. Then
$$\tilde{\Gamma}_n\leq 2\gamma(\alpha,F_n)+C'\sqrt{\frac{m\log(2/\delta)}{n}} K_n \lambda_{\max}(M_{F_n}^{-1})\Big\{1+2L_n/b+\lambda_{\max}(M_{F_n}^{-1})(C_1 +2C_2K_n/b )\Big\}.$$
\end{lemma}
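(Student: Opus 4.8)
The plan is to mirror the two-step proof of Lemma~\ref{GES}, replacing the M-functional $T$ by the level functional $\alpha$. Because $\alpha(G)=1-H_k(n\|U(G)\|^2)$, the chain rule gives $\IF(x;\alpha,G)=-2nH_k'(n\|U(G)\|^2)\,U(G)^\top\IF(x;U,G)$. Since $U$ is a standardized functional built from the M-estimating equations, formula~\eqref{IF} together with Conditions~\ref{cond:boundedpsi}--\ref{cond:hessian} gives $\|\IF(x;U,G)\|\le c\,\lambda_{\max}(M_G^{-1})K_n\le cK_n/b$ for $M_G$ positive definite, with $c$ depending only on the fixed standardizing matrices, and $\sup_{z>0}\{H_k'(nz^2)z\}$ is the explicit $O(1/\sqrt n)$ constant already identified in the proof of Lemma~\ref{pSS}. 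These are the building blocks that produce the factor $K_n\lambda_{\max}(M_{F_n}^{-1})$ in the asserted bound.

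First I would establish the fixed-scale analogue of Lemma~\ref{GESbound}: for every admissible $G$,
$$\gamma_{1/n}(\alpha,G)\le 2\gamma(\alpha,G)+O\!\left(\tfrac{1}{n}K_n\lambda_{\max}(M_G^{-1})\big\{1+2L_n/b+\lambda_{\max}(M_G^{-1})(C_1+2C_2K_n/b)\big\}\right).$$
This follows from a second-order von Mises (Taylor) expansion of $\alpha$ along the path $t\mapsto(1-t)G+t\Delta_x$ evaluated at $t=1/n$: differentiating twice through $H_k$, through $\|\cdot\|^2$, and through $U$, the remainder is controlled by $\sup_z\{H_k'(nz^2)z\}$ and $\sup_z\{nH_k''(nz^2)z^2\}$, by $\|U(G)\|$ (uniformly bounded on the relevant neighborhood because $U$ is standardized and $G$ is close to $F_n$), by $\gamma(U,G)$, and by the second-order smoothness of $U$ — which by~\eqref{IF} and Condition~\ref{cond:smoothness} reduces to the very quantities $L_n/b$ and $C_1+C_2K_n/b$ appearing in the proof of Lemma~\ref{GESbound}. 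This is the source of the bracket $\{1+2L_n/b+\lambda_{\max}(M_{F_n}^{-1})(C_1+2C_2K_n/b)\}$, the leading $1$ coming from the $H_k$-derivative and $\|U(G)\|$ factors.

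Second I would prove the neighborhood-stability analogue of Lemma~\ref{gGESbound}: for $d_\infty(F_n,G)\le C\sqrt{m\log(2/\delta)/n}$,
$$\gamma(\alpha,G)\le\gamma(\alpha,F_n)+C'\sqrt{\tfrac{m\log(2/\delta)}{n}}\,K_n\lambda_{\max}(M_{F_n}^{-1})\big\{1+2L_n/b+\lambda_{\max}(M_{F_n}^{-1})(C_1+2C_2K_n/b)\big\}.$$
The argument is that $G\mapsto\gamma(\alpha,G)$ is Lipschitz on this neighborhood: by the chain-rule formula for $\IF(x;\alpha,G)$, it suffices to control the Lipschitz moduli of $G\mapsto U(G)$, of $G\mapsto\IF(\cdot;U,G)$ (both inherited from the corresponding properties of $T$ and of $G\mapsto M_G^{-1}$ via Conditions~\ref{cond:hessian}--\ref{cond:smoothness}), and of the smooth map $z\mapsto H_k'(nz^2)z$. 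Here I use Condition~\ref{cond:hessian}, which keeps $\lambda_{\min}(M_G)\ge b$ uniformly on the neighborhood, so that $\lambda_{\max}(M_G^{-1})\le 1/b$ and $\|U(G)\|$ stays bounded, allowing every $M_G$-dependent quantity to be replaced by its $F_n$-counterpart at an $O(\sqrt{m\log(2/\delta)/n})$ cost. Combining the two displays by the triangle inequality, as in the proof of Lemma~\ref{GES} and noting $1/n=O(\sqrt{m\log(2/\delta)/n})$, yields the stated bound with a new constant $C'$.

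The main obstacle is the bookkeeping in the second step: one must verify that every ingredient of $\IF(x;\alpha,G)$ — the fixed standardizing matrices hidden inside $U$, the factor $M_G^{-1}$, and the arguments of $H_k'$ — is simultaneously uniformly bounded and uniformly Lipschitz over the Borel--Cantelli neighborhood, and that these estimates telescope into precisely the asserted constant rather than something larger. The crucial enabler throughout is Condition~\ref{cond:hessian}, which keeps $M_G$ uniformly positive definite there; without it neither the $O(1/n)$ remainder in the first step nor the finite Lipschitz constant in the second step would be available.
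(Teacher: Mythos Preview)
Your proposal is correct and follows the same two-step architecture as the paper: bound $\gamma_{1/n}(\alpha,G)$ by $2\gamma(\alpha,G)$ plus an $O(1/n)$ remainder, then bound $\gamma(\alpha,G)$ by $\gamma(\alpha,F_n)$ plus an $O(\sqrt{m\log(2/\delta)/n})$ remainder, and combine.

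The main difference is organizational. The paper does not work directly with the chain rule through $H_k$ and $U$ as you do; instead it abstracts the test statistics $W$, $R$, $\tilde S$ as functionals of the form $h(F)=g(T(F),F)$ where $g$ has bounded first and second partial derivatives (Condition~\ref{testfunctional}, verified in Lemma~\ref{testconditions}), and then proves general analogues of Lemmas~\ref{GESbound0}--\ref{gGESbound} for such $h$ (Lemmas~\ref{qGESbound0}--\ref{gqGESbound}). In particular, the paper's step~1 is not a single second-order Taylor expansion but rather the combination of the exact integral identity $\gamma_\rho(h,F)\le 2\sup_{t\in[0,\rho]}\gamma(h,F_t)$ (Lemma~\ref{qGESbound0}, from Lemma~\ref{intermediatepoint}) with a first-order comparison of $\IF(x;h,F_\rho)$ and $\IF(x;h,F)$ (Lemma~\ref{qGESbound0'}); this is where the factor of $2$ in front of $\gamma(\alpha,F_n)$ originates, and it avoids having to control second derivatives of $H_k$ explicitly. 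Your more direct route is valid and has the advantage of making the provenance of each constant (in particular the leading ``$1$'' in the bracket, which in the paper's calculation comes from the $\nabla_2 g$ term $I_2$ in the proof of Lemma~\ref{qGESbound0'}) more transparent; the paper's route buys a single modular argument that covers all three tests at once.
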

\begin{proof}

We adapt the arguments developed for the estimation problem in Lemma \ref{GES}. By Lemma \ref{qGESbound} we have that
\begin{equation}
\label{lem6.1}
\gamma_{1/n}(\alpha,G)\leq 2\gamma(\alpha,G)+O\Big[\frac{1}{n}K_n \lambda_{\max}(M_F^{-1})\Big\{1+2L_n/b+\lambda_{\max}(M_G^{-1})(C_1 +2C_2K_n/b )\Big\}\Big]
\end{equation}
 for  all $G\in \{H:d_{\infty}(F_n,H) \leq C\sqrt{\frac{m\log(2/\delta)}{n}}\}$. 
Furthermore, it follows from Lemma \ref{gqGESbound}  that
\begin{equation}
\label{lem6.2}
\gamma(\alpha,G)\leq \gamma (\alpha,F_n)+O\Big[\sqrt{\frac{m\log(2/\delta)}{n}}K_n \lambda_{\max}(M_{F_n}^{-1})\Big\{1+L_n/b+\lambda_{\max}(M_{F_n}^{-1})(C_1 +2C_2K_n/b )\Big\}\Big].
\end{equation}
Using \eqref{lem6.2} in \eqref{lem6.1} shows the desired result.
\end{proof}

\subsection*{Proof of Theorem \ref{ratestest} }
\begin{proof}
Let's first consider the Wald functional. The proof of the first claim is very similar to that of Theorem \ref{rates}. The main difference is that $\|U(F_n)\|=O_P(\sqrt{k/n})$ and hence
\begin{align*}
\gamma(\alpha,F_n)&=\sup_{x}|2nH'_k(n\|U(F_n)\|^2)U(F_n)^T\IF(x;U,F_n)|\\
&\leq |2nH'_k(n\|U(F_n)\|^2)|\|U(F_n)\|\gamma(U,F_n)\\
&\leq |2nH'_k(n\|U(F_n)\|^2)|\|U(F_n)\|\|V^{-1/2}\|\gamma(T,F_n)\\
&\leq O_P(\sqrt{nk}K_n).
\end{align*}
For the second claim it suffices to notice that since
$\frac{2\log(n)\gamma(\alpha,F_n)}{n\varepsilon}Z\leq O_P\bigg(\frac{K_n\log(n)}{\sqrt{n/k}\varepsilon}\bigg)=o_P(1)$, a Taylor expansion of $H_k^{-1}$ yields
$$Q(F_n)=H_k^{-1}(\alpha(F_n)+o_P(1) )=H_k^{-1}(\alpha(F_n))+o_P(1 )=Q_0(F_n)+o_P(1).$$
It is easy to see that the proof for $\tilde{S}$ and is very similar. The same arguments also work for the Rao test since direct calculations show that
 $$\IF(x;Z,F_n)=\big(\frac{1}{n}\sum_{i=1}^n\dot{\Psi}(x_i;T_R(F_n))_{(2)}\big)\IF(x;T_R,F_n)+Z(T,F_n-\Delta_x).$$

\end{proof}

\subsection*{Proof of Theorem \ref{lowerbound3}}

\begin{proof}
First note that Proposition \ref{lowerbound} yields lower bounds of the form $\frac{\rho}{16}\gamma_{\rho}(\alpha,F) $
for our problem. We will simply further lower bound $\gamma_{\rho}(\alpha,F)$ in order to establish the claimed result. Writing $\rho_n=\sqrt{n}\rho=\frac{1}{\sqrt{n}}\lceil\frac{\log 2}{2\varepsilon}\rceil$ and $F_{\rho_n,n}=(1-\frac{1}{\sqrt{n}}\rho_n)F+\frac{1}{\sqrt{n}}\Delta_x$ for a fixed $x$, we see that
$$\sqrt{n}(U(F_n)-U(F_{\rho_n,n}))\to_d N(0,I_k). $$
Furthermore, let $\alpha(F_{\rho_n,n})=1-H_k(q_{1-\alpha_0};t(\rho_n))$, where $t(\rho_n)=n\|U(F_{\rho_n,n})\|^2$ and let  $b(\rho_n)=-H_k(q_{1-\alpha_0};t(\rho_n))$. Following the computations of Proposition 4 in \cite{heritierandronchetti1994} we have that
\begin{align*}
\alpha(F_{\rho_n,x})-\alpha(F)&= \rho_nb'(0)+\frac{1}{2}\rho_n^2b''(0)+o(\rho_n^2)+O(n^{-1})\\
  &=\rho_n^2\mu\|\IF(x;U,F)\|^2+o(\rho_n^2)+O(n^{-1})
\end{align*}
Using the last inequality we can see that
\begin{align*}
\gamma_{\rho}(\alpha,F) & =  \sup_{x'}\frac{1}{\rho}\big|\alpha( (1-\rho)F+\Delta_{x'})-\alpha(F)\big| \\
&\geq\frac{1}{\rho}\big|\alpha( (1-\rho)F+\Delta_x)-\alpha(F)\big|\\
&= \frac{1}{\rho} \big|\rho_n^2\mu\|\IF(x;U,F)\|^2+o(\rho^2_n)+O(n^{-1})\big| \\
&\geq \bigg\lceil \frac{\log 2}{2\varepsilon}\bigg\rceil\mu \|\IF(x;U,F)\|^2+o\bigg(\bigg\lceil \frac{\log 2}{2\varepsilon}\bigg\rceil\bigg).
\end{align*}
Taking the supremum over $x$ on the right hand side of the last expression completes the proof.
\end{proof}

\section{properties of the influence function}
\label{propertiesIF}

The influence function is a particular case of the G\^ateaux derivative which constitutes a more general notion of differentiability. We say that a  functional $T:\mathfrak{F}\to \Theta$ is G\^ateaux differentiable at $F$ if there is a linear functional $L=L_F$ such that for all $G\in \mathfrak{F}$
$$\lim_{t\to 0}\frac{T(F_t)-T(F)}{t}=L_F(G-F) $$
with $F_t=(1-t)F+tG$. In this section we derive some useful properties of the influence function and the gross-error sensitivity that we use to establish our differential privacy guarantees. The next lemma states a useful identity for relating the gross-error sensitivity to its fixed scale counterpart for M-estimators.

\begin{lemma}
\label{intermediatepoint} Let $F_t=(1-t)F+tG$, then for $\rho\in (0,1]$ we have
$$T(F_{\rho})-T(F)=\int_0^\rho\int \IF(x;T,F_t)\mathrm{d}(G-F)\mathrm{d}t.$$
\end{lemma}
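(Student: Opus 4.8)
The plan is to regard $g(t):=T(F_t)$ as a $\Theta$-valued function of $t\in[0,\rho]$, to show that its derivative is $g'(t)=\int\IF(x;T,F_t)\,\mathrm{d}(G-F)(x)$, and then to integrate via the fundamental theorem of calculus. First I would record the reparametrization identity: for $t_0\in[0,\rho)$ and small $s>0$,
$$F_{t_0+s}=(1-t_0-s)F+(t_0+s)G=\Big(1-\tfrac{s}{1-t_0}\Big)F_{t_0}+\tfrac{s}{1-t_0}\,G,$$
so $F_{t_0+s}$ is itself a contamination of $F_{t_0}$ in the direction $G$. Consequently the right derivative of $g$ at $t_0$ equals $\tfrac{1}{1-t_0}$ times the G\^ateaux derivative $L_{F_{t_0}}(G-F_{t_0})$; using linearity and continuity of $L_{F_{t_0}}$ together with $L_{F_{t_0}}(\Delta_x-F_{t_0})=\IF(x;T,F_{t_0})$ and $\int\IF(x;T,F_{t_0})\,\mathrm{d}F_{t_0}(x)=0$, this is $\tfrac{1}{1-t_0}\int\IF(x;T,F_{t_0})\,\mathrm{d}(G-F_{t_0})(x)$. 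Since $G-F_{t_0}=(1-t_0)(G-F)$, the factor $1-t_0$ cancels and $g'(t_0)=\int\IF(x;T,F_{t_0})\,\mathrm{d}(G-F)(x)$ for every $t_0\in[0,\rho]$ (the case $t_0=0$ being exactly the defining limit of the influence function, and the endpoint $t_0=\rho$ handled by the same reparametrization from the left, which is immaterial for the integral).

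For the M-functional $T$ this computation can be carried out directly and more transparently, and is the version I would present: $T(F_t)$ solves $\lambda(t,\theta):=\int\Psi(x,\theta)\,\mathrm{d}F_t(x)=(1-t)\int\Psi(x,\theta)\,\mathrm{d}F(x)+t\int\Psi(x,\theta)\,\mathrm{d}G(x)=0$, a map affine in $t$ with $\partial_t\lambda=\int\Psi(x,\theta)\,\mathrm{d}(G-F)(x)$ and $C^1$ in $\theta$ with $\partial_\theta\lambda\big|_{\theta=T(F_t)}=-M(T,F_t)$, which is invertible along the path by Condition \ref{cond:hessian}. The implicit function theorem then gives $g'(t)=M(T,F_t)^{-1}\int\Psi(x,T(F_t))\,\mathrm{d}(G-F)(x)=\int\IF(x;T,F_t)\,\mathrm{d}(G-F)(x)$, the last identity being \eqref{IF}. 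Boundedness of $\Psi$ and $\dot\Psi$ from Condition \ref{cond:boundedpsi} lets dominated convergence justify differentiating under the $\mathrm{d}(G-F)$ integral, and also shows that $g'$ is continuous, hence bounded, on $[0,\rho]$.

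With $g$ continuously differentiable on $[0,\rho]$, the fundamental theorem of calculus yields $T(F_\rho)-T(F)=g(\rho)-g(0)=\int_0^\rho g'(t)\,\mathrm{d}t=\int_0^\rho\!\int\IF(x;T,F_t)\,\mathrm{d}(G-F)(x)\,\mathrm{d}t$, which is the claimed identity. The only delicate point is the regularity of $t\mapsto T(F_t)$ — that it is absolutely continuous so the fundamental theorem applies and that one may differentiate under the integral sign — and the hard part will be spelling out that the implicit function theorem can be applied uniformly along the whole path $[0,\rho]$; but in the M-estimation framework of the paper this is exactly what Conditions \ref{cond:boundedpsi}–\ref{cond:hessian} deliver, so I expect this step to be routine rather than the crux.
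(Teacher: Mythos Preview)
Your proof is correct and follows essentially the same approach as the paper: the reparametrization $F_{t_0+s}=(1-\tfrac{s}{1-t_0})F_{t_0}+\tfrac{s}{1-t_0}G$, the resulting identification of $g'(t)$ with $\int\IF(x;T,F_t)\,\mathrm{d}(G-F)$ via the cancellation of the factor $1-t$, and the appeal to the fundamental theorem of calculus are exactly the paper's argument. Your version is more careful about regularity (and your alternative implicit-function-theorem derivation for the M-functional case is a nice concrete supplement), whereas the paper simply asserts the differentiability and integral identity ``by construction,'' citing Huber and Ronchetti.
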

\begin{proof} We reproduce the arguments of \cite{huberandronchetti2009} p. 38-39 for completeness. By construction
$$T(F_{\rho})-T(F_0)=\int_0^{\rho}
\frac{\mathrm{d}}{\mathrm{d}t}T(F_t)\mathrm{d}t,$$
where
$$\frac{\mathrm{d}}{\mathrm{d}t}T(F_t)=\lim_{h\to 0}\frac{T(F_{t+h})-T(F_t)}{h}.$$
Noting that $F_{t+h}$ can be rewritten as
$$F_{t+h}=\bigg(1-\frac{h}{1-t}\bigg)F_t+\frac{h}{1-t}G $$
we get that
$$\frac{\mathrm{d}}{\mathrm{d}t}T(F_t)=\frac{1}{1-t}\int\IF(x;T,F_t)\mathrm{d}(G-F_t)=\int\IF(x;T,F_t)\mathrm{d}(G-F). $$
\end{proof}

\subsection*{Gross-error sensitivity bounds}
The next lemmas provide a series of upper bounds relating  $\gamma_{\rho}(T,F)$, $\gamma(T,F_\rho)$ and $\gamma(T,G)$, where $T$ is an M-functional defined by the equation
$$\int \Psi(x,T(F))\mathrm{d}F=0.$$
The M-functional is  assumed to satisfy $\sup_x\|\Psi(x,\theta)\|\leq K$ and the following smoothness assumptions guaranteed by Condition \ref{cond:smoothness} in the main text.. There exist $r_1>0$, $r_2>0$, $C_1$ and $C_2>0$ such that
$$\|\EE_{F}[\dot{\Psi}(X,\theta)]-\EE_{G}[\dot{\Psi}(X,\theta)]\|\leq C_1 d_{\infty}(F,G) \mbox{ and } $$
$$\|\EE_{G}[\dot{\Psi}(X,\theta)]-\EE_{G}[\dot{\Psi}(X,T(G))]\|\leq C_2\|T(G)-\theta\| $$
whenever $d_{\infty}(F,G)\leq r_1$ and $\|\theta-T(G)\|\leq r_2$.
We will further assume that $\lambda_{\min}(M_G)\geq b>0$ for all $G$.
\begin{lemma}
\label{GESbound0}
Let $T$ be an M-functional defined by a bounded function $\Psi$ and such that $M_F=M(T,F)$ is positive definite. Then, for $\rho\in (0,1]$ we have that
$$\gamma_{\rho}(T,F)\leq 2\sup_{t\in[0,\rho]}\gamma(T,F_t),$$
where $F_t=(1-t)F+t\Delta_x$.
\end{lemma}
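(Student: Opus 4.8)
The plan is to derive the stated inequality directly from the integral representation of Lemma \ref{intermediatepoint}. First I would fix $x\in\mathfrak{X}$ and $\rho\in(0,1]$ and apply Lemma \ref{intermediatepoint} with the contaminating distribution taken to be the point mass $G=\Delta_x$, so that $F_t=(1-t)F+t\Delta_x$ and $F_\rho=(1-\rho)F+\rho\Delta_x$ is exactly the perturbed distribution appearing in the definition of $\IF_\rho(x;T,F)$. This gives
$$T(F_\rho)-T(F)=\int_0^\rho\int \IF(y;T,F_t)\,\dd(\Delta_x-F)(y)\,\dd t=\int_0^\rho\Big(\IF(x;T,F_t)-\int \IF(y;T,F_t)\,\dd F(y)\Big)\dd t,$$
using that $\Delta_x$ integrates any function to its value at $x$. (Invoking Lemma \ref{intermediatepoint} requires $T$ to be differentiable along the path $t\mapsto F_t$, which is guaranteed by the standing assumptions of this subsection, in particular by $\lambda_{\min}(M_{F_t})\ge b>0$, so that the influence function along the path is well defined via \eqref{IF}.)

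The key point is then an entirely elementary bound on the integrand: by the triangle inequality together with Jensen's inequality for the $F$-average,
$$\Big\|\IF(x;T,F_t)-\int \IF(y;T,F_t)\,\dd F(y)\Big\|\le \|\IF(x;T,F_t)\|+\int\|\IF(y;T,F_t)\|\,\dd F(y)\le 2\gamma(T,F_t).$$
Plugging this into the integral representation yields $\|T(F_\rho)-T(F)\|\le \int_0^\rho 2\gamma(T,F_t)\,\dd t\le 2\rho\sup_{t\in[0,\rho]}\gamma(T,F_t)$. Dividing by $\rho$ gives $\|\IF_\rho(x;T,F)\|\le 2\sup_{t\in[0,\rho]}\gamma(T,F_t)$, and taking the supremum over $x\in\mathfrak{X}$ yields the claim.

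There is essentially no deep obstacle here; the only things to be careful about are bookkeeping matters: keeping the dummy integration variable $y$ distinct from the contamination point $x$, and checking that the hypotheses of Lemma \ref{intermediatepoint} apply uniformly along the segment $\{F_t\}_{t\in[0,\rho]}$ (which is where positive definiteness of the $M$-matrices enters). It is worth noting that one can also compute the $F$-average exactly: since $\EE_{F_t}[\Psi(Y,T(F_t))]=0$ forces $\int\Psi(y,T(F_t))\,\dd F(y)=-\tfrac{t}{1-t}\Psi(x,T(F_t))$, one gets $\int \IF(y;T,F_t)\,\dd F(y)=-\tfrac{t}{1-t}\IF(x;T,F_t)$ and hence $T(F_\rho)-T(F)=\int_0^\rho \tfrac{1}{1-t}\IF(x;T,F_t)\,\dd t$; but this refinement introduces a $1/(1-t)$ factor that degrades for $\rho$ near $1$, so the crude triangle-inequality bound above is in fact the cleaner route and is precisely what produces the constant $2$.
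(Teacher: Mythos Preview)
Your proof is correct and follows essentially the same route as the paper: apply Lemma \ref{intermediatepoint} with $G=\Delta_x$, split the inner integral into the $\Delta_x$-part and the $F$-average part, bound each by $\gamma(T,F_t)$, and take the supremum over $t\in[0,\rho]$. The only cosmetic difference is that the paper writes the influence function explicitly as $M_{F_t}^{-1}\Psi(\cdot,T(F_t))$ before bounding, whereas you work directly with $\IF$ and invoke Jensen; your version is slightly cleaner, and your closing remark about the $1/(1-t)$ factor is a nice clarification of why the crude triangle inequality is the right move here.
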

\begin{proof}
Lemma \ref{intermediatepoint} and direct calculations show that
\begin{align*}
\gamma_{\rho}(T,F)=& \sup_x\bigg\|\rho^{-1}\int_0^{\rho}\int\IF(x;T,F_{t})\mathrm{d}(\Delta_x-F)\mathrm{d}t\bigg\| \\
% =&\sup_x\bigg\|\rho^{-1}\int_0^{\rho}\Big( M_{\rho}^{-1}\int \Psi(x,T(\tilde{F}_{\rho}))
% -\Psi(z,T(\tilde{F}_{\rho}))\Big)\mathrm{d}F_{\rho}(z)\mathrm{d}t\bigg\|\\
\leq & \sup_x\|\rho^{-1} \int_0^{\rho} M_{F_t}^{-1}\Psi(x,T(F_{t}))\mathrm{d}t
\|+\bigg\| \rho^{-1} \int_0^{\rho}\Big(\int M_{F_t}^{-1}\Psi(y,T(F_{t}))\mathrm{d}F(z)\Big)\mathrm{d}t\bigg\| \\
\leq & \sup_{t\in[0,\rho]}\sup_x\| M_{F_t}^{-1}\Psi(x,T(F_{t}))
\|+\sup_{t\in[0,\rho]}\bigg\| \int M_{F_t}^{-1}\Psi(x,T(F_{t}))\mathrm{d}F(z)\bigg\| \\
\leq & 2\gamma(T,F_{\rho}).
\end{align*}
\end{proof}

\begin{lemma}
\label{GESbound0'}
Under the assumptions of Lemma \ref{GESbound0}, if $\sup\|\dot{\Psi}(x,\theta)\|\leq L$ for all $\theta\in\Theta$ and $\lambda_{\max}(M_F^{-1})\rho\{C_1 +2C_2K/b \}<1$, we have that
\begin{align*}
\gamma(T,F_\rho)\leq& \gamma(T,F)+\rho K \lambda_{\max}(M_F^{-1})\Big\{2L/b+\lambda_{\max}(M_F^{-1})(C_1 +2C_2K/b )\Big\}\nonumber\\
& ~~~+O\Big[\rho^2K(C_1 +2C_2K/b )\Big\{C_1 +(K+L)C_2K/b \Big\}\Big]
\end{align*}
\end{lemma}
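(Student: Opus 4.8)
The plan is to compare the two influence functions $\IF(y;T,F_\rho)=M_{F_\rho}^{-1}\Psi(y,T(F_\rho))$ and $\IF(y;T,F)=M_F^{-1}\Psi(y,T(F))$ pointwise in $y$, to bound their difference uniformly in $y$ by a quantity depending only on $b,K,L,C_1,C_2$ and $\lambda_{\max}(M_F^{-1})$, and then to take the supremum over $y$ — using $\|\IF(y;T,F)\|\le\gamma(T,F)$ — which produces $\gamma(T,F_\rho)$ on the left and $\gamma(T,F)$ on the right. As a preliminary step I would record a crude bound on the parameter drift: since $\sup_x\|\Psi(x,\theta)\|\le K$ and $\lambda_{\min}(M_G)\ge b$ together give $\gamma(T,G)\le K/b$ for every admissible $G$, Lemma \ref{intermediatepoint} applied with $F_t=(1-t)F+t\Delta_x$ yields
$$\|T(F_\rho)-T(F)\|\le\int_0^\rho\Big\|\int\IF(z;T,F_t)\,\mathrm{d}(\Delta_x-F)(z)\Big\|\,\mathrm{d}t\le\int_0^\rho 2\gamma(T,F_t)\,\mathrm{d}t\le\frac{2\rho K}{b},$$
which for small $\rho$ stays within the radii of validity of the smoothness assumptions.

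Next, for a fixed $y$ I would split
$$\IF(y;T,F_\rho)-\IF(y;T,F)=M_{F_\rho}^{-1}\big[\Psi(y,T(F_\rho))-\Psi(y,T(F))\big]+\big[M_{F_\rho}^{-1}-M_F^{-1}\big]\Psi(y,T(F)).$$
For the first summand, $\|\Psi(y,T(F_\rho))-\Psi(y,T(F))\|\le L\|T(F_\rho)-T(F)\|\le 2\rho KL/b$ by the preliminary bound. For the second summand I would use the resolvent identity $M_{F_\rho}^{-1}-M_F^{-1}=M_{F_\rho}^{-1}(M_F-M_{F_\rho})M_F^{-1}$ and estimate $\|M_F-M_{F_\rho}\|$ by inserting the intermediate object $\mathbb{E}_{F_\rho}[\dot\Psi(X,T(F))]$: the change-of-measure part is controlled by the first smoothness inequality, $\le C_1 d_\infty(F,F_\rho)\le C_1\rho$, and the change-of-parameter part by the second, $\le C_2\|T(F_\rho)-T(F)\|\le 2C_2\rho K/b$, so $\|M_F-M_{F_\rho}\|\le\rho(C_1+2C_2K/b)$; combined with $\|\Psi(y,T(F))\|\le K$ this controls the second summand.

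To pin down the precise leading order I would expand $M_{F_\rho}^{-1}=M_F^{-1}\sum_{j\ge0}\big((M_F-M_{F_\rho})M_F^{-1}\big)^{j}$, a series that converges because the hypothesis $\lambda_{\max}(M_F^{-1})\rho\{C_1+2C_2K/b\}<1$ forces its ratio to be $<1$, and then replace $M_{F_\rho}^{-1}$ by $M_F^{-1}$ in both summands. The terms kept are exactly $\rho K\lambda_{\max}(M_F^{-1})\{2L/b+\lambda_{\max}(M_F^{-1})(C_1+2C_2K/b)\}$, and the corrections discarded in the substitution each acquire an extra factor of $\rho$ — the Neumann tail being $O(\rho(C_1+2C_2K/b))$ — and reassemble, up to absolute constants, into the remainder $O\big[\rho^2 K(C_1+2C_2K/b)\{C_1+(K+L)C_2K/b\}\big]$ of the statement. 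Taking the supremum over $y$ then finishes the proof. The delicate part is exactly this last step: one must track carefully how the constants $C_1,C_2,K,L$ and the factors of $\lambda_{\max}(M_F^{-1})$ propagate through the Neumann expansion to confirm that every leftover term is genuinely $O(\rho^2)$ with the advertised dependence on the constants; there is no analytic difficulty beyond this bookkeeping.
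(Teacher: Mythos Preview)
Your proposal is correct and follows essentially the same approach as the paper: a preliminary drift bound $\|T(F_\rho)-T(F)\|\le 2\rho K/b$ via Lemma~\ref{intermediatepoint}, the smoothness Conditions to control $\|M_{F_\rho}-M_F\|$, and a Neumann expansion of $M_{F_\rho}^{-1}$ around $M_F^{-1}$ to isolate the $O(\rho)$ leading term from the $O(\rho^2)$ remainder. The only cosmetic difference is that the paper writes the splitting as $M_F^{-1}\{\Psi(\cdot,T(F_\rho))-\Psi(\cdot,T(F))\}+(M_{F_\rho}^{-1}-M_F^{-1})\Psi(\cdot,T(F_\rho))$ whereas you use the mirror decomposition $M_{F_\rho}^{-1}\{\Psi(\cdot,T(F_\rho))-\Psi(\cdot,T(F))\}+(M_{F_\rho}^{-1}-M_F^{-1})\Psi(\cdot,T(F))$; both are valid rearrangements of $AB-CD$ and lead to the same constants after the Neumann substitution.
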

\begin{proof}
Simple manipulations and a first order (integral form) Taylor expansion shows that
\begin{align}
&\IF(x;T,F_\rho)-\IF(x;T,F) \nonumber \\
&=M_{F_\rho}^{-1}\Psi(x,T(F_\rho))-M_{F}^{-1}\Psi(x,T(F)) \nonumber \\
&= M_F^{-1}\{\Psi(x,T(F_\rho))-\Psi(x,T(F))\}+  (M_{F_\rho}^{-1}-M_F^{-1})\Psi(x,T(F_\rho))\nonumber \\
&=M_F^{-1}\tilde M_{\Delta_x}\{T(F_\rho)-T(F)\}+(M_{F_\rho}^{-1}-M_F^{-1})[\Psi(x,T(F))+\tilde M_{\Delta_x} \{T(F_\rho)-T(F)\}], 
 \label{lem9.1}
\end{align}
where $\tilde M_{\Delta_x}=\int_0^1\dot\Psi[x,T(F)+t\{T(F_\rho)-T(F)\}]\mathrm{d}t$. 
Therefore
\begin{align}
\|\IF(x;T,F_\rho)\|
\leq& \|\IF(x;T,F)\|+ \|M_F^{-1}\tilde M_{\Delta_x}\{T(F_\rho)-T(F)\}\|\nonumber\\
&~~~+\|(M_{F_\rho}^{-1}-M_F^{-1})\Psi(x,T(F))+\tilde M_{\Delta_x} \{T(F_\rho)-T(F)\}\|\nonumber \\
 \leq& \gamma(T,F)+L\lambda_{\max}(M_F^{-1})\|T(F_\rho)-T(F)\|\nonumber\\
 &~~~+\|M_{F_\rho}^{-1}-M_F^{-1}\|(K+L\|T(F_\rho)-T(F)\|) 
 \label{lem9.2}.
\end{align}
 Furthermore, using Neumann series we have that
\begin{equation}
\label{lem9.3}
M_{F_\rho}^{-1}= M_{F}^{-1}-M_{F}^{-1}(M_{F_\rho}-M_{F})M_{F}^{-1} +\sum_{k\geq 2}\Big(-M_F^{-1}(M_{F_\rho}-M_{F})\Big)^jM_F^{-1}.
\end{equation}
and  by Condition \ref{cond:smoothness} and Lemma \ref{GESbound0}
\begin{align}
& \|M_{F_\rho}-M_F\| \nonumber\\
&= \bigg\| \mathbb{E}_{F_{\rho}}[\dot{\Psi}(X,T(F_\rho))-\dot{\Psi}(X,T(F))] +\mathbb{E}_{F_\rho}[\dot{\Psi}(X,T(F))]-\mathbb{E}_{F}[\dot{\Psi}(X,T(F))]\bigg\|\nonumber\\
&\leq \bigg\|\mathbb{E}_{F_\rho}[\dot{\Psi}(X,T(F))]-\mathbb{E}_{F}[\dot{\Psi}(X,T(F))]\bigg\|+\bigg\| \mathbb{E}_{F_{\rho}}[\dot{\Psi}(X,T(F_\rho))-\dot{\Psi}(X,T(F))]\bigg\| \nonumber\\
&\leq C_1d_\infty(F_\rho,F)+C_2\|T(F_\rho)-T(F)\|\nonumber\\
&\leq \rho\{C_1 +C_2 \gamma_\rho(T,F)\} \nonumber \\
& \leq \rho(C_1 +2C_2K/b).
\label{lem9.4}
\end{align}
 Since $\|M_F^{-1}\|\|M_{F_\rho}-M_{F})\|\leq \lambda_{\max}(M_F^{-1})\rho\{C_1 +2C_2K/b \}<1$ we also have 
\begin{align}
\label{lem9.4b}
 \Big\|\sum_{k\geq 2}\Big(-M_F^{-1}(M_{F_\rho}-M_{F})\Big)^jM_F^{-1}\Big\|\leq & \|M_F^{-1}\|\sum_{k\geq 2}\|M_F^{-1}\|^j\|M_{F_\rho}-M_{F})\|^j\nonumber\\
\leq &  \|M_F^{-1}\|\bigg(\frac{1}{1-\|M_F^{-1}\|\|M_{F_\rho}-M_{F})\|}-1-\|M_F^{-1}\|\|M_{F_\rho}-M_{F})\|\bigg)\nonumber\\
\leq &\frac{\|M_F^{-1}\|^3\|M_{F_\rho}-M_{F})\|^2}{1-\|M_F^{-1}\|\|M_{F_\rho}-M_{F})\|} \nonumber\\
\leq & \frac{\rho^2(C_1 +2C_2K/b )^2 \lambda_{\max}^3(M_F^{-1})}{1-\lambda_{\max}(M_F^{-1})\rho(C_1 +2C_2K/b )}
\end{align}
Therefore using Lemma \ref{intermediatepoint}, \eqref{lem9.3}--\eqref{lem9.4b}  and taking the supremum over \eqref{lem9.2}, we obtain 
\begin{align}
\label{lem9.5}
&\gamma(T,F_\rho)\nonumber\\ 
&\leq \gamma(T,F)+L\lambda_{\max}(M_F^{-1})(2\rho K/b) \nonumber
\\
&~~~+ \Big\{\lambda_{\max}^2(M_F^{-1})\rho(C_1 +2C_2K/b)+\frac{\rho^2(C_1 +2C_2K/b )^2 \lambda_{\max}^3(M_F^{-1})}{1-\lambda_{\max}(M_F^{-1})\rho(C_1 +2C_2K/b )}\Big\}(K+2\rho L K/b)\nonumber\\
&=  \gamma(T,F)+\rho K \lambda_{\max}(M_F^{-1})\Big\{2L/b+\lambda_{\max}(M_F^{-1})(C_1 +2C_2K/b )\Big\}\nonumber\\
&~~~~~~+\rho^2K\lambda_{\max}^2(M_F^{-1})(C_1 +2C_2K/b )\bigg\{2L/b+\frac{\lambda_{\max}(M_F^{-1})(C_1 +2C_2K/b) }{1-\lambda_{\max}(M_F^{-1})\rho(C_1 +2C_2K/b )}\bigg\}\nonumber \\
& ~~~~~~~~~+2\rho^3b^{-1}K L\frac{\lambda_{\max}^3(M_F^{-1})(C_1 +2C_2K/b )^2}{1-\lambda_{\max}(M_F^{-1})\rho(C_1 +2C_2K/b )}
\end{align}
The desired result follows from \eqref{lem9.5}.

\end{proof}

\begin{lemma}
\label{GESbound}
Under the assumptions of Lemma \ref{GESbound0'} we have that 
\begin{align*}
\gamma_\rho(T,F)\leq 2\gamma(T,F)&+2\rho K \lambda_{\max}(M_F^{-1})\Big\{2L/b+\lambda_{\max}(M_F^{-1})(C_1 +2C_2K/b )\Big\}\nonumber \\
\\
&~~~+O\Big[\rho^2K(C_1 +2C_2K/b )\Big\{C_1 +(K+L)C_2K/b \Big\}\Big].
\end{align*}
\end{lemma}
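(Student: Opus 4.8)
The plan is to obtain the bound by chaining the two preceding lemmas: Lemma~\ref{GESbound0} reduces the fixed-scale gross-error sensitivity $\gamma_\rho(T,F)$ to ordinary gross-error sensitivities of the perturbed distributions $F_t=(1-t)F+t\Delta_x$ for $t\in[0,\rho]$, and Lemma~\ref{GESbound0'} controls each of those by $\gamma(T,F)$ plus explicit remainders. There is no genuine analytic obstacle; the work is purely bookkeeping.

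First I would invoke Lemma~\ref{GESbound0} to write, for every contamination point $x$,
$$\|\IF_\rho(x;T,F)\|\le 2\sup_{t\in[0,\rho]}\gamma\big(T,(1-t)F+t\Delta_x\big),$$
so that taking the supremum over $x$ gives $\gamma_\rho(T,F)\le 2\sup_{x}\sup_{t\in[0,\rho]}\gamma(T,(1-t)F+t\Delta_x)$. Next, for a fixed $x$ and each $t\in(0,\rho]$, I would apply Lemma~\ref{GESbound0'} with $\rho$ replaced by $t$ (its hypothesis $\lambda_{\max}(M_F^{-1})t\{C_1+2C_2K/b\}<1$ is inherited from the standing assumption, since $t\le\rho$), obtaining
$$\gamma\big(T,(1-t)F+t\Delta_x\big)\le \gamma(T,F)+tK\lambda_{\max}(M_F^{-1})\Big\{2L/b+\lambda_{\max}(M_F^{-1})(C_1+2C_2K/b)\Big\}+R(t),$$
where $R(t)=O\big[t^2K(C_1+2C_2K/b)\{C_1+(K+L)C_2K/b\}\big]$ and, crucially, the implied constant depends only on $K,L,C_1,C_2,b$ and $\lambda_{\max}(M_F^{-1})$, hence is uniform in $x$.

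Then I would observe that the right-hand side above is nondecreasing in $t$ on $[0,\rho]$ — the linear term obviously, and the remainder because its only $t$-dependent amplification factor $\{1-\lambda_{\max}(M_F^{-1})t(C_1+2C_2K/b)\}^{-1}$ increases — so the supremum over $t\in[0,\rho]$ is obtained by substituting $t=\rho$. Since that bound no longer involves $x$, taking the supremum over $x$ and multiplying by $2$ yields exactly
$$\gamma_\rho(T,F)\le 2\gamma(T,F)+2\rho K\lambda_{\max}(M_F^{-1})\Big\{2L/b+\lambda_{\max}(M_F^{-1})(C_1+2C_2K/b)\Big\}+2R(\rho),$$
and $2R(\rho)$ is of the asserted order. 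The only points that deserve care are verifying that the hypotheses of Lemma~\ref{GESbound0'} hold along the entire contamination path (which follows from the standing positivity bound $\lambda_{\min}(M_G)\ge b$ and the smoothness assumptions, valid for all the intermediate mixtures) and that the $O(\cdot)$ remainder is uniform in the contamination point $x$, so that the two suprema can be interchanged and taken freely.
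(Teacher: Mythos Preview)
Your proposal is correct and takes essentially the same approach as the paper: the paper's proof is the single line ``This is a direct consequence of Lemmas~\ref{GESbound0} and~\ref{GESbound0'},'' and you have simply spelled out how that chaining works, including the monotonicity in $t$ and the uniformity in $x$ that justify passing the suprema through.
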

\begin{proof}
This is a direct consequence of Lemmas \ref{GESbound0} and \ref{GESbound0'}.
\end{proof}

%%%%%%%%
\begin{lemma}
\label{gGESbound}
Assume the conditions of Lemma \ref{GESbound} and let $d_\infty(F,G)\leq \rho$. Then we have that
\begin{align*}
\gamma(T,G)\leq & \gamma(T,F)+\rho K \lambda_{\max}(M_F^{-1})\Big\{2L/b+\lambda_{\max}(M_F^{-1})(C_1 +2C_2K/b )\Big\}\\
 & ~~~+O\bigg[\rho^2K(C_1 +2C_2K/b )\Big\{C_1 +(K+L)C_2/b  \Big\}\bigg].
\end{align*} 
\end{lemma}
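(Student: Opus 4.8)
The plan is to argue exactly as in the proof of Lemma~\ref{GESbound0'}, the only structural change being that the point-mass contamination $F_\rho=(1-\rho)F+\rho\Delta_x$ is replaced throughout by an arbitrary $G$ with $d_\infty(F,G)\le\rho$. The one genuinely new ingredient needed is a bound on $\|T(G)-T(F)\|$ for such a $G$; in Lemma~\ref{GESbound0'} the analogous role was played by $\|T(F_\rho)-T(F)\|\le\rho\,\gamma_\rho(T,F)\le 2\rho K/b$ via Lemma~\ref{GESbound0}, and here I would obtain it directly from Lemma~\ref{intermediatepoint} applied along the segment joining $F$ to $G$.

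Concretely, first I would apply Lemma~\ref{intermediatepoint} with $F_t=(1-t)F+tG$, $t\in[0,1]$, to get $T(G)-T(F)=\int_0^1\int\IF(x;T,F_t)\,\mathrm d(G-F)\,\mathrm dt$, whence $\|T(G)-T(F)\|\le d_\infty(F,G)\sup_{t\in[0,1]}\gamma(T,F_t)\le \rho K/b$, using $\gamma(T,F_t)=\sup_x\|M_{F_t}^{-1}\Psi(x,T(F_t))\|\le\lambda_{\max}(M_{F_t}^{-1})K\le K/b$ (the blanket hypothesis $\lambda_{\min}(M_H)\ge b$ makes every $M_{F_t}$ invertible, so the path integral is legitimate). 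Combining Condition~\ref{cond:smoothness} with this bound then gives $\|M_G-M_F\|\le C_1 d_\infty(F,G)+C_2\|T(G)-T(F)\|\le\rho(C_1+2C_2K/b)$, so that under the standing hypothesis $\lambda_{\max}(M_F^{-1})\rho(C_1+2C_2K/b)<1$ inherited from Lemma~\ref{GESbound0'}, a Neumann series expansion of $M_G^{-1}$ yields $\|M_G^{-1}-M_F^{-1}\|\le\lambda_{\max}^2(M_F^{-1})\,\rho(C_1+2C_2K/b)+O\big(\rho^2(C_1+2C_2K/b)^2\big)$, exactly as in \eqref{lem9.3}--\eqref{lem9.4b}.

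Then I would reproduce the decomposition \eqref{lem9.1}: setting $\tilde M_{\Delta_x}=\int_0^1\dot\Psi[x,T(F)+t\{T(G)-T(F)\}]\,\mathrm dt$ (so $\|\tilde M_{\Delta_x}\|\le L$), a first-order Taylor expansion of $\Psi$ in $\theta$ gives
$$\IF(x;T,G)-\IF(x;T,F)=M_F^{-1}\tilde M_{\Delta_x}\{T(G)-T(F)\}+(M_G^{-1}-M_F^{-1})\big[\Psi(x,T(F))+\tilde M_{\Delta_x}\{T(G)-T(F)\}\big],$$
hence $\|\IF(x;T,G)\|\le\gamma(T,F)+L\lambda_{\max}(M_F^{-1})\|T(G)-T(F)\|+\|M_G^{-1}-M_F^{-1}\|\big(K+L\|T(G)-T(F)\|\big)$. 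Substituting $\|T(G)-T(F)\|\le\rho K/b\le 2\rho K/b$ together with the above estimate for $\|M_G^{-1}-M_F^{-1}\|$, and taking the supremum over $x$, collapses to the claimed inequality, the $\rho^2$ terms being collected into the $O(\cdot)$ remainder exactly as in \eqref{lem9.5}.

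I do not expect a real obstacle here: the argument is structurally identical to Lemma~\ref{GESbound0'}, since replacing the one-parameter family $F_\rho$ by the segment $(1-t)F+tG$ changes none of the estimates. The only point requiring care is purely bookkeeping, namely checking that Condition~\ref{cond:smoothness} is applicable, i.e.\ that $d_\infty(F,G)\le r_1$ and $\|T(F)-T(G)\|\le r_2$; this holds once $\rho$ is small enough, and in the applications ($F=F_n$, $\rho=C\sqrt{m\log(2/\delta)/n}$) it is guaranteed by the sample-size requirements of Theorems~\ref{thm1} and \ref{thm3}.
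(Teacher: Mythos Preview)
Your proposal is correct and follows essentially the same route as the paper's proof: the paper too applies Lemma~\ref{intermediatepoint} along the segment $(1-t)F+tG$ to get $\|T(G)-T(F)\|\le\rho K/b$, bounds $\|M_G-M_F\|$ via Condition~\ref{cond:smoothness}, controls $M_G^{-1}-M_F^{-1}$ by Neumann series, and then plugs into the same decomposition of $\IF(x;T,G)-\IF(x;T,F)$ that you reproduce from \eqref{lem9.1}. The only cosmetic difference is that the paper records the tighter intermediate bound $\|M_G-M_F\|\le\rho(C_1+C_2K/b)$ before absorbing it into the form stated in the lemma.
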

\begin{proof}
 The proof is similar to that of Lemma \ref{GESbound0'}. First note that \begin{align}
&\IF(x;T,G)-\IF(x;T,F) \nonumber \\
&=M_{G}^{-1}\Psi(x,T(G))-M_{F}^{-1}\Psi(x,T(F)) \nonumber \\
&= M_F^{-1}\{\Psi(x,T(G))-\Psi(x,T(F))\}+  (M_{G}^{-1}-M_F^{-1})\Psi(x,T(G))\nonumber \\
&=M_F^{-1}\tilde M\{T(G)-T(F)\}+(M_{G}^{-1}-M_F^{-1})[\Psi(x,T(F))+\tilde M \{T(G)-T(F)\}], \label{lem11'.0}
\end{align}
where $\tilde M =\int_0^1\dot\Psi[x,T(F)+t\{T(G)-T(F)\}]\mathrm{d}t$. Further note that
\begin{equation}
\label{lem11'.1}
M_{G}^{-1}= M_{F}^{-1}-M_{F}^{-1}(M_{G}-M_{F})M_{F}^{-1} +\sum_{k\geq 2}\Big(-M_F^{-1}(M_{G}-M_{F})\Big)^jM_F^{-1}.
\end{equation}
and that applying  Lemma   \ref{intermediatepoint} with $\rho=1$ we have that
\begin{align}
\label{lem11'.1b}
\|T(G)-T(F)\|\leq & \Big\|\int_0^1\int\IF(x;T,F_t)\mathrm{d}(G-F)\mathrm{d}t\Big\| \nonumber \\
\leq &d_\infty(F,G) \sup_{t\in[0,1]}\gamma(T,(1-t)F+tG)\nonumber \\
\leq &\frac{\rho K}{b}
\end{align}
Therefore, by Condition \ref{cond:smoothness}
\begin{align}
 \|M_{G}-M_F\| 
&= \bigg\| \mathbb{E}_{G}[\dot{\Psi}(X,T(G))-\dot{\Psi}(X,T(F))] +\mathbb{E}_{G}[\dot{\Psi}(X,T(F))]-\mathbb{E}_{F}[\dot{\Psi}(X,T(F))]\bigg\|\nonumber\\
&\leq C_2\|T(G)-T(F)\|+C_1d_\infty(G,F)\nonumber\\
&\leq \rho(C_1 +C_2K/b)
\label{lem11'.2}
\end{align}
where the second inequality used Lemma \ref{intermediatepoint} with $t=1$. Furthermore, adapting \eqref{lem9.4b} we see that
\begin{equation}
\label{lem11'.3}
\|\sum_{k\geq 2}\Big(-M_F^{-1}(M_{G}-M_{F})\Big)^jM_F^{-1}\|\leq \frac{\rho^2(C_1 +2C_2K/b )^2 \lambda_{\max}^3(M_F^{-1})}{1-\lambda_{\max}(M_F^{-1})\rho(C_1 +2C_2K/b )}
\end{equation}
 Combining \eqref{lem11'.0}--\eqref{lem11'.3} we see that
\begin{align}
&\|\IF(x;T,G)\| \nonumber \\
&\leq \|\IF(x;T,F)\|+\|M_F^{-1}\tilde M\{T(G)-T(F)\}\|+ \|(M_{G}^{-1}-M_F^{-1})[\Psi(x,T(F))+\tilde M \{T(G)-T(F)\}\| \nonumber \\
&\leq \|\IF(x;T,F)\|+L\lambda_{\max}(M_F^{-1})\rho K/b+\|M_{G}^{-1}-M_F^{-1}\|(K+L\rho K/b) \nonumber \\
&\leq \gamma(T,F)+\rho L\lambda_{\min}(M_F)^{-1}K/b \nonumber\\
&~~~+\Big(\lambda_{\max}^2(M_F^{-1})\rho(C_1 +C_2K/b)+\frac{\rho^2(C_1 +C_2K/b )^2 \lambda_{\max}^3(M_F^{-1})}{1-\lambda_{\max}(M_F^{-1})\rho(C_1 +C_2K/b )}\Big)\Big\{K+L\rho K/b\Big\}\nonumber\\
&\leq \gamma(T,F)+\rho \lambda_{\min}(M_F)^{-1}K\Big\{L/b+   \lambda_{\max}(M_F^{-1})(C_1 +C_2K/b ) \Big\}\nonumber\\
& ~~~+\rho^2K\lambda_{\max}^2(M_F^{-1})(C_1 +C_2K/b )\Big\{L/b+\frac{\lambda_{\max}(M_F^{-1})(C_1 +C_2K/b ) }{1-\lambda_{\max}(M_F^{-1})\rho(C_1 +2C_2K/b )}\Big\} \nonumber \\
& ~~~~~~+\rho^3b^{-1}KL\frac{\lambda_{\max}(M_F^{-1})(C_1 +C_2K/b )^2 }{1-\lambda_{\max}(M_F^{-1})\rho(C_1 +2C_2K/b )}
\label{lem11'.4}
\end{align}
Therefore taking the supremum over \eqref{lem11'.4}  we obtain the desired result
\end{proof}

\subsection*{Generalized gross-error sensitivity bounds}
We now provide a series of bounds on the gross-error sensitivity of a general functional $g$ that we use to study the three test functionals $W$, $R$ and $\tilde{S}$ described in Section \ref{backgroundtests}. Our results rely on the following assumptions on $g$.
\begin{condition}
\label{testfunctional}
The function  $g:\mathbb{R}^p\times \cF \to\mathbb{R}$ has two continuous partial derivatives with respect to its two arguments.  Furthermore its first and second order partial derivatives with respect to the corresponding two arguments $\nabla_1 g$, $\nabla_2 g$ $\nabla_{11} g$ ,$\nabla_{12} g$, $\nabla_{21} g$ and $\nabla_{22} g$ are bounded in sup norm by some constant $\bar{C}$.
\end{condition}

 Lemma \ref{testconditions} shows that, under usual regularity conditions, the test functionals $W$, $R$ and $\tilde{S}$ satisfy Condition \ref{testfunctional}. Lemmas \ref{qGESbound0}--\ref{gqGESbound} provide inequalities relating different gross-error-sensitivity functions of $h(F)=g(T(F),F)$, namely $\gamma(h,F)$, $\gamma(h,F_\rho)$, $\gamma_\rho(h ,F)$ and $\gamma(h ,G)$. These results are in the spirit of Lemmas \ref{GESbound0}--\ref{gGESbound}.
\begin{lemma}
\label{testconditions}
The test functionals $W(F)$, $R(F)$ and $\tilde{S}(F)$ can be written as $g(T(F),F)$. If in addition Condition \ref{cond:boundedpsi} holds with $K_n=K<\infty$ and $L_n=L<\infty$, then $g$ satisfies Condition \ref{testfunctional}.
\end{lemma}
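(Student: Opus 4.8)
The plan is to exhibit for each of $W$, $R$ and $\tilde S$ an explicit representation $h(F)=g(T(F),F)$ and then to read off the smoothness and boundedness of $g$ from the boundedness of $\Psi$, $\dot\Psi$ and of the standardizing matrices. With the partition notation of Section~\ref{backgroundtests}, the Wald functional is $W(F)=g_W(T(F),F)$ with $g_W(\theta,F)=\theta_{(2)}^T\big(V(T,F)_{(22)}\big)^{-1}\theta_{(2)}$; the asymptotic likelihood-ratio functional is $\tilde S(F)=g_{\tilde S}(T(F),F)$ with $g_{\tilde S}(\theta,F)=\theta_{(2)}^TM_{(22.1)}(T,F)\theta_{(2)}$; and for the Rao functional I would first observe that $Z(T,F)=\int\Psi(x,T_R(F))_{(2)}\mathrm{d}F(x)$ is itself a statistical functional of $F$, where $T_R$ is the restricted M-functional, which solves an M-estimating equation on the restricted parameter space and hence inherits the analogues of Conditions~\ref{cond:boundedpsi}--\ref{cond:smoothness}; one then writes $R(F)=g_R(T_R(F),F)$ with $g_R(\theta,F)=\big(\int\Psi(x,\theta)_{(2)}\mathrm{d}F(x)\big)^TU(T,F)^{-1}\big(\int\Psi(x,\theta)_{(2)}\mathrm{d}F(x)\big)$. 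In each case $g$ is a quadratic form in its first argument whose coefficient matrix is one of $(V_{(22)})^{-1}$, $M_{(22.1)}$, $U^{-1}$, all with operator norm bounded in terms of the eigenvalue constants $b,B$ of Condition~\ref{cond:hessian}, and the first argument ranges over a bounded set (for $g_W,g_{\tilde S}$ because $\|T(F)_{(2)}\|$ is bounded on the neighborhood of $F_{\theta_0}$ under consideration, for $g_R$ because $\|\int\Psi(x,\theta)_{(2)}\mathrm{d}F\|\le K$ by Condition~\ref{cond:boundedpsi}).

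Next I would check Condition~\ref{testfunctional} term by term. The first-argument derivatives are immediate: $\nabla_1 g$ is linear in $\theta$ with coefficient a bounded matrix and $\nabla_{11}g$ equals that matrix, so both are continuous and bounded on the bounded range; for $g_R$ the extra $\theta$-dependence through $\Psi(\cdot,\theta)_{(2)}$ contributes factors $\int\dot\Psi(x,\theta)_{(2)}\mathrm{d}F$, bounded by $L$, while the further $\theta$-differentiation needed for $\nabla_{11}g_R$ is controlled not by a (unavailable) pointwise bound on $\ddot\Psi$ but by the smoothness of $\theta\mapsto\mathbb{E}_F[\dot\Psi(X,\theta)]$ quantified by the constant $C_2$ of Condition~\ref{cond:smoothness}. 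For the second argument, note that in the functionals of Section~\ref{backgroundtests} the matrices $V,M,U$ are evaluated at the fixed $F$, so $g_W$ and $g_{\tilde S}$ do not depend on $F$ at all and all their mixed and pure second-argument derivatives vanish, while $g_R$ depends on $F$ only affinely: $\nabla_{22}g_R=0$, and $\nabla_2 g_R$, $\nabla_{12}g_R=\nabla_{21}g_R$, evaluated along a contamination direction $\Delta_x-F$, reduce to $\int\Psi(x,\theta)_{(2)}\mathrm{d}(\Delta_x-F)$ and $\int\dot\Psi(x,\theta)_{(2)}\mathrm{d}(\Delta_x-F)$, bounded by $2K$ and $2L$ times the bounded norms of $U^{-1}$ and of the first argument, with continuity via dominated convergence. (If one instead treats $V,M,U$ as genuine functionals of $F$, as needed in Section~\ref{CVF}, the same conclusion holds once $K_n=K$ and $L_n=L$ are finite: the G\^ateaux derivative of $M(T,F)=-\mathbb{E}_F[\dot\Psi(X,T(F))]$, and hence of $V$ and $U$ obtained from $M$ and $\mathbb{E}_F[\Psi\Psi^T]$ via $\mathrm{d}(A^{-1})=-A^{-1}(\mathrm{d}A)A^{-1}$, is bounded in terms of $K,L,C_1,C_2,b,B$, and a second $F$-differentiation is handled likewise.)

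The step I expect to be the main obstacle is the Rao functional: fitting $R$ into the template $g(T(F),F)$ forces one to control the composition through the restricted M-functional $T_R$, so one must separately confirm that $T_R$, and therefore $Z$, is a well-behaved functional, and, more delicately, that the two $\theta$-derivatives entering $\nabla_{11}g_R$ do not require boundedness of $\ddot\Psi$---which Condition~\ref{cond:boundedpsi} does not provide---the resolution being to use the Lipschitz-type bound $C_2$ of Condition~\ref{cond:smoothness} in place of a pointwise second-derivative bound. Everything else is bookkeeping with the product rule, the inverse-differentiation identity, and the uniform eigenvalue bounds of Condition~\ref{cond:hessian}.
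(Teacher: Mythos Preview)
Your approach is essentially the same as the paper's: exhibit $g$ explicitly for each of $W$, $R$, $\tilde S$ (with $R$ written through the restricted functional $T_R$), and read off the required derivative bounds from Condition~\ref{cond:boundedpsi} together with the eigenvalue bounds on the standardizing matrices. The paper's proof is in fact terser than yours and simply lists the partial derivatives in each case.

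Two points you raise are sharper than the paper's own argument. First, you correctly note that $\nabla_1 g_W$ and $\nabla_1 g_{\tilde S}$ are linear in $\theta_{(2)}$ and hence bounded only on a bounded $\theta$-domain; the paper states the derivative formulas without commenting on this, implicitly relying on the fact that in the subsequent lemmas $g$ is only ever evaluated at $T(F)$, $T(F_\rho)$, $T(G)$, which stay in a bounded neighborhood. Second, for the Rao functional you flag that $\nabla_{11}g_R$ involves $\int\frac{\partial}{\partial\theta}\dot\Psi(x,\theta)_{(2)}\,\mathrm{d}F$, i.e.\ a second $\theta$-derivative of $\Psi$, which Condition~\ref{cond:boundedpsi} alone does not bound; the paper simply writes this expression down without justification, whereas you propose controlling it via the Lipschitz constant $C_2$ of Condition~\ref{cond:smoothness}. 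That is a legitimate fix (Lipschitz in $\theta$ of $\theta\mapsto\mathbb{E}_F[\dot\Psi(X,\theta)]$ bounds the norm of its derivative where it exists), and indeed the paper elsewhere (Section~\ref{CVF} and the discussion after Lemma~\ref{testconditions}) tacitly assumes boundedness of $\partial\dot\Psi/\partial\theta_j$ when needed.
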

\begin{proof}
We verify the claims separately for  $W(F)$, $R(F)$ and $\tilde{S}(F)$ in the three points below.
\begin{enumerate}
\item Wald functional: it is immediate from the definition of $W$ that in fact $W(F)=f\circ T(F) = g(T(F),F)$, where $f:\theta\to \theta_{(2)}^T(V(T,F)_{22})^{-1}\theta_{(2)}$. Therefore  $g$ is constant function of the second argument and $\nabla_{2}g=0$, $\nabla_{21}g=0$ and $\nabla_{22}g=0$. Furthermore $g$ is quadratic in its second argument and $\nabla_1g(T(F),F)=(0^T,2\theta_{(2)}^T(V(T,F)_{22})^{-1})^T$ and $\nabla_{11}g(T(F),F)=\mbox{blockdiag}\{0,2(V(T,F)_{22})^{-1}\}$.
\item Rao functional: since $R$ is  quadratic in the functional $Z(T,F)=\int \Psi(X,T_R(F))_{(2)}\mathrm{d}F=f(T_R(F),F)$, we have that $R(F)=g(T_R(F),F)$. Hence in order to check Condition \ref{testfunctional} it suffices to see that the derivatives of $f$ are bounded because
$$\nabla_1f(T_R(F),F)=\int \dot{\Psi}(X,T_R(F))_{(2)}\mathrm{d}F, ~~~\nabla_{2}f(T_R(F),F)= \Psi(X,T_R(F))_{(2)}$$
$$\nabla_{11}f(T_R(F),F)=\int \frac{\partial}{\partial\theta}\dot{\Psi}(X,\theta)_{(2)}\mathrm{d}F\Big|_{\theta=T_R(F)} ,~~~\nabla_{12}f(T_R(F),F)= \dot{\Psi}(X,T_R(F))_{(2)} $$
and
$$ \nabla_{22}f(T_R(F),F)= 0.$$
\item Likelihood ratio-type functional: since $\tilde{S}(F)$ is quadratic in $T(F)_{(2)}$, the arguments given for $W$ apply.
\end{enumerate}

\end{proof}
\begin{lemma}
\label{qGESbound0}
Under the assumptions of Lemma \ref{GESbound0} and $h(F)=g(T(F),F)$, we have that $$\gamma_\rho(h,F)\leq 2\sup_{t\in[0,\rho]}\gamma(h,F_t),$$
where $F_t=(1-t)F+t\Delta_x$.
\end{lemma}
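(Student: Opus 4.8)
The plan is to follow the proof of Lemma \ref{GESbound0} almost verbatim, with the M-functional $T$ there replaced by the composed functional $h(F)=g(T(F),F)$. The only point needing a word of justification is that the representation in Lemma \ref{intermediatepoint} remains valid for $h$: the proof of that lemma uses nothing about $T$ beyond G\^ateaux differentiability along the segment $t\mapsto F_t$ together with the fact that the derivative is obtained by integrating the influence function against $\mathrm{d}(G-F)$. Under the assumptions of Lemma \ref{GESbound0} the M-functional $T$ is G\^ateaux differentiable, and since $g$ has continuous partial derivatives in both arguments (Condition \ref{testfunctional}) the chain rule gives that $h$ is G\^ateaux differentiable with
$$\IF(z;h,F)=\nabla_1 g(T(F),F)^T\IF(z;T,F)+\nabla_2 g(T(F),F)(z)-\int\nabla_2 g(T(F),F)(y)\,\mathrm{d}F(y),$$
and that $\frac{\mathrm{d}}{\mathrm{d}t}h(F_t)=\int\IF(z;h,F_t)\,\mathrm{d}(G-F)(z)$. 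Hence Lemma \ref{intermediatepoint} applies with $T$ replaced by $h$; the explicit form of $\IF(z;h,F)$ above is not actually needed in what follows.

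Concretely, I would fix $x\in\mathfrak{X}$ and put $F_t=(1-t)F+t\Delta_x$. Taking $G=\Delta_x$ in the $h$-version of Lemma \ref{intermediatepoint},
$$h(F_\rho)-h(F)=\int_0^\rho \int\IF(z;h,F_t)\,\mathrm{d}(\Delta_x-F)(z)\,\mathrm{d}t .$$
Dividing by $\rho$, taking absolute values, and splitting the inner integral into its $\Delta_x$ and $F$ parts gives
$$\rho^{-1}\big|h(F_\rho)-h(F)\big|\le \rho^{-1}\int_0^\rho\big|\IF(x;h,F_t)\big|\,\mathrm{d}t+\rho^{-1}\int_0^\rho\Big|\int\IF(z;h,F_t)\,\mathrm{d}F(z)\Big|\,\mathrm{d}t .$$
For the first term, $\big|\IF(x;h,F_t)\big|\le\gamma(h,F_t)\le\sup_{t\in[0,\rho]}\gamma(h,F_t)$; for the second, by the triangle inequality for integrals, $\big|\int\IF(z;h,F_t)\,\mathrm{d}F(z)\big|\le\int\big|\IF(z;h,F_t)\big|\,\mathrm{d}F(z)\le\gamma(h,F_t)\le\sup_{t\in[0,\rho]}\gamma(h,F_t)$. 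Since each averaged integral $\rho^{-1}\int_0^\rho(\cdots)\,\mathrm{d}t$ is thereby bounded by $\sup_{t\in[0,\rho]}\gamma(h,F_t)$, adding the two bounds yields $\rho^{-1}\big|h(F_\rho)-h(F)\big|\le 2\sup_{t\in[0,\rho]}\gamma(h,F_t)$, and taking the supremum over $x\in\mathfrak{X}$ on the left-hand side gives $\gamma_\rho(h,F)\le 2\sup_{t\in[0,\rho]}\gamma(h,F_t)$, as claimed.

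The argument is essentially routine, and the only mild obstacle is the bookkeeping in the first paragraph: checking that Lemma \ref{intermediatepoint} — stated in the excerpt for an M-functional — is really a statement about arbitrary G\^ateaux-differentiable functionals, so that it applies to $h$. Everything else reduces to the triangle inequality for integrals together with the definitions of $\gamma$ and $\gamma_\rho$.
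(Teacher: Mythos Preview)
Your proof is correct and follows essentially the same approach as the paper, which simply states that Lemma \ref{intermediatepoint} applies to $h(F_\rho)-h(F)$ and then invokes the argument of Lemma \ref{GESbound0}. You have written this out explicitly and, usefully, added the justification that Lemma \ref{intermediatepoint} requires only G\^ateaux differentiability along the segment and therefore transfers from $T$ to $h$ via the chain rule---a point the paper leaves implicit.
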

\begin{proof}
Since Lemma \ref{intermediatepoint} applies to $h(F_\rho)-h(F)$, the same arguments used in the proof of Lemma \ref{GESbound0} show the claimed result.
\end{proof}
\begin{lemma}
\label{qGESbound0'}
Under the assumptions of Lemma \ref{GESbound0'} and $h(F)=g(T(F),F)$, we have that
$$\gamma(h,F_\rho)\leq \gamma(h,F)+O\Big[\rho K\big\{L/b+(C_1 +2C_2K/b )\big\}\Big].$$
\end{lemma}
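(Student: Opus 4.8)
The plan is to follow the proof of Lemma \ref{GESbound0'} almost verbatim, now applied to the composite functional $h(F)=g(T(F),F)$, using Condition \ref{testfunctional} to absorb the extra terms coming from the outer map $g$. By the chain rule for G\^ateaux derivatives,
\[
\IF(x;h,F)=\nabla_1 g(T(F),F)^\top\IF(x;T,F)+\IF_2(x;g,T(F),F),
\]
where $\IF_2(x;g,\theta,F)$ is the partial influence function of $g$ in its distributional slot, evaluated at $(\theta,F)$. First I would record the facts supplied by Condition \ref{testfunctional}: $\|\nabla_1 g\|\le\bar C$; the map $(\theta,F)\mapsto\nabla_1 g(\theta,F)$ is Lipschitz (in $\theta$ and in $d_\infty$ in the second slot) with constant $O(\bar C)$, because $\nabla_{11}g,\nabla_{12}g$ are bounded; $\IF_2(x;g,\theta,F)$ is bounded uniformly in $x$, being a directional derivative of $g$ along $\Delta_x-F$, which has total variation norm at most $2$, and $\nabla_2 g$ is bounded; and, again by boundedness of $\nabla_{21}g,\nabla_{22}g$, the map $(\theta,F)\mapsto\IF_2(x;g,\theta,F)$ is Lipschitz with constant $O(\bar C)$, uniformly in $x$.

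Next I would write
\begin{align*}
\IF(x;h,F_\rho)-\IF(x;h,F)
=&\ \nabla_1 g(T(F_\rho),F_\rho)^\top\big(\IF(x;T,F_\rho)-\IF(x;T,F)\big)\\
&+\big(\nabla_1 g(T(F_\rho),F_\rho)-\nabla_1 g(T(F),F)\big)^\top\IF(x;T,F)\\
&+\big(\IF_2(x;g,T(F_\rho),F_\rho)-\IF_2(x;g,T(F),F)\big)
\end{align*}
and bound the three pieces. The first is at most $\bar C\,\|\IF(x;T,F_\rho)-\IF(x;T,F)\|$, and the estimates \eqref{lem9.1}--\eqref{lem9.5} used in the proof of Lemma \ref{GESbound0'} already control this, uniformly in $x$, by $O\big(\rho K\lambda_{\max}(M_F^{-1})\{2L/b+\lambda_{\max}(M_F^{-1})(C_1+2C_2K/b)\}\big)$. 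For the second and third pieces I would use the Lipschitz properties above together with $\|T(F_\rho)-T(F)\|\le\rho K/b$ (from Lemma \ref{intermediatepoint}, as in \eqref{lem11'.1b}), $d_\infty(F_\rho,F)\le\rho$ (since $F_\rho=(1-\rho)F+\rho\Delta_x$), and $\|\IF(x;T,F)\|\le\gamma(T,F)\le K/b$, which bounds each of them by $O\big(\bar C\rho K/b\big)$.

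Collecting the three contributions, absorbing the fixed constant $\bar C$ and $\lambda_{\max}(M_F^{-1})\le 1/b$ into the $O(\cdot)$ along with the quadratic-in-$\rho$ remainders inherited from Lemma \ref{GESbound0'}, gives $\|\IF(x;h,F_\rho)\|\le\|\IF(x;h,F)\|+O\big[\rho K\{L/b+(C_1+2C_2K/b)\}\big]$, uniformly over $x$; taking the supremum over $x\in\mathfrak X$ yields the claim. The routine estimates are the easy part; the main thing to be careful about is the chain-rule step — verifying that $h$ is actually G\^ateaux differentiable with the stated form (this needs the $C^1$-in-both-arguments part of Condition \ref{testfunctional} plus differentiability of $T$), and checking that $\IF_2$ and its variation carry exactly the claimed dependence on $K$ and $b$ (recall from Lemma \ref{testconditions} that $\bar C$ itself depends on $K$ and $L$), so that no hidden factor beyond those displayed creeps into the final bound.
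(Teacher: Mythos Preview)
Your proposal is correct and follows essentially the same route as the paper: the paper also applies the chain rule to write $\IF(x;h,F)=\nabla_1g(T(F),F)\,\IF(x;T,F)+\nabla_2g(T(F),F)(\Delta_x-F)$, splits $\IF(x;h,F_\rho)-\IF(x;h,F)$ into the ``$\nabla_1$'' and ``$\nabla_2$'' pieces (your three terms are just a slightly different telescoping of its $I_1,I_2$), and controls them using the estimates \eqref{lem9.1}--\eqref{lem9.5} for $\IF(x;T,F_\rho)-\IF(x;T,F)$ together with the boundedness of the second partials in Condition~\ref{testfunctional} in place of your Lipschitz formulation. The only cosmetic difference is that the paper carries explicit integral-form Taylor remainders for $g$ in each argument rather than invoking Lipschitz constants, and it multiplies $\IF(x;T,F_\rho)$ (not $\IF(x;T,F)$) in the cross term---either telescoping works.
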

\begin{proof}
First note that
$$\IF(x;h ,F)=\nabla_1g(T(F),F)\IF(x;T,F)+\nabla_2g(T(F),F)(\Delta_x-F)$$
and
\begin{align}
&\IF(x;h ,F_\rho)-\IF(x;h ,F) \nonumber\\
&= \Big\{\nabla_1g(T(F_\rho),F_\rho)\IF(x;T,F_\rho)-\nabla_1g(T(F),F)\IF(x;T,F)\Big\}\nonumber\\
& ~~~~~ +\Big\{\nabla_2g(T(F_\rho),F_\rho)(\Delta_x-F_\rho)-\nabla_2g(T(F),F)(\Delta_x-F)\Big\} \nonumber\\
&=I_1+I_2.
\label{lem11.0}
\end{align}
We will proceed to bound $|I_1|$ and $|I_2|$ separately since from \eqref{lem11.0} we see that
$$|\IF(x;h ,F_\rho)|\leq |\IF(x;h ,F)|+|I_1|+|I_2|.$$
Let us first focus on $I_2$. Note that
\begin{align}
I_2&=(1-\rho)\nabla_2 g(T(F_\rho),F_\rho)(\Delta_x-F) -\nabla_2 g(T(F),F)(\Delta_x-F)\nonumber \\
&=(1-\rho)\Big\{\nabla_2 g(T(F_\rho),F_\rho)(\Delta_x-F)-\nabla_2 g(T(F),F)(\Delta_x-F)\Big\}-\rho\nabla_2 g(T(F),F)(\Delta_x-F)
\label{I2}
\end{align}
and that $\nabla_2 g(\theta,F)(\Delta_x-F) $ can be viewed as a function $g_2:\Theta\times \mathcal{F}\to \mathbb{R}$. Therefore viewing $g_2$ as a function of the first argument allows us to get a first order Taylor expansion of the form 
$$g_2(\theta,F)=g_2(\theta',F)+\Big[\int_0^1\nabla_1g_2\{\theta'+t(\theta-\theta'),F\}\mathrm{d}t\Big](\theta-\theta')=g_2(\theta',F)+(\nabla_1\bar g_2)(\theta-\theta'),$$
while viewing $g_2$ as function of its second argument leads to 
$$g_2(\theta,F_\rho)=g_2(\theta,F)+\int_0^\rho\frac{\mathrm{d}}{\mathrm{d}t}g_2(\theta,F_t)\mathrm{d}t=g_2(\theta,F)+\int_0^\rho\nabla_2g_2(\theta,F_t)(\Delta_x-F)\mathrm{d}t.$$
Applying consecutively the above expressions in the identity  \eqref{I2} yields
\begin{align}
I_2=&(1-\rho)\Big\{\nabla_2 g(T(F),F_\rho)(\Delta_x-F) -\nabla_2g(T(F),F)(\Delta_x-F)+(\nabla_1\bar{g}_2)\big(T(F_\rho)-T(F)\big)\Big\}
\nonumber \\
& ~~~~~~~~~    -\rho\nabla_2 g(T(F),F)(\Delta_x-F)\nonumber\\
 =&(1-\rho)\Big\{\int_0^\rho\nabla_2g_2(T(F),F)(\Delta_x-F)\mathrm{d}t+(\nabla_1\bar{g}_2)\big(T(F_\rho)-T(F)\big)\Big\}   -\rho\nabla_2 g(T(F),F)(\Delta_x-F)\nonumber \\
 =&(1-\rho)\Big\{\int_0^\rho\nabla_2g_2(T(F),F_t)(\Delta_x-F)\mathrm{d}t+\rho(\nabla_1\bar{g}_2)\IF_\rho(x;T,F)\Big\}   -\rho\nabla_2 g(T(F),F)(\Delta_x-F).
\label{lem11.3}
\end{align}
Since Condition \ref{testfunctional} guarantees that all the first two partial derivatives of $g$ are bounded, from  \eqref{lem11.3} and the triangle inequality we see that 
\begin{equation}
\label{lem11.4}
|I_2|=O\Big[\rho\big\{1+\gamma_\rho(T,F)\big\}\Big]
\end{equation}
Let us now study $I_1$. Note that  $\nabla_1 g(\theta,F)$ can be viewed as a function $g_1:\Theta\times \mathcal{F}\to \mathbb{R}^p$ and admits analogous expansions to the ones considered for $g_2$ in \eqref{lem11.3}. Therefore
\begin{align}
I_1&=\nabla_1g(T(F),F)\Big\{\IF(x,T,F_\rho)-\IF(x;T,F)\Big\} \nonumber\\
&~~~ +\Big\{\nabla_1g(T(F_\rho),F_\rho)-\nabla_1g(T(F),F)\Big\}\IF(x;T,F_\rho) \nonumber \\
&=\nabla_1g(T(F),F)\Big\{\IF(x,T,F_\rho)-\IF(x;T,F)\Big\}\nonumber\\
&~~~ +\Big\{\nabla_1g(T(F),F_\rho)-\nabla_1g(T(F),F)+(\nabla_{2}\bar{g}_1)\big(T(F_\rho)-T(F)\big)\Big\}\IF(x;T,F_\rho) \nonumber\\
& =\nabla_1g(T(F),F)\Big\{\IF(x,T,F_\rho)-\IF(x;T,F)\Big\}\nonumber\\
&~~~ +\bigg\{\int_0^\rho\nabla_{1}g_1(T(F),F_t)(\Delta_x-F)\mathrm{d}t+\rho(\nabla_{2}\bar{g}_1)\IF_\rho(x;T,F)\bigg\}\IF(x;T,F_\rho).
\label{lem11.1}
\end{align}
Using the Cauchy-Schwarz inequality, \eqref{lem11.1} and \eqref{lem9.1}--\eqref{lem9.4}  we see that
\begin{align}
|I_1|&\leq \|\nabla_1g(T(F),F)\|\|\IF(x;T,F_\rho)-\IF(x;T,F)\| \nonumber \\
& ~~~ +\Big\|\int_0^\rho\nabla_{1}g_1(T(F),F_t)(\Delta_x-F)\mathrm{d}t\Big\| \|\IF(x;T,F)\| +\rho\| \nabla_{2}\bar{g}_1\|\|\IF(x;T,F)\|\|\IF_\rho(x;T,F)\| \nonumber \\
& \leq \bar{C} \Big\{\|\IF(x;T,F_\rho)-\IF(x;T,F)\| +\rho \gamma(T,F)+\rho \gamma_\rho(T,F)\gamma(T,F)\Big\}\nonumber \\
&\leq \bar{C} \Big\{\Big\|(M_{F_\rho}^{-1}-M_F^{-1})[\Psi(x,T(F))+\tilde M_{\Delta_x} \{T(F_\rho)-T(F)\}]+M_F^{-1}\tilde M_{\Delta_x}\{T(F_\rho)-T(F)\}\Big\| \nonumber\\
 & ~~~~~~~~~ +\rho \big(1+\gamma_\rho(T,F)\big)\gamma(T,F) \Big\}\nonumber\\
&\leq \bar{C}\bigg[\lambda_{\max}(M_F)\Big\{\frac{\rho(C_1 +2C_2K/b)}{\lambda_{\min}(M_F)^2}+\frac{\rho^2(C_1 +2C_2K/b )^2 \lambda_{\max}^2(M_F^{-1})}{\lambda_{\max}(M_F^{-1})-\rho(C_1 +2C_2K/b )}\Big\}(K+2\rho L K/b)\nonumber \\
&  ~~~ ~~~ +L\lambda_{\max}(M_F^{-1})(2\rho K/b) +\rho L \lambda_{\min}(M_F)^{-1}\gamma_{\rho}(T,F)  +\rho \big(1+\gamma_\rho(T,F)\big)\gamma(T,F) \bigg]\nonumber\\
& \leq \bar{C}\bigg[\rho K \lambda_{\max}(M_F^{-1})\Big\{2L/b+\lambda_{\max}(M_F^{-1})(C_1 +2C_2K/b )\Big\}\nonumber\\
&~~~~~~+\rho^2K\lambda_{\max}^2(M_F^{-1})(C_1 +2C_2K/b )\bigg\{2L/b+\frac{\lambda_{\max}(M_F^{-1})(C_1 +2C_2K/b) }{1-\lambda_{\max}(M_F^{-1})\rho(C_1 +2C_2K/b )}\bigg\}\nonumber \\
& ~~~~~~~~~+2\rho^3b^{-1}K L\frac{\lambda_{\max}^3(M_F^{-1})(C_1 +2C_2K/b )^2}{1-\lambda_{\max}(M_F^{-1})\rho(C_1 +2C_2K/b )}\nonumber\\
& ~~~~~~~~~~~~~~~+\rho \big(1+2\rho K/b\big)\lambda_{\max}(M_F^{-1})K \bigg]
%\nonumber\\
%&\leq O\Big[\rho K \lambda_{\max}(M_F^{-1})\Big\{2L/b+\lambda_{\max}(M_F^{-1})(C_1 +2C_2K/b )\Big\}\Big]
\label{lem11.2}
\end{align}
Using  Lemma \ref{GESbound} to further upper bound \eqref{lem11.4} and \eqref{lem11.2} and  taking the supremum over the left hand side term of the resulting inequalities yields the desired result.
\end{proof}
%
%%%%%%%%%%%%%%%%
%
\begin{lemma}
\label{qGESbound}
Under the assumptions of Lemma \ref{GESbound} we have that
 \begin{equation*}
\gamma_\rho(g ,F) \leq 2\gamma(g ,F)+O\bigg[\rho K \lambda_{\max}(M_F^{-1})\Big\{1+2L/b+\lambda_{\max}(M_F^{-1})(C_1 +2C_2K/b )\Big\}\bigg].
\end{equation*}
\end{lemma}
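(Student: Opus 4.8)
The plan is to combine the two preceding lemmas on the fixed-scale gross-error sensitivity of the composite functional $h(F)=g(T(F),F)$ exactly as in the proof of the estimation counterpart (Lemma \ref{GESbound}). First I would invoke Lemma \ref{qGESbound0} with $F_t=(1-t)F+t\Delta_x$, which gives
$$\gamma_\rho(g,F)\le 2\sup_{t\in[0,\rho]}\gamma(g,F_t).$$
This reduces the problem to bounding $\gamma(g,F_\rho)$ (and, a fortiori, $\gamma(g,F_t)$ for $t\le\rho$, since all the bounds below are monotone increasing in $\rho$). For that step I would apply Lemma \ref{qGESbound0'}, which yields
$$\gamma(g,F_\rho)\le \gamma(g,F)+O\Big[\rho K\big\{L/b+(C_1+2C_2K/b)\big\}\Big].$$

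Substituting the second inequality into the first and taking the supremum over $t\in[0,\rho]$ (each intermediate point satisfies the same hypotheses, with $t$ in place of $\rho$, so the bound only grows as $t\uparrow\rho$), I obtain
$$\gamma_\rho(g,F)\le 2\gamma(g,F)+O\Big[\rho K\big\{L/b+(C_1+2C_2K/b)\big\}\Big].$$
Finally I would repackage the $O(\cdot)$ term into the form claimed in the statement, namely $O\big[\rho K\lambda_{\max}(M_F^{-1})\{1+2L/b+\lambda_{\max}(M_F^{-1})(C_1+2C_2K/b)\}\big]$; since $\lambda_{\max}(M_F^{-1})\ge 1/B\ge$ a positive constant and $\lambda_{\min}(M_F)\ge b$, both forms of the remainder are equivalent up to absorbing the constants $b,B$ into the implied constant in $O(\cdot)$, and the extra factor $\lambda_{\max}(M_F^{-1})$ and the additive $1$ inside the braces only make the right-hand side larger. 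This matches the structure of the analogous Lemma \ref{GESbound} for the estimator, where Lemmas \ref{GESbound0} and \ref{GESbound0'} were composed in precisely the same way.

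I do not expect a genuine obstacle here, since both ingredients have already been proved; the only point requiring mild care is the bookkeeping that turns the remainder from Lemma \ref{qGESbound0'} into the stated form. The one thing to be careful about is that Lemma \ref{qGESbound0'} is stated for $\gamma(h,F_\rho)$ with a single contamination point, whereas after applying Lemma \ref{qGESbound0} we need the supremum over $t\in[0,\rho]$; this is handled by noting that for each such $t$ the distribution $F_t$ plays the role of "$F_\rho$" with parameter $t\le\rho$, and all the bounds in Lemma \ref{qGESbound0'} and the underlying Lemma \ref{GESbound0'} are increasing in the contamination level, so replacing $t$ by $\rho$ throughout is legitimate. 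Hence the proof is a two-line composition plus a remainder-term simplification.

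\begin{proof}
By Lemma \ref{qGESbound0} we have $\gamma_\rho(g,F)\le 2\sup_{t\in[0,\rho]}\gamma(g,F_t)$ where $F_t=(1-t)F+t\Delta_x$. For each $t\in[0,\rho]$, Lemma \ref{qGESbound0'} (applied with $t$ in place of $\rho$, the hypotheses of Lemma \ref{GESbound0'} being monotone in the contamination level) gives
$$\gamma(g,F_t)\le \gamma(g,F)+O\Big[t K\big\{L/b+(C_1+2C_2K/b)\big\}\Big]\le \gamma(g,F)+O\Big[\rho K\big\{L/b+(C_1+2C_2K/b)\big\}\Big].$$
Taking the supremum over $t\in[0,\rho]$ and multiplying by $2$,
$$\gamma_\rho(g,F)\le 2\gamma(g,F)+O\Big[\rho K\big\{L/b+(C_1+2C_2K/b)\big\}\Big].$$
Since $\lambda_{\max}(M_F^{-1})=1/\lambda_{\min}(M_F)\ge 1/B$ and $b\le\lambda_{\min}(M_F)\le B$ by Condition \ref{cond:hessian}, the remainder above is bounded by a constant multiple of
$$\rho K \lambda_{\max}(M_F^{-1})\Big\{1+2L/b+\lambda_{\max}(M_F^{-1})(C_1 +2C_2K/b )\Big\},$$
which yields the claimed inequality.
\end{proof}
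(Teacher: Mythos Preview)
Your proposal is correct and is exactly the paper's approach: the paper's proof reads in its entirety ``The result is immediate from Lemmas \ref{qGESbound0} and \ref{qGESbound0'},'' and you have spelled out precisely that composition, including the monotonicity-in-$t$ bookkeeping and the harmless enlargement of the remainder term to the form displayed in the statement.
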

\begin{proof}
The result is immediate from Lemmas \ref{qGESbound0} and \ref{qGESbound0'}.
\end{proof}
%
%%%%%%%%%%%%%%%%
%
\begin{lemma}
\label{gqGESbound}
Under the assumptions of Lemma \ref{gGESbound} we have that
$$\gamma(g,G)\leq \gamma(g,F)+O\bigg[\rho K \lambda_{\max}(M_F^{-1})\Big\{1+L/b+\lambda_{\max}(M_F^{-1})(C_1 +2C_2K/b )\Big\}\bigg].$$
\end{lemma}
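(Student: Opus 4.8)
The plan is to mirror the proof of Lemma \ref{gGESbound}, but now tracking the extra contributions coming from the outer function $g$, exactly as was done in passing from Lemma \ref{GESbound0'} to Lemma \ref{qGESbound0'}. Concretely, write $h(F)=g(T(F),F)$ and recall the identity
$$\IF(x;h,F)=\nabla_1 g(T(F),F)\,\IF(x;T,F)+\nabla_2 g(T(F),F)(\Delta_x-F).$$
First I would form the difference $\IF(x;h,G)-\IF(x;h,F)$ and split it as $I_1+I_2$, where $I_1$ collects the terms involving $\nabla_1 g$ and the influence function of $T$, and $I_2$ collects the terms involving $\nabla_2 g$. This is the same decomposition used in Lemma \ref{qGESbound0'}, with the infinitesimally-contaminated distribution $F_\rho$ replaced throughout by a general $G$ with $d_\infty(F,G)\le\rho$.

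The key steps are then: (i) bound $\|T(G)-T(F)\|\le \rho K/b$ using Lemma \ref{intermediatepoint} with $\rho=1$ and the bound $\gamma(T,(1-t)F+tG)\le K/b$, exactly as in \eqref{lem11'.1b}; (ii) bound $\|M_G-M_F\|\le\rho(C_1+C_2K/b)$ via Condition \ref{cond:smoothness}, as in \eqref{lem11'.2}, and hence control $\|M_G^{-1}-M_F^{-1}\|$ through the Neumann series \eqref{lem11'.1}; (iii) bound $\|\IF(x;T,G)-\IF(x;T,F)\|$ by reusing the decomposition \eqref{lem11'.0} together with (i)--(ii), which is exactly what Lemma \ref{gGESbound} delivers and gives a term of order $\rho K\lambda_{\max}(M_F^{-1})\{L/b+\lambda_{\max}(M_F^{-1})(C_1+2C_2K/b)\}$; (iv) handle $I_2$ by expanding $\nabla_2 g$ first in its second argument along the path from $F$ to $G$ (integral form) and then in its first argument around $T(F)$, using Condition \ref{testfunctional} to bound the partial derivatives of $g$ by $\bar C$; this produces a term of order $\rho(1+\gamma(T,F))$, i.e. $O(\rho K\lambda_{\max}(M_F^{-1}))$ after invoking $\gamma(T,F)\le\lambda_{\max}(M_F^{-1})K$; (v) handle $I_1$ analogously, using Cauchy--Schwarz, Condition \ref{testfunctional}, step (iii), and the bounds on $\gamma(T,F)$ and $\gamma(T,G)$; this is where the dominant $\lambda_{\max}(M_F^{-1})^2(C_1+2C_2K/b)$ factor arises. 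Finally, collecting (iv) and (v), taking $\sup_x$ of $\|\IF(x;h,G)\|\le\|\IF(x;h,F)\|+|I_1|+|I_2|$, and absorbing all higher powers of $\rho$ into the $O(\cdot)$ term yields
$$\gamma(g,G)\le\gamma(g,F)+O\!\Big[\rho K\lambda_{\max}(M_F^{-1})\big\{1+L/b+\lambda_{\max}(M_F^{-1})(C_1+2C_2K/b)\big\}\Big],$$
which is the claim.

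The main obstacle I anticipate is the bookkeeping in step (iv)/(v): $I_2$ requires two successive Taylor expansions of $g$ (one in each argument) along the interpolation $F_t=(1-t)F+tG$, and one must be careful that the ``general $G$'' version of the arguments in Lemma \ref{qGESbound0'} still goes through — in particular that the directional derivative $\nabla_2 g(\theta,F)(\Delta_x-F)$ expands cleanly when we move the base distribution from $F$ to $G$ rather than to the contamination $F_\rho$. Since $d_\infty(F,G)\le\rho$ plays the same role that $\rho$ (the contamination level) played before, and since $G-F$ is a signed measure of total variation at most $2\rho$, the same estimates apply mutatis mutandis; the subtlety is purely in making sure every appearance of $\Delta_x-F$ in the $F_\rho$ proof is correctly replaced by the appropriate increment $G-F$ or $G-F_t$ and that no term is lost. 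Once that substitution is handled carefully, the rest is a routine application of Lemmas \ref{intermediatepoint}, \ref{gGESbound} and Condition \ref{testfunctional}, entirely parallel to Lemma \ref{qGESbound0'}.
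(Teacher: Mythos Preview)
Your proposal is correct and follows exactly the approach the paper intends: the paper's own proof simply states that ``the proof is similar to that of Lemma \ref{qGESbound0'} and is omitted for the sake of space,'' and your plan of adapting the $I_1+I_2$ decomposition of Lemma \ref{qGESbound0'} by replacing $F_\rho$ with a general $G$ satisfying $d_\infty(F,G)\le\rho$ and invoking the bounds \eqref{lem11'.0}--\eqref{lem11'.3} from Lemma \ref{gGESbound} is precisely that adaptation. The subtlety you flag in steps (iv)--(v) about carefully replacing $\Delta_x-F$ by the appropriate increments is real but routine, and your handling of it is sound.
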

\begin{proof}
The proof is similar that of Lemma \ref{qGESbound0'} and is omitted for the sake of space. 
\end{proof}

\section{variance sensitivity of test statistics}
\label{AppendixCVF}

In this appendix we detail the consequences of estimating the standardizing matrices $M(T,F)$, $U(T,F)$ and $V(T,F)$ on our construction. For this we need the change of variance function as a complementary tool to the influence function for the analysis of the sensitivity of the test functionals.

\subsection*{The change of variance function of M-estimators}

The change of variance function of an M-functional $T$ at the model distribution $F$ is defined as
$$\CVF(x;T,F):=\frac{\partial}{\partial t} V(T,(1-t)F+t\Delta_x)\Big|_{t=0}$$
for all $x$ where this expression exists; see \cite{hampeletal1981} and \cite{hampeletal1986}. It is essentially the influence function of the asymptotic variance functional $V(T,F)$. It  reflects the impact of small amounts of contamination on the variance of the estimator $T(F_n)$ and hence on the length of the confidence intervals.  We reproduce  below the form of the change of variance functions for general M-estimators as derived in \cite{zhelonkin2013}.

For the sake of simplicity, we write $V=V(T,F)$, $\Psi=\Psi(x,T(F))=\begin{pmatrix} \Psi_1 & \Psi_2 &\dots&\Psi_p \end{pmatrix}^T$ and
$$
\frac{\partial\Psi}{\partial \theta}=\begin{pmatrix} \frac{\partial\Psi_1}{\partial \theta_1}& \frac{\partial\Psi_1}{\partial \theta_2} &\dots& \frac{\partial\Psi_1}{\partial \theta_p} \\
 \frac{\partial\Psi_2}{\partial \theta_1}& \frac{\partial\Psi_2}{\partial \theta_2} &\dots& \frac{\partial\Psi_2}{\partial \theta_p} \\
 \vdots & \vdots & \ddots&  \vdots\\
 \frac{\partial\Psi_p}{\partial \theta_1}& \frac{\partial\Psi_p}{\partial \theta_2}&\dots& \frac{\partial\Psi_p}{\partial \theta_p}
\end{pmatrix}
$$
Using this notation, the change of variance function of M-estimators is
\begin{align*}
\CVF(x;T,F)=V-&M^{-1}\Big(\int D\mathrm{d}F+\frac{\partial}{\partial\theta}\Psi\Big)V-V\Big(\int D\mathrm{d}F+\frac{\partial}{\partial\theta}\Psi\Big)M^{-1}\\
&+M^{-1}\Big(\int R\mathrm{d}F+\int R^T\mathrm{d}F+\Psi\Psi^T\Big)M^{-1}
\end{align*}
where
$$D=\bigg\{\Big(\frac{\partial}{\partial\theta}\frac{\partial}{\partial\theta_j}\Psi_k\Big)^T \IF(x;T,F)\bigg\}_{j,k=1}^p$$
and
$$R=\Big(\frac{\partial}{\partial\theta}\Psi\Big)\IF(x;T,F)\Psi^T .$$

\subsection*{Change of variance sensitivity for tests}

The following result shows how the influence function of standardized M-functionals depends on both the influence function and the change of variance function of its corresponding unstandardized M-functional.

\begin{proposition}
\label{CVFtest}
Let $T(F)$ be an M-functional, with associated asymptotic variance matrix $V(T,F)$. Then the influence function of the standardized functional $U(F)=V(T,F)^{-1/2}T(F)$ has the form
\begin{equation}
\label{CVFsensitivity}
\IF(x;U,F)=V(T,F)^{-1/2}\IF(x;T,F)- \frac{1}{2}V(T,F)^{-1/2}\CVF(x;T,F)V(T,F)^{-1}T(F).
\end{equation}
\end{proposition}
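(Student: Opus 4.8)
The plan is to view $U(F)=V(T,F)^{-1/2}T(F)$ as a product of two $F$-dependent objects — the symmetric positive definite matrix $B(F):=V(T,F)^{-1/2}$ and the vector $T(F)$ — and to differentiate along the contamination path $F_t=(1-t)F+t\Delta_x$. Since the influence function is the G\^ateaux derivative $\IF(x;U,F)=\frac{\mathrm d}{\mathrm dt}U(F_t)\big|_{t=0^+}$, the Leibniz rule gives
\[
\IF(x;U,F)=\Big(\tfrac{\mathrm d}{\mathrm dt}B(F_t)\big|_{t=0^+}\Big)T(F)+B(F)\Big(\tfrac{\mathrm d}{\mathrm dt}T(F_t)\big|_{t=0^+}\Big)=\dot B\,T(F)+V(T,F)^{-1/2}\IF(x;T,F).
\]
The second summand is already the first term of \eqref{CVFsensitivity}, so the statement reduces to identifying $\dot B=\frac{\mathrm d}{\mathrm dt}V(T,F_t)^{-1/2}\big|_{t=0^+}$.

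For this, set $g(t):=V(T,F_t)$, so that $g(0)=V(T,F)$ and, directly from the definition of the change-of-variance function, $g'(0)=\CVF(x;T,F)$. Differentiating the matrix identity $B(F_t)^{2}=g(t)^{-1}$ (equivalently, differentiating $g(t)^{-1/2}g(t)^{1/2}=I$ and combining with the standard formula $\frac{\mathrm d}{\mathrm dt}g(t)^{-1}=-g(t)^{-1}g'(t)g(t)^{-1}$), and then solving for $\dot B$, one obtains after a short computation
\[
\dot B=-\tfrac12\,V(T,F)^{-1/2}\,\CVF(x;T,F)\,V(T,F)^{-1}.
\]
Substituting this back into the Leibniz expansion yields exactly \eqref{CVFsensitivity}.

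The only step that is not purely formal is the differentiation of the map $g\mapsto g^{-1/2}$: in the matrix setting $g(t)$ and $g'(t)$ need not commute, so the relation $\dot B\,g^{-1/2}+g^{-1/2}\,\dot B=-g^{-1}g'g^{-1}$ obtained above is a Lyapunov equation for $\dot B$, and one must argue that its solution is the stated closed form — for instance by diagonalising the symmetric positive definite matrix $V(T,F)$ and reducing to the scalar identity $\frac{\mathrm d}{\mathrm dt}g^{-1/2}=-\tfrac12 g^{-3/2}g'=-\tfrac12 g^{-1/2}g'g^{-1}$ eigenvalue by eigenvalue. A secondary, purely regularity, point is the existence of $\CVF(x;T,F)$ itself, i.e.\ the G\^ateaux differentiability of $F\mapsto V(T,F)=\mathbb E_F[\IF(X;T,F)\IF(X;T,F)^T]$; this is guaranteed by the smoothness imposed on $\Psi$ (Condition \ref{cond:boundedpsi} together with the bounded higher-derivative condition invoked in Section \ref{CVF}), which makes $T(F)$ and $M(T,F)$ — and hence, through \eqref{IF}, the integrand defining $V(T,F)$ — sufficiently smooth functions of $F$. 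I expect this matrix square-root differentiation to be the main obstacle; everything else is the product rule together with recognising $\CVF$ as the influence function of the asymptotic-variance functional.
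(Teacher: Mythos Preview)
Your approach is exactly the paper's: apply the chain rule to $U(F_t)=V(T,F_t)^{-1/2}T(F_t)$, identify the two inner derivatives as $\IF(x;T,F)$ and $\CVF(x;T,F)$, and invoke the differential $\mathrm{d}A^{-1/2}(H)=-\tfrac12 A^{-1/2}HA^{-1}$. The paper's proof is a three-line sketch that simply asserts this last identity; you go further by recognising it as the solution of a Lyapunov equation.

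One caution on your proposed resolution: diagonalising $V(T,F)$ does \emph{not} reduce the computation to independent scalar identities unless $\CVF(x;T,F)$ is diagonal in the same basis, i.e.\ unless $V$ and $\CVF$ commute. In the eigenbasis of $V$ with eigenvalues $\lambda_i$, the Lyapunov equation $\dot B\,V^{-1/2}+V^{-1/2}\dot B=-V^{-1}\CVF\,V^{-1}$ gives $\dot B_{ij}=-\CVF_{ij}/(\lambda_i^{1/2}\lambda_j+\lambda_i\lambda_j^{1/2})$, whereas the claimed closed form $-\tfrac12 V^{-1/2}\CVF\,V^{-1}$ gives $-\CVF_{ij}/(2\lambda_i^{1/2}\lambda_j)$; these agree only when $\lambda_i=\lambda_j$. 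So the stated formula is exact only in the commuting case, and your diagonalisation argument does not close the gap you correctly identified. The paper does not address this either --- it treats $\mathrm{d}A^{-1/2}(H)=-\tfrac12 A^{-1/2}HA^{-1}$ as given --- so your write-up is at least as rigorous as the original, but you should not claim the eigenvalue-by-eigenvalue reduction settles the non-commuting case.
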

\begin{proof}
The result follows by applying the chain rule to the derivative of $U(F_t)$ with respect to $t$ with $F_t=(1-t)F+t\Delta_x$ and evaluating the resulting expression at $t=0$. Indeed the derivative of $V(T,F_t)$ is $\CVF(x;T,F)$, the derivative of $T(F_t)$ is $\IF(x;T,F)$ and $\mathrm{d}A^{-1/2}(H)=- \frac{1}{2}A^{-1/2}HA^{-1}$ for some symmetric $p$ dimensional matrix $A$ and $H\in\mathbb{R}^{p\times p}$
\end{proof}

One can use Proposition \ref{CVFtest} to get an upper bound of the gross-error sensitivity of the differentially private Wald test resulting for the construction of Section 4.  Note that if the change of variance function is bounded it suffices to use  the simpler bound based only on the influence function of $T$ as described in the main text. Assuming that $\dot{\Psi}$ and its derivatives are bounded, it suffices to multiply the first term of \eqref{CVFsensitivity} by  $\log(n)$ in order to guarantee a bound on the smooth sensitivity. This can be shown by extending the arguments developed in Appendix \ref{propertiesIF}. The same type of expansions work using the more complicated influence function \eqref{CVFsensitivity} at the expense of more tedious calculations. We could obtain results similar to Proposition \ref{CVFtest} for the standardized functionals used in the score and likelihood ratio tests. However, as long as $\dot{\Psi}$ and its derivatives are bounded, the simple bound discussed in the main paper suffices to yield differential privacy.

\section{Further discussions and  simulations}

\subsection{Competing methods}

Let us begin by making some general remarks regarding  differential privacy in practical settings.  We note that published work in the area usually have numerical illustrations with samples sizes of the order of $\sim 100'000$ for their methods to yield acceptable results; see for example  \citep{lei2011,chaudhurietal2011, sheffet2017,barrientosetal2019} among many others. It transpires from the existing literature that differential privacy is perceived as a very strong requirement that leads to very conservative analysis. As such, it also needs large sample sizes in order to give meaningful statistical results. This has sometimes been mentioned   explicitly in different contexts \citep{machnavajjhalaetal2008, rinaldoetal2012, abadietal2016} and is usually reflected in the very large sample sizes used in examples  or implicitly by assuming that the variables of interest are bounded. The latter is used in the computation of the sensitivity of the statistics being queried. One of the messages of our paper is that if we want to enforce differential privacy constraints on non robust estimators, this will inevitably require us to inject large amounts of noise to the analysis. However, estimators that are robust by construction will require less noise in order to ensure differential privacy. It is precisely because of this that our methods can outperform existing alternatives that rely on truncation strategies or apply bounds that assume that the variables are bounded. 

In the context of linear regression, there are a number of existing methods that can achieve differential privacy. One is tempted to take an off the shelve method that works for general empirical risk minimization problems based on either objective function perturbations or stochastic gradient descent algorithms e.g. \citep,{chaudhurietal2011, bassilyetal2014}.  However such methods typically require some Lipschitz constant that is unknown in practice which makes the tuning of such algorithms tricky. There are a couple of estimation methods tailored specifically for the linear regression framework.  In particular \cite{sheffet2019} uses random projections and compression via the Johnson-Lindenstrauss transform in order to achieve differential privacy. We do not include this estimator in our simulations since the reported results in Appendix E of that paper require very large sample sizes.  We restrict our comparisons to the estimator that  \cite{caietal2019} introduced for the linear regression model as it was shown to be minimax optimal and it exhibited good numerical performance.  
We note that there are less alternatives for hypothesis testing in the linear model context. Only the work of  \cite{sheffet2017} and \cite{barrientosetal2019} seem to directly target this issue. However, in both cases their algorithms require some delicate tuning for the respective random projection/compression step for the former  and for the subsampling and truncation steps for the latter. More importantly, both methods seem to require  sample sizes of the order $\sim 10'000-100'000$  to give satisfactory statistical results.

\subsection{Simulations}

We consider the following six simulation settings for the linear regression model in order to better illustrate the behavior of our method and compare it with the minimax optimal estimator of \cite{caietal2019}. 

\begin{enumerate}[(a)]
\item  The covariates are iid Bernoulli random variables with mean  $\pi=0.15$, the variance of the Gaussian error is $0.25$ and all the slope parameters are set to $\beta_1=\dots=\beta_5=1$. A similar setting was  considered in \cite{caietal2019}.
\item The covariates are iid standard normal, $\beta=(0.5,-0.25,0)^T$ and  $\sigma=1-0.5^2-0.25^5$ as considered in the simulated example of   \cite{sheffet2017}.
\item The same normal linear regression model considered in Section 5.1 in the main document.
\item Same as model (c) but  with heavy tailed errors  generated from a $t$-distribution with 4 degrees of freedom.
\item Same  as model (c) but  with heavy tailed errors  and covariates generated from a $t$-distribution with 4 degrees of freedom.
\item The contaminated linear regression model considered in Section 5.1.
\end{enumerate}

\begin{figure}[h!]
\begin{center}
    \includegraphics[width = 5in, height=5.5in]{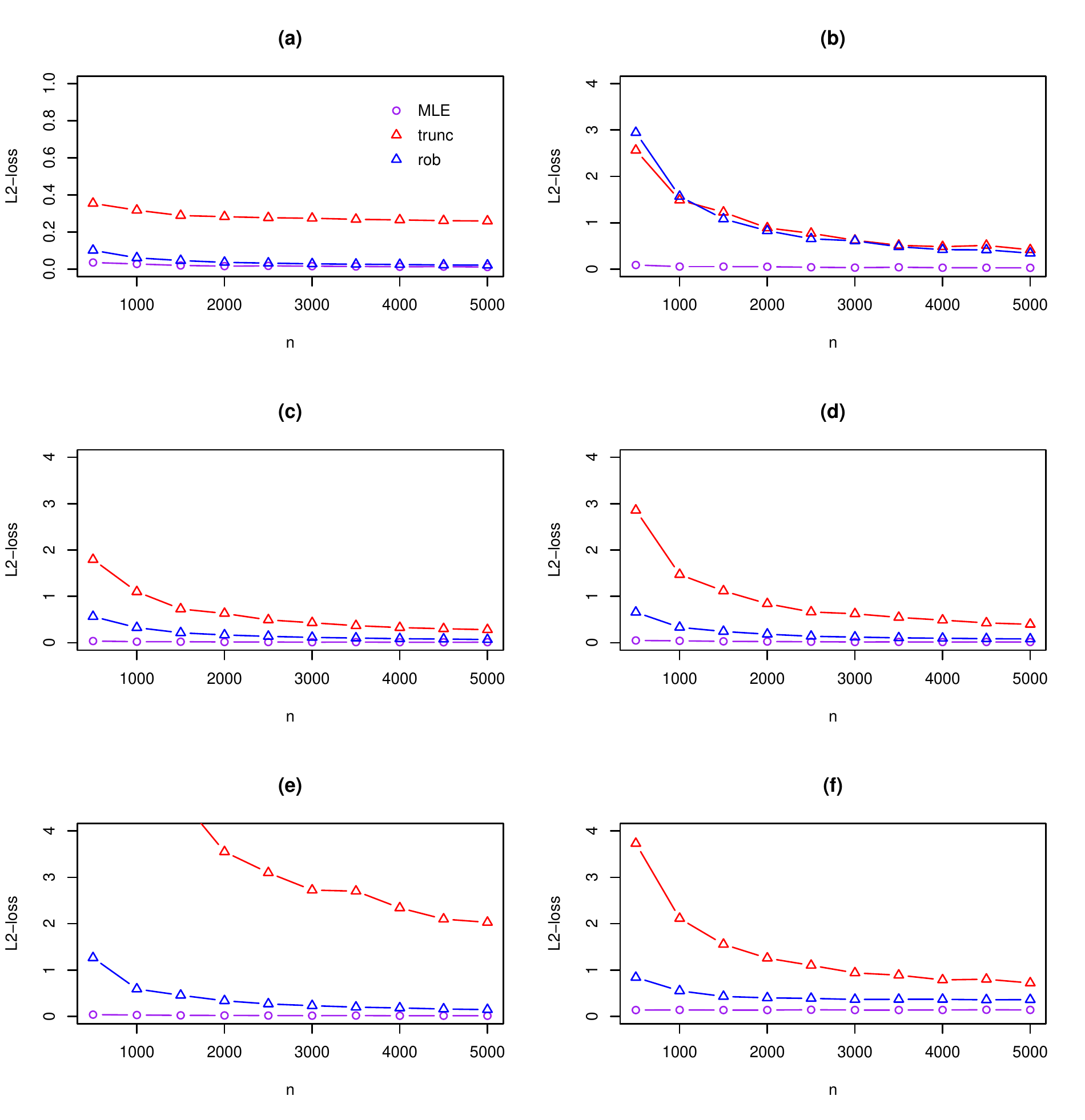}
\end{center}
\vspace{-0.5cm}
\caption{ {\small Figures (a)--(d) show the performance of the MLE, the differentially private truncated  least squares estimator and the differentially private Mallows estimator.}}
\label{Appendix}
\end{figure}%
Figure \ref{Appendix} reports the mean $L_2$  error $\|\hat\beta-\beta\|/\|\beta\|$ obtained over 100 simulations with samples sizes ranging from $n=500$ to $n=5000$ for the six settings described above.  We report the classic non private maximum likelihood estimator, the robust estimator used in Section 5.1 as well as the truncated estimator of \cite{caietal2019}.
The latter is essentially a least squares estimator for truncated responses that is rendered differentially private with the Gaussian mechanism. The level of truncation of the responses diverges as $ K \sigma\sqrt{\log n}$ for some $K>0$ and hence requires knowledge of the noise level $\sigma$.  We note that our proposal  also matches the derived optimal minimax rates of convergence up to a logarithmic term. As our simulations indicate, robust differentially private estimators can significantly outperform the behavior of the truncated least squares estimator of \cite{caietal2019} especially in the presence of heavy tails in the covariates. In this case the truncated estimator can be expected to perform poorly since it  was constructed under the assumption that the covariates are bounded.

\subsection{Assessing technical constants}
The privacy guarantees of Theorem \ref{thm1}  requires that  $n\geq \max\{N_0,N_1,N_2\}$, where  $N_1\geq \frac{1}{C^2m\log(2/\delta)}\big[ 1+\frac{4}{\varepsilon}\{p+2\log(2/\delta)\}\log\big(\frac{\lambda_{\max}(M_{F_n})}{b}\big)\big]^2 $ and $N_2 \geq (C')^2m \log(2/\delta)\big\{2L_n/b+\lambda_{\min}^{-1}(M_{F_n})(C_1+C_2\frac{K_n}{b})\big\}^2 $, for some constants $C$  and $C'$  defined in \eqref{Gamma} and Lemma \ref{GES}. Consequently,  a user that wishes to check these conditions for a given data set and an M-estimator defined by $\Psi$  needs to know the value of the constants $(C_1, C_2,N_0,b,C,C' )$. Let us therefore  focus on the evaluation of these constants. For concreteness and simplicity we focus on the robust regression with Tukey biweight loss function as it admits three continuous derivatives almost everywhere. More precisely we consider 
\[
\rho_c(t)=
\begin{cases}
1-(1-(t/c)^2)^3 & \mbox{ for } |t|\leq c \\
1 & \mbox{ for } |t|>c\end{cases}
\]
and 
\begin{equation}
\label{robLM}
\hat{\beta}=\argmin_{\beta}\sum_{i=1}^n\rho_c\Big(\frac{y_i+x_i^T\beta}{\sigma}\Big)w(x_i),
\end{equation}
where $\sigma$ is some known scale estimate and  we can chose $c=4.685$ for $95$\% efficiency at the normal model when $w(x)=1$.  For this loss function we have
$$ \rho_c'(t)=\frac{6t}{c^2}\Big(1-\frac{t^2}{c^2}\Big)^2I_{|t|\leq c}, ~, \rho_c''(t)=\frac{6}{c^2}\Big(1-\frac{t^2}{c^2}\Big)\Big(1-\frac{3t^2}{c^2}\Big)I_{|t|\leq c}, ~ \rho_c'''(t)=\Big(\frac{12t^3}{c^6}-\frac{36t}{c^4}\Big)I_{|t|\leq c},$$
 Let's first consider $C_1$ and $C_2$. Similar to $K_n$ and $L_n$ in Condition 1, their values are direct consequences of the choice of $\Psi$. Indeed, one can take $C_1=L_n$ since
 $$\|\mathbb E_{F_n-G_n}[\dot\Psi(z,\beta)]\|\leq L_n\|F_n-G_n\|_2\leq L_n d_{\infty}(F_n,G_n).$$
 Furthermore, one can take  $C_2=\max_t\rho'''(t)\lambda_{\max}(\frac{1}{n}X^TX)\|\sup_x w(x)x\|$ since for some intermediate points $\bar\beta^{(i)}$ such that $x_i^T\bar\beta^{(i)}$ lies between $x_i^T\hat\beta$ and $x_i^T\beta$, we have that
 \begin{align*}
 \|\mathbb E_{F_n}[\dot\Psi(x,y,\hat\beta)-\dot\Psi(x,y,\beta)]\|&=\Big\|\frac{1}{n}\sum_{i=1}^nx_ix_i^Tw(x_i)\rho'''_c(y_i-x_i^T\bar\beta^{(i)})x_i^T(\hat\beta-\beta)\Big\| \\
 & \leq \max_t|\rho'''_c(t)|\lambda_{\max}\Big(\frac{1}{n}X^TX\Big)\|\max_{1\leq i\leq n}w(x_i)x_i\|\|\hat\beta-\beta\| %\\
% &\leq  c^{-1} \lambda_{\max}\Big(\frac{1}{n}X^TX\Big)\|\sup_xw(x)x\|\|\hat\beta-\beta\|
 \end{align*}
The minimum sample size $N_0$  defined in Condition \ref{cond:hessian} is related to the unknown minimum eigenvalue $b$ that cannot be computed in general for M-estimators.  A simple remedy of this issue is to incorporate a ridge penalty with a vanishing tuning parameter $\tau_n$ guaranteeing that $\lambda_{\min}(M_{G_n})\geq \tau_n$ for all $n$ and all empirical distributions $G_n$, hence also implying $N_0=n$. More specifically, choosing $\tau_n=1/n$ one would minimize 
\begin{equation*}
\label{ridgeRLM}
\hat{\beta}=\argmin_{\beta}\Big\{\sum_{i=1}^n\rho_c\Big(\frac{y_i+x_i^T\beta}{\sigma}\Big)w(x_i)+\frac{1}{2n}\|\beta\|^2\Big\}.
\end{equation*}
This ridge penalty would guarantee that $b\geq \frac{1}{n}$ and can be used in order to evaluate $N_1$ since the term $\log(n\lambda_{\max}(M_{F_n}))$ will remain small relative to $n$.  This approach would not lead to a meaningful way to evaluate $N_2$ and at first glance seems to suggest the sample size condition $n\geq N_2$ might be hard to meet. We note however while the term $b$ in $N_1$ comes from a worst case consideration over all empirical distributions in \eqref{lem1.1}, the term $b$ in $N_2$ was computed over all distributions $G$ such that $d_\infty(F_n,G)\leq C\sqrt{\frac{m\log(2/\delta)}{n}}$. Consequently one should think of $b$ in $N_2$ as a constant that is not too different from $\lambda_{\min}(M_{F_n})$. A non completely rigorous, but practical solution is to  replace $C_2K_n/b$ by $2C_2 \lambda_{\max}(M_{F_n}^{-1})K_n$ in the inequality defining $N_2$. Indeed, the von Mises expansion \eqref{vonMises} leads to 
$$ \|T(G)-T(F_n)\|\leq \rho \lambda_{\max}(M_{F_n}^{-1})K_n+o(\rho)$$
and 
$$\|M_G-M_{F_n}\|\leq \rho (C_1+C_2 \lambda_{\max}(M_{F_n}^{-1})K_n)+o(\rho) $$
 instead of $\|T(G)-T(F_n)\|\leq \rho K_n/b  $ and $\|M_G-M_{F_n}\|\leq \rho (C_1+C_2K_n/b)$ in \eqref{lem11'.1b} and \eqref{lem11'.2}.

The constant $C>0$ is arbitrary in Lemma \ref{SS}, but it should not be too small in order to meet the requirement  $n\geq N_1$. Similarly,  $C'>0$ should be large in Lemma \ref{GES} but  not too large in practice in order to guarantee that $n\geq N_2$. 
A closer inspection of the arguments used in proof of Lemma \ref{GES} shows that the choice of $C'$ comes from \eqref{lem9.5} and \eqref{lem11'.4}, and could be chosen to be $2C$, and one could take $C=1/\sqrt{m\log(2/\delta)}$ in order to simplify the expressions of $N_1$ and $N_2$. A much more conservative choice of $C$ would be pick a large constant that gives \eqref{Gamma} the interpreation of leading to a usual  Borel-Cantelli neighborhood around $F$. 

We note that Theorem \ref{thm3} also involves some constants $C_U$ and $C_{n,k,U}$. The former is an upper bound on the test functional used and from the arguments of Lemma \ref{testconditions}  we get that it is $2\lambda_{\max}(V(T,F)_{22})^{-1}$  for the Wald functional, $C_U=2\lambda_{\max}((T,F)_{22.1})$ for the Likelihood ratio type functional  and $C_U=\max\{L_n,L_n'\}$ for the Rao functional, where $L_n=\sup_{x}\max_{1\leq j\leq p}\|\frac{\partial^2}{\partial\theta\partial\theta^T}\Psi_j(x;\theta)\|$ and $\Psi_j$ is the $j$th component of $\Psi$. On the other hand $C_{n,k,U}=\frac{nC_n,k\Gamma_U}{\gamma(\alpha,F_n)}$ where $\frac{\gamma(\alpha,F_n)}{n}=2H_k'(n\|U_n\|^2)|U_n^T\IF(x;U,F_n)|$, and  $C_{n,k}$ and $\Gamma_U$ are defined in Lemma \ref{pSS}. 			Note  that $\Gamma_U$ can be evaluated  using \eqref{lem11'.1b}  to get the bound $\|T(G_n)-T(G_n')\|\leq \frac{K_n}{bn}$ for any $G_n,G_n'$ such that $d_H(G_n,G_n')=1$. Therefore $\Gamma_U\leq \lambda_{\max}(V(T,F)_{22})^{-1/2})\frac{K_n}{bn}$ for the Wald functional, $\Gamma_U\leq \lambda_{\max}(M(T,F)_{22.1})^{1/2})\frac{K_n}{bn}$ for the likelihood ratio type functional and $\Gamma_U\leq \lambda_{\max}(U(T,F))\frac{K_nL_n}{bn}$ for the Rao functional.

Finally, we note that in the more realistic case where $\sigma$ is unknown, one could either use a preliminary scale estimate in \eqref{robLM} or estimate it  conconmitantly with $\beta$ by solving  a system of equations similar to the one considered in Example 1. In both cases the formula of the  influence function of  $T(F_n)=\hat\beta$ becomes slightly more involved as they will now depend on the influence function of $S(F_n)=\hat\sigma$ \citep[Ch. 6.4]{huberandronchetti2009}. Consequently the assessments of the constants $(C_1, C_2,N_0,b,C,C' )$ discussed above would also need to be adapted for the estimation of $\sigma$. We leave for future research the important issue of providing a systematic treatment of evaluating such constants for  wider class of M-estimators. Not only would it render the privacy guarantees of our proposals easier to assess, it might also give some further insights into which classes of robust M-estimators could be more convenient for differential privacy.

\end{appendices}

\end{document}